\def\eqref#1{equation~\ref{#1}}
\def\1{\bm{1}}
\DeclareMathAlphabet{\mathsfit}{\encodingdefault}{\sfdefault}{m}{sl}
\SetMathAlphabet{\mathsfit}{bold}{\encodingdefault}{\sfdefault}{bx}{n}
\newtheorem{proposition}{Proposition}
\newtheorem{theorem}{Theorem}
\newtheorem{lemma}{Lemma}
\providecommand{\customgenericname}{}
\newcommand{\newcustomtheorem}[2]{%
  \newenvironment{#1}[1]
  {%
   \renewcommand\customgenericname{#2}%
   \renewcommand\theinnercustomgeneric{##1}%
   \innercustomgeneric
  }
  {\endinnercustomgeneric}
}
\def\expandafter\normalsize\expandafter{%
    \normalsize
    \setlength\abovedisplayskip{1pt}
    \setlength\belowdisplayskip{1pt}
    \setlength\abovedisplayshortskip{1pt}
    \setlength\belowdisplayshortskip{1pt}
}
\title{Provable In-Context Learning of Nonlinear Regression with Transformers}
\author{%
  Hongbo Li
    \\
  Department of ECE \\
  The Ohio State University \\
  \texttt{li.15242@osu.edu}
  \and
  Lingjie Duan \\
  Engineering Systems and Design Pillar\\
  Singapore University of Technology and Design \\
  \texttt{lingjie\_duan@sutd.edu.sg}
  \and
  Yingbin Liang \\
  Department of ECE \\
  The Ohio State University \\
  \texttt{liang.889@osu.edu} 
}
\begin{document}

\maketitle

\begin{abstract}
The transformer architecture has revolutionized machine learning by processing input sequences into outputs. A defining feature is in-context learning (ICL)—the ability to perform unseen tasks from prompts without updating model parameters. Early theoretical work focused on linear tasks, and recent studies have begun exploring nonlinear functions. Yet a rigorous analysis of the training dynamics—how transformers learn such complex tasks—remains elusive. This paper presents the first formal analysis of ICL training dynamics for a broad class of nonlinear regression functions.
We analyze the stage-wise dynamics of attention during training: attention scores between a query token and its target features rise rapidly at first, then gradually converge to one, while attention to irrelevant features decays more slowly and can oscillate. 
Our analysis explicitly characterizes how general non-degenerate $L$-Lipschitz task functions shape attention weights, identifying the Lipschitz constant $L$ as the key factor governing the convergence dynamics.
Leveraging these insights, for two distinct regimes depending on whether $L$ is below or above a threshold, we derive different time bounds to guarantee near-zero prediction error.
Despite convergence time depending on the task, we prove query tokens ultimately focus on highly relevant prompt tokens, demonstrating transformers’ robust ICL capability for unseen functions.
\end{abstract}

\section{Introduction}\label{Section1}


The transformer architecture \cite{vaswani2017attention} has driven transformative advances across a wide spectrum of machine learning domains, including computer vision \cite{bi2021transformer,han2022survey,goldblum2024battle}, natural language processing \cite{kalyan2021ammus,tunstall2022natural}, and speech processing \cite{mehrish2023review,latif2023transformers}. A salient feature of transformers is their ability to perform new tasks without updating parameters, simply by conditioning on a few input-output examples—known as prompts. This capability, referred to as {\em in-context learning (ICL)}, enables models to generalize to unseen tasks purely through inference \cite{brown2020language}.

ICL has attracted growing interest, with numerous empirical studies examining when transformers succeed or fail at in-context generalization \cite{xie2021explanation,garg2022can,von2023transformers,wu2023self,li2024long,agarwal2024many,park2025iclr}. 
Notably, \cite{garg2022can} provided preliminary theoretical evidence that transformers trained on specific function classes (e.g., linear) can accurately infer a query function value from prompts containing variable–function pairs, highlighting transformers’ surprising ability to “learn” within their forward pass and mimic classical function approximation.

Building on this foundation, subsequent works have provided {\em theoretical} understandings of ICL by characterizing the training dynamics of single-layer attention transformers \cite{mahankali2023one,zhang2024trained,huang2024context,collins2024context,yang2024context}. For instance, 
\cite{huang2024context} consider softmax attention to analyze how attention weights evolve during training linear regression problems. 
More recent studies have theoretically shown that transformers can learn specific nonlinear function classes in-context, such as binary classification, low-degree polynomial regression, and Gaussian single-index models \cite{li2024nonlinear,yang2024context,oko2024pretrained,sun2025context}. However, these studies do not provide a full theoretical picture of how the step-by-step learning process is governed by the task itself. To date, a formal characterization of the pre-training dynamics for general nonlinear ICL has been a key open problem.


In this work, we take a step toward understanding the {\em learning dynamics} for ICL on a broad class of nonlinear regression functions. We address two fundamental questions: (1) Which geometric properties of the target function govern the convergence behavior of transformer-based ICL? and (2) Despite nonlinearity and generality, how can a transformer learn in context to achieve low prediction error? 
We answer both by analyzing transformer training under gradient descent. Our main contributions are summarized below.
\begin{list}{$\bullet$}{\topsep=0.ex \leftmargin=0.2in \rightmargin=0.in \itemsep =0.05in}
\item {\em Broad Class of Nonlinear Functions and Flexible Feature Sets:} Our analysis generalizes previous studies in two ways. (i) Unlike prior theoretical works that focus on linear mappings \cite{zhang2024trained, huang2024context}, binary classification \cite{li2024nonlinear}, or low-degree polynomials \cite{sun2025context}, we characterize learning dynamics for a much broader family of non-degenerate $L$-Lipschitz task functions without assuming low complexity. This class is satisfied by almost any nontrivial $L$-Lipschitz function (e.g., scaled cosine, piecewise-polynomial, and small neural-network functions) and excludes only degenerate cases. (ii) Our results also hold for general feature embeddings without the restrictive orthonormality assumptions in prior work \cite{huang2024context, li2024nonlinear, chen2024training, nichani2024transformers}.

\item {\em Phase Transition of Training Dynamics between Flat and Sharp Curvature Regimes:} We discover a phase transition in {\em training dynamics} governed by the Lipschitz constant $L$. When $L$ is below a threshold of order $\Theta\left(\frac{1}{\Delta\delta}\right)$, the {\bf flat curvature regime} yields smaller gradients and permits larger step sizes to converge. When $L$ exceeds the threshold, the {\bf sharp-curvature regime} produces larger gradients requiring smaller steps. The two regimes exhibit distinct convergence behaviors: although the flat regime may converge faster at high accuracy, sufficiently large $L$ enhances feature separability, enabling accelerated training in the sharp regime.


\item {\em Convergence Guarantee for ICL with Nonlinear Regression Functions:} We provide formal convergence guarantees for a one-layer softmax attention model learning nonlinear regression functions. We prove that gradient descent achieves near-zero training loss in polynomial time across both flat and sharp $L$-regimes. 
We also characterize the two-phase training dynamics: an early phase in which attention scores between query tokens and target features rise rapidly, and a later phase in which these scores converge to one while attention to irrelevant features decays more slowly with oscillations. At convergence, query tokens consistently attend to highly relevant prompt tokens, enabling accurate predictions for previously unseen functions and demonstrating the ICL capability of transformers.


\item {\em Novel Analysis Techniques:} We develop new proof tools that explicitly connect the curvature of nonlinear task functions to the evolution of attention weights. In particular, we first decompose the prediction loss to explicitly relate it to attention weights and cross-feature gaps under nonlinear functions.
We further show that the flat and sharp curvature regimes of the parameter $L$ lead to distinct gradient magnitudes, which in turn drive different convergence rates and shape the overall training dynamics. 
The impact of function curvature on the magnitude and stability of attention updates is, to our knowledge, not addressed in previous ICL literature \cite{huang2024context,oko2024pretrained,cheng2024transformers}.

\end{list}


\section{Related Works}

\textbf{In-context learning.}
In-context learning (ICL) has emerged as a fundamental capability of transformer models, enabling dynamic task adaptation without parameter updates. The research community has approached ICL from several distinct yet complementary perspectives.

The Bayesian inference perspective, pioneered by \cite{xie2021explanation} and further developed by \cite{zhang2023and,wang2023large,falck2024context}, establishes a theoretical framework linking prompting strategies to probabilistic reasoning. This line of work interprets ICL as a form of implicit Bayesian model averaging, where transformers effectively perform approximated inference conditioned on the provided context.
Another line of ICL work focuses on Markov chains to study the behavior of induction heads for transformers, which are designed to copy or compare tokens that follow previous occurrences \cite{nichani2024transformers,bietti2023birth,edelman2024evolution,rajaraman2024transformers}. 
The central focus is to understand how transformers recover latent sequence structures \cite{nichani2024transformers} and transition rules \cite{ren2024towards,li2023transformers,makkuva2024local}, using token-level recurrence and dynamics. 

In contrast, the function learning perspective initiated by \cite{garg2022can} demonstrated transformers' remarkable ability to learn and interpolate simple function classes (particularly linear models) directly from context examples. This work sparked significant interest in understanding the mechanistic underpinnings of ICL, leading to important discoveries by \cite{von2023transformers} and \cite{dai2022can}, who revealed deep connections between attention mechanisms and gradient-based optimization dynamics.
Recent advances have substantially expanded our understanding of ICL mechanisms:
\cite{akyurek2023learning} provided a rigorous analysis of linear regression tasks, showing that trained transformers can implement both ridge regression gradient descent and exact least-squares solutions. \cite{bai2023transformers} established comprehensive theoretical results encompassing expressive power, prediction capabilities, and sample complexity, while proposing general mechanisms for algorithmic selection. \cite{cheng2024transformers} interprets transformers as implementing functional gradient descent for nonlinear regression, but their analysis primarily focuses on the representational capacity and functional viewpoint of the learned predictor, without analyzing the ICL pre-training dynamics.

\textbf{Theoretical analysis of ICL learning dynamics.} 
Recent theoretical work has made significant progress in understanding the pre-training dynamics of ICL in transformers, though important limitations remain~\cite{huang2024context,li2024nonlinear,collins2024context,yang2024context,sun2025context,lu2024asymptotic,edelman2024evolution,lin2024dual,jeon2024information,park2025iclr}. The foundational work by \cite{huang2024context} established the first rigorous analysis of training dynamics for softmax attention in ICL settings, focusing on a single-head attention layer learning linear regression tasks. Their key theoretical result demonstrates that prompt tokens with features identical to the query token develop dominating attention weights during training. However, this analysis relies critically on strong assumptions about pairwise orthonormality of feature vectors, limiting its applicability to more general settings.

Subsequent work extended these results to classification tasks~\cite{li2024nonlinear}, dual-head settings~\cite{lin2024dual}, and structured or hierarchical functions~\cite{yang2024context,sun2025context}. Some studies explored the implicit bias of gradient descent and attention-based generalization~\cite{collins2024context,lu2024asymptotic}, while others examined information-theoretic and dynamical aspects of ICL~\cite{edelman2024evolution,jeon2024information}.
Despite these advances, most analyses still rely on restrictive assumptions, such as orthonormality features, fixed positional encoding, or carefully structured input distributions, limiting their ability to explain ICL under general tasks and practical learning conditions.

\textbf{Notations.} In this paper, for a vector $\bm{v}$, we let $\|\bm{v}\|_2$ denote its $\ell$-$2$ norm. 
For some positive constant $C_1$ and $C_2$, we define $x=\Omega(y)$ if $x>C_2|y|$, $x=\Theta(y)$ if $C_1|y|<x<C_2|y|$, and $x=\mathcal{O}(y)$ if $x<c_1|y|$. 
We also denote by $x=o(y)$ if $x/y\rightarrow 0$.
We use $\mathrm{poly}(C)$ to denote large constant degree polynomials of $C$.
For a matrix $A$, we use $A_i$ to denote the $i$-th column of $A$, and $A_{i:j}$ to represent the collection of columns from the $i$-th to the $j$-th column (inclusive).

\section{System Model}\label{section3}
In this section, we formulate our system model, including the problem setup for in-context learning, the transformer architecture, and the associated training process.

\subsection{In-Context Learning Problem Setup}\label{section3-1}
We consider a standard in-context learning (ICL) framework commonly used in prior studies \cite{garg2022can,huang2024context,yang2024context}. The objective is to train a transformer model that can perform ICL over a designated class of functions $\mathcal{F}$, where each function $f\in \mathcal{F}$ corresponds to one task context. Here, we focus on {\em nonlinear} function classes further elaborated below.

Given each task (i.e., given one function $f$ randomly sampled from $\mathcal{F}$), a prompt with a sequence of $N$ input-response pairs $(x_i,y_i)$ as well as a query input $x_{\mathrm{query}}$ are sampled, where $x_i\in \mathcal{X} \subseteq \mathbb{R}^d$, and $y_i = f(x_i)$. Let the input matrix $X=\begin{pmatrix}
        x_1 & x_2 &\cdots &x_N \\
    \end{pmatrix}\in \mathbb{R}^{d\times N}$ and the response vector $\mathbf{y}=\begin{pmatrix}
        y_1 & y_2 & \cdots &y_N
    \end{pmatrix}\in \mathbb{R}^{1\times N}$.
We adopt the following standard prompt embedding \cite{garg2022can,huang2024context,yang2024context}:
\begin{align}\label{eq:prompt}
    P=\begin{pmatrix}
        x_1 & x_2 &\cdots &x_N & x_{\mathrm{query}}\\
        y_1 & y_2 & \cdots &y_N & 0
    \end{pmatrix}=\begin{pmatrix}
        X & x_{\mathrm{query}}\\
        \mathbf{y} & 0
    \end{pmatrix}\in \mathbb{R}^{(d+1)\times (N+1)}.
\end{align}

\textbf{Non-degenerate $L$-Lipschitz Regression Functions.} 
In this work, we focus on regression tasks, where each task is associated with a regression function drawn from the following set
\begin{align}\label{task_distribution}
{
\mathcal{F}= \left\{
f :\; 
\begin{array}{l}
|f(x) - f(x')| \leq L\|x-x'\|,\quad \forall \|x-x'\|=\Theta (\delta_0), \\[0.7em]
\forall v_k\in\mathbb{V}, \exists\, v_{k'} \in \mathbb{V},\; k'\neq k,\; \text{such that } |f(v_k) - f(v_{k'})| = \Theta(L)\cdot\|v_k - v_{k'}\|
\end{array}
\right\},}
\end{align}
where $L>0$ and $\delta_0=\mathcal{O}(1)$. In particular, the functions in the class are said to satisfy \textbf{non-degenerate $L$-Lipschitz} condition,
which imposes two natural requirements on the function class.
First, the standard global $L$-Lipschitz condition ensures that the function $f$ does not change too rapidly, which is common in the literature and includes a wide class of linear and nonlinear functions. 
Second, the separation requirement guarantees that for any feature $v_k$ in the set $\mathbb{V}$, there exists at least one other feature $v_{k'}$ such that the function difference between them achieves the order of variation defined by the Lipschitz constant. 
This ensures sufficient distinguishability of features by the function class for guaranteed learnability.
{This mild separation assumption is satisfied by almost any nontrivial $L$-Lipschitz function (e.g., scaled cosine, piecewise-polynomial, and small neural-network functions) and excludes only degenerate cases. Together, these two conditions ensure that the function class is sufficiently rich for ICL while avoiding unlearnable or trivial scenarios.

For each prompt $P$, the task-specific function $f(x)$ is independently drawn based on a task distribution $\mathcal{D}_{f}$, as long as $f(x)$ satisfies the property for the same $L$ and $\delta_0$ in \cref{task_distribution}.

\textbf{Feature Embeddings.} Let $\mathbb{V}:=\big\{v_k\in\mathbb{R}^d\big| k=1,\cdots, K\big\}$ be the feature embeddings of tokens. For any $k\neq k'$, we assume a separation of $\|v_k-v_{k'}\|=\Theta(\Delta)$, where $\Delta= \Theta(1)$.
Each data sample $x$ is modeled as a noisy perturbation of one of the vectors in $\mathbb{V}$.
This assumption lets us control the separation $\Delta$ precisely, simplifying the analysis while retaining the essential geometry of the problem.
Such a condition can be satisfied by various feature learning techniques to avoid feature collapse, e.g., disentangled representation learning~\cite{wang2022disentangled,wang2024disentangled,higgins2018towards}. 
We note that such a condition substantially generalizes the orthonormality assumption taken by the previous study~\cite{huang2024context,li2024nonlinear,chen2024training,nichani2024transformers}.
The prompt is sampled as follows. For a randomly chosen $v_k$, we assume $x$ satisfies $\|x-v_k\|=O(\epsilon_x)$ with probability $p_k$, where $\epsilon_x=o(1)$ and $p_k=\Theta(\frac{1}{K})$. For analytical simplicity, we assume $x = v_k$ whenever this proximity condition holds. In our experiments in \Cref{section6}, we further verify that our training dynamic analysis remains valid when tokens are drawn from general continuous distributions.

\subsection{One-Layer Transformer}
In this work, we adopt a one-layer transformer model for solving the ICL problem, which is commonly used in the existing theoretical ICL literature (e.g., \cite{huang2024context,li2024nonlinear,yang2024context,sun2025context}). 
A self-attention transformer with width $d_e$ consists of a key matrix $W^{K}\in\mathbb{R}^{d_e\times d_e}$, a query matrix $W^Q\in\mathbb{R}^{d_e\times d_e}$, and a value matrix $W^V\in\mathbb{R}^{d_e\times d_e}$. For a given prompt $P$ of length $N$ in \cref{eq:prompt}, 
the self-attention layer outputs:
\begin{align}
    F(P;W^{K},W^Q, W^V)=W^VP\times \mathrm{softmax}\Big((W^{K}P)^\top W^QP\Big).\label{self_attention}
\end{align}
where the $\mathrm{softmax}(\cdot)$ function is applied column-wisely, i.e., for a vector input $z$, the $i$-th entry of $\mathrm{softmax}(z)$ is given by $\mathrm{softmax}(z_i)=\frac{\exp{(z_i)}}{\sum_{j}\exp{(z_j)}}$.

%
%
%
We further take the following re-parameterization, commonly adopted by the recent theoretical studies of transformers (e.g. \cite{zhang2024trained,huang2024context,yang2024context,sun2025context}), which combines the query and key matrices into a single matrix $W^{KQ}\in\mathbb{R}^{(d+1)\times (d+1)}$, and further specify the weight matrices as follows:
\begin{align*}
   W^V=\begin{pmatrix}
        0_{d\times d} &0_d\\
       0_d^\top &1
    \end{pmatrix},\quad
   W^{KQ}=\begin{pmatrix}
        Q & 0_d\\
        0_d^\top & 0
    \end{pmatrix},
\end{align*}
where $Q\in\mathbb{R}^{d\times d}$ is the trainable weight matrix.
These simplifications, while not capturing the full complexity of deep, multi-head models, are standard in the theoretical literature and serve two crucial purposes. First, they allow for a tractable analysis that isolates the core dynamics of the softmax attention mechanism, which is central to ICL. Second, this setup follows a line of foundational work that has successfully used similar models to derive key insights into ICL for linear regression. By building on this framework, we can directly investigate the impact of nonlinearity. We acknowledge that extending our analysis to multi-layer and multi-head settings is an important avenue for future work.
Such structured matrices separate out the impact of inputs and responses and have been justified to achieve the global or nearly global optimum for both linear and softmax attention models in \cite{zhang2024trained} and \cite{huang2024context}.
Then the self-attention mapping becomes
\begin{align}
    F(P;Q)=\textbf{y}\cdot \mathrm{softmax}\big(X^{\top}Q\bar{X}\big),\label{self_attention2}
\end{align}
where we further let $\bar{X}=\begin{pmatrix}
        x_1 & x_2 &\cdots &x_N & x_{\mathrm{query}}
    \end{pmatrix}\in \mathbb{R}^{d\times N+1}$ and $\textbf{y}=\begin{pmatrix}
        y_1 & y_2 &\cdots &y_N
    \end{pmatrix}$$\in \mathbb{R}^{d\times N}$.
The prediction $\hat{y}_{\mathrm{query}}$ corresponding to $x_{\mathrm{query}}$ is given by the last entry of $F(P;Q)_{N+1}$, i.e., $\hat{y}_{\mathrm{query}}=F(P;Q)_{N+1}$.
To train the attention model on the ICL problem introduced in Section~\ref{section3-1}, we minimize the following squared loss between the predicted and true responses:
\begin{align}
    \textstyle\mathcal{L}(P;Q)=\frac{1}{2}\mathbb{E}\Big[\big(F(P;Q)_{N+1} -f(x_{\mathrm{query}})\big)^2\Big],\label{prediction_loss}
\end{align}
where the expectation is taken over the randomly sampled prompt $\{x_i\}_{i=1}^N\cup \{x_{\mathrm{query}}\}$ and randomly sampled function $f\in \mathcal F$ that determines the corresponding ground-truth responses. 

We optimize this loss via gradient descent (GD). Let $\mathrm{vec}(Q)$ denote the vector that stacks all entries of $Q$. At $t=0$, we initialize $\mathrm{vec}(Q)^{(0)}$ as the zero matrix $0_{d^2}$. The parameter is updated as follows:
\begin{align}
    \mathrm{vec}(Q)^{(t+1)}=\mathrm{vec}(Q)^{(t)}-\eta\nabla_{\mathrm{vec}(Q)} \mathcal{L}\big(P;\mathrm{vec}(Q)^{(t)}\big).\label{update_theta}
\end{align}
where $\eta>0$ is the learning rate.
Based on this model setup and training procedure, we proceed to present our main theoretical results concerning ICL under nonlinear regression tasks.

\section{Main Results of ICL Convergence}\label{section4}
Our analysis proceeds by decomposing the loss function into interpretable quantities that directly reflect cluster separation and the Lipschitz constant $L$ in \cref{task_distribution}. 
Interestingly, we observe that the {\em flat} and {\em sharp} regimes of $L$ give rise to distinct convergence dynamics, with a threshold transition separating the two regimes based on the order of magnitude of $L$.
In the flat regime (small $L$), the gradient on $Q$ matrices remains small, leading to slow but steady concentration of attention weights. In the sharp regime (large $L$), we show a rapid growth phase where the query–key inner products amplify differences between clusters before settling into a slow fine-tuning phase. 
For the two regimes, we provide explicit convergence-time bounds and characterize the phase transition. Compared with prior analyses restricted to linear tasks or orthogonal features, our framework extends to general nonlinear Lipschitz tasks and explains qualitatively different dynamics observed in practice.

Recall from Section~\ref{section3-1} that each token $x_i$ corresponds to a noisy version of a feature $v_k \in \mathbb{V}$ with probability $p_k = \Theta\left(\frac{1}{K}\right)$ for any $k \in [K]$.
Let $P_{1:N}$ denote the collection of input tokens in $P$, i.e., $\{x_i\}_{i=1}^N$, and denote $\mathcal{V}_k\subset [N]$ as the index set for
input tokens, such that $x_i=v_k$ for $i\in\mathcal{V}_k$.
We define the following concentration set of token sequences where each feature appears with approximately the expected frequency:
\begin{align}\label{|V_k|}
\textstyle    \mathcal{E}^*:=\Big\{P_{1:N}:|\mathcal{V}_k|\in\big[(p_k-\delta)N,(p_k+\delta)N\big]\mathrm{ for }\ k\in[K]\Big\},
\end{align}
where $\delta\geq \sqrt{\frac{20K}{N}}$. 
Then, for any $0<\epsilon<1$, suppose $N\geq \Theta(K^3)$ and $K\geq \Theta(\frac{1}{\epsilon})$. 
For any $t\in[T]$, we have the concentration probability satisfies $\mathbb{P}(P_{1:N}\in\mathcal{E}^*)\geq 1-3\exp{\left(-\frac{\delta^2N}{25}\right)}$.
This implies that, with high probability, each feature class $k \in [K]$ is approximately equally represented in the prompt $P$, ensuring a balanced token distribution. 
Such balance is crucial for the convergence of ICL, as it allows the attention to learn effectively from all feature types without introducing bias.

Given balanced feature inputs, we now quantify how much the query token $x_{\mathrm{query}}$ attends to specific input tokens or feature classes.
We define the attention score for a query token to attend to the $i$-th token in the prompt as $\mathrm{attn}^{(t)}_i:=\mathrm{softmax}(x_i^\top Q^{(t)}x_{\mathrm{query}})$ and the total attention paid to all tokens with feature $v_k$ as $\mathrm{Attn}^{(t)}_k:=\sum_{i: x_i=v_k}\mathrm{attn}^{(t)}_i$. 
With this notation, the transformer's output at $t$ is 
\begin{align}
\textstyle    \hat{y}_{\mathrm{query}}=\sum_{i\in [N]}\mathrm{attn}^{(t)}_i y_i=\sum_{k\in[K]} \mathrm{Attn}^{(t)}_kf(v_k).\label{hat_y_attn}
\end{align}

Building on the expression in \cref{hat_y_attn}, we characterize how the attention scores influence the prediction loss defined in \cref{prediction_loss} in the following lemma. 
\begin{lemma}\label{lemma:loss}
Given constants $L,\Delta>0$, the prediction loss in \cref{prediction_loss} can be expressed as:
\begin{align}\label{loss_attn}
\textstyle    \mathcal{L}(P;Q)=\frac{1}{2}\sum_{k=1}^K\mathbb{E}\left[\mathds{1}\{x_{\mathrm{query}}=v_k\}\Big(1-\mathrm{Attn}_k^{(t)}\Big)^2\cdot \mathcal{O}(L^2\Delta^2)\right],\ \forall t\in[T],
\end{align}
where $\mathds{1}\{x_{\mathrm{query}}=v_k\}=1$ is the indicator function that equals 1 if $x_{\mathrm{query}} = v_k$ and 0 otherwise.
\end{lemma}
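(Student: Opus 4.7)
The plan is to start from the definition \eqref{prediction_loss} of the prediction loss, expand $\hat{y}_{\mathrm{query}}$ via \eqref{hat_y_attn}, and exploit the fact that the attention weights form a probability distribution to isolate the contribution of ``wrong'' features. For any realization $x_{\mathrm{query}}=v_k$, I would use $\sum_{k'=1}^K\mathrm{Attn}_{k'}^{(t)}=1$ to rewrite
\begin{align*}
\hat{y}_{\mathrm{query}}-f(x_{\mathrm{query}})
=\sum_{k'=1}^K\mathrm{Attn}_{k'}^{(t)}\bigl(f(v_{k'})-f(v_k)\bigr)
=\sum_{k'\neq k}\mathrm{Attn}_{k'}^{(t)}\bigl(f(v_{k'})-f(v_k)\bigr).
\end{align*}

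Next, I would invoke the $L$-Lipschitz condition in \eqref{task_distribution}. Because every pair satisfies $\|v_k-v_{k'}\|=\Theta(\Delta)$, which lies in the $\Theta(\delta_0)$ range on which the Lipschitz bound is in force, one obtains $|f(v_{k'})-f(v_k)|\leq L\cdot\mathcal{O}(\Delta)$ uniformly in $k,k'$. Applying the triangle inequality together with $\sum_{k'\neq k}\mathrm{Attn}_{k'}^{(t)}=1-\mathrm{Attn}_{k}^{(t)}$ then gives
\begin{align*}
\bigl|\hat{y}_{\mathrm{query}}-f(v_k)\bigr|\leq L\cdot\mathcal{O}(\Delta)\cdot\bigl(1-\mathrm{Attn}_k^{(t)}\bigr),
\end{align*}
and squaring produces the desired $\mathcal{O}(L^2\Delta^2)\cdot(1-\mathrm{Attn}_k^{(t)})^2$ factor. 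A tighter route via Cauchy--Schwarz, namely $\bigl(\sum_{k'\neq k}\mathrm{Attn}_{k'} a_{k'}\bigr)^2\leq (1-\mathrm{Attn}_k)\sum_{k'\neq k}\mathrm{Attn}_{k'} a_{k'}^2$, yields the same order and may be preferable if sharper constants are ever needed downstream.

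Finally, I would complete the proof by decomposing the outer expectation over the random draw of $x_{\mathrm{query}}$ via the law of total probability,
\begin{align*}
\mathbb{E}\bigl[(\hat y_{\mathrm{query}}-f(x_{\mathrm{query}}))^2\bigr]
=\sum_{k=1}^K\mathbb{E}\bigl[\mathds{1}\{x_{\mathrm{query}}=v_k\}\,(\hat y_{\mathrm{query}}-f(v_k))^2\bigr],
\end{align*}
and substituting the squared bound above. The identity holds for every $t\in[T]$ because the derivation uses only $\sum_{k'}\mathrm{Attn}_{k'}^{(t)}=1$ and the intrinsic Lipschitz property of $f$, neither of which depends on the particular iterate $Q^{(t)}$. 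Rather than a substantial obstacle, the main subtlety is verifying that the scale $\|v_k-v_{k'}\|=\Theta(\Delta)$ indeed falls within the Lipschitz range $\Theta(\delta_0)$; once the matching of orders (both $\Theta(1)$ under the paper's assumptions) is noted, and the proximity assumption $x_i=v_k$ from Section~\ref{section3-1} is used to express every prompt token exactly as a feature, the remaining calculation is routine.
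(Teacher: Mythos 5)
Your proposal is correct and follows essentially the same route as the paper's proof: expand $\hat{y}_{\mathrm{query}}$ via \eqref{hat_y_attn}, use $\sum_{k'}\mathrm{Attn}_{k'}^{(t)}=1$ to write the error as $\sum_{k'\neq k}\mathrm{Attn}_{k'}^{(t)}(f(v_{k'})-f(v_k))$, bound each difference by $\mathcal{O}(L\Delta)$ using the Lipschitz condition at the feature separation scale, and square to obtain the $(1-\mathrm{Attn}_k^{(t)})^2\cdot\mathcal{O}(L^2\Delta^2)$ factor inside the expectation decomposed over $\{x_{\mathrm{query}}=v_k\}$. Your explicit check that $\|v_k-v_{k'}\|=\Theta(\Delta)$ falls within the $\Theta(\delta_0)$ Lipschitz range is a small point of care the paper leaves implicit, but it does not change the argument.
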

The proof of \cref{lemma:loss} is given in Appendix~\ref{proof_lemma1}. This expression reveals that loss $\mathcal{L}(P;Q)$ depends on the Lipschitz constant $L$, the feature gap $\Delta$, and the attention score $\mathrm{Attn}_k^{(t)}$ associated with the true feature of the query token.
The dependence on $L$, which captures the function's curvature, distinguishes our analysis from prior theoretical work on ICL (e.g., \cite{huang2024context,oko2024pretrained,sun2025context}).
For a given feature gap $\Delta$, different function Lipschitz constants $L$ can lead to distinct convergence behaviors of ICL.
In the following theorem, we first provide the $\epsilon^2$-convergence of $\mathcal{L}(P;Q)$ in the flat $L$-regime, where $L$ is below a certain threshold.

\begin{theorem}[Flat $L$-regime]\label{thm:convergence_case1}
Suppose the function class in \cref{task_distribution} satisfies $L \leq \Theta\left(\frac{1}{\Delta \delta}\right)$.
Then, for any $0 < \epsilon < 1$ and under $N \geq \Theta(K^3)$ and $K\geq \Theta(\frac{1}{\epsilon})$, with at most $T_f^*=\Theta(\frac{K\log(K\epsilon^{-1})}{\eta\delta^2 L^2\Delta^2})$ iterations, we have $\mathcal{L}(P;Q)\leq\mathcal{O}(\epsilon^2)$.
\end{theorem}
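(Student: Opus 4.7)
The plan is to reduce convergence of $\mathcal{L}(P;Q)$ to the convergence of the attention mass $\mathrm{Attn}_k^{(t)}$ to one. By \cref{lemma:loss}, a sufficient condition for $\mathcal{L}(P;Q) \leq \mathcal{O}(\epsilon^2)$ is $(1-\mathrm{Attn}_k^{(t)}) \leq \mathcal{O}\!\left(\tfrac{\epsilon}{L\Delta\sqrt{K}}\right)$ whenever $x_{\mathrm{query}}=v_k$, for each $k\in[K]$, on the high-probability event $\mathcal{E}^*$. So the goal is to track how fast $\mathrm{Attn}_k^{(t)}$ approaches $1$ along the gradient descent trajectory in \eqref{update_theta}, and then invert the rate to obtain the iteration count $T_f^*$.

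First, I would compute $\nabla_Q\mathcal{L}$ explicitly. Using $\hat{y}_{\mathrm{query}}-f(x_{\mathrm{query}})=\sum_k \mathrm{Attn}_k^{(t)}\!\big(f(v_k)-f(x_{\mathrm{query}})\big)$ and the softmax Jacobian, the gradient becomes a weighted sum of rank-one matrices $x_i x_{\mathrm{query}}^\top$, with weights controlled by the product of the prediction residual and the centered attention $\mathrm{attn}_i^{(t)}-\sum_j \mathrm{attn}_j^{(t)}\mathds{1}\{x_j=x_i\}$. I would then restrict to the concentration set $\mathcal{E}^*$ from \eqref{|V_k|} so that $|\mathcal{V}_k|/N$ is within $\delta$ of $p_k$, and introduce the scalar quantities $q^{(t)}_{k,k'} := v_k^\top Q^{(t)} v_{k'}$, which fully determine the attention pattern via softmax. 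The evolution of $q^{(t)}_{k,k'}$ is what I would track: the signal to grow is $s_k^{(t)} := q^{(t)}_{k,k} - \max_{k'\neq k} q^{(t)}_{k',k}$.

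The core argument is a two-phase analysis of $s_k^{(t)}$, matching the stage-wise behavior highlighted in the introduction. In Phase 1 (early), we start from $Q^{(0)}=0$, so all attention scores are $\Theta(1/N)$ and $\mathrm{Attn}_k^{(t)}=\Theta(p_k)$. The residual in the gradient is then of order $L\Delta$, so each coordinate of the gradient has magnitude $\Theta(L^2\Delta^2/K)$ (the factor $1/K$ coming from $p_k=\Theta(1/K)$), driving a near-linear growth of $s_k^{(t)}$ at rate $\Theta(\eta\delta^2 L^2\Delta^2/K)$. Phase 1 terminates once $\mathrm{Attn}_k^{(t)}\geq\Theta(1)$, i.e., once $s_k^{(t)}=\Theta(\log K)$, requiring $\Theta\!\left(\tfrac{K\log K}{\eta\delta^2 L^2\Delta^2}\right)$ steps. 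In Phase 2 (late), $\mathrm{Attn}_k^{(t)}\to 1$ and the residual shrinks proportionally to $L\Delta(1-\mathrm{Attn}_k^{(t)})$, so the update to $s_k^{(t)}$ becomes self-similar; by a standard softmax-tail argument this yields a linear recursion $1-\mathrm{Attn}_k^{(t+1)} \leq (1-c\eta\delta^2 L^2\Delta^2/K)\,(1-\mathrm{Attn}_k^{(t)})$, so geometric contraction reaches the target precision $\epsilon/(L\Delta\sqrt{K})$ in $\Theta\!\left(\tfrac{K\log(K\epsilon^{-1})}{\eta\delta^2 L^2\Delta^2}\right)$ additional iterations, which matches $T_f^*$.

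The crucial use of the flat-regime hypothesis $L\leq\Theta(1/(\Delta\delta))$ is to guarantee that, for the step size implicit in \eqref{update_theta}, the per-step increment to the off-diagonal entries $q^{(t)}_{k',k}$ ($k'\neq k$) is dominated by the increment to the diagonal entry $q^{(t)}_{k,k}$, so the signal $s_k^{(t)}$ does not overshoot or oscillate away; this is what keeps the recursion above monotone. The main obstacle I anticipate is precisely controlling the cross-coupling of $q^{(t)}_{k,k'}$ across different feature pairs in the absence of the orthonormality assumption of \cite{huang2024context,li2024nonlinear,chen2024training,nichani2024transformers}: because the features only satisfy $\|v_k-v_{k'}\|=\Theta(\Delta)$, each update to $q^{(t)}_{k,k}$ leaks $\Theta(1)$ mass into the off-diagonal entries $q^{(t)}_{k',k}$, and one must show that the non-degenerate $L$-Lipschitz separation condition in \eqref{task_distribution} is exactly what ensures the residual's sign pushes the signal $s_k^{(t)}$ in the positive direction on average. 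Bounding this leakage uniformly over the random prompt, using the concentration event $\mathcal{E}^*$ and the assumption $N\geq\Theta(K^3)$, is where the bulk of the technical work will sit.
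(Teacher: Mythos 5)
Your high-level architecture (reduce to attention convergence via \cref{lemma:loss}, track the bilinear weights $q^{(t)}_{k,k'}$, a fast-growth phase followed by a refinement phase, then invert the rate) mirrors the paper, but your Phase-2 step has a genuine gap. The fixed-rate recursion $1-\mathrm{Attn}_k^{(t+1)} \leq \bigl(1-c\eta\delta^2 L^2\Delta^2/K\bigr)\bigl(1-\mathrm{Attn}_k^{(t)}\bigr)$ does not follow from the gradient structure: by \cref{lemma:gradient_alpha_beta}, $g_k^{(t)}=\Theta(L^2\Delta^2/K)\cdot\mathrm{Attn}_k^{(t)}\bigl(1-\mathrm{Attn}_k^{(t)}\bigr)^2$, so the per-step increase of the logit gap, and hence the multiplicative shrinkage of $1-\mathrm{Attn}_k^{(t)}$, is proportional to $\bigl(1-\mathrm{Attn}_k^{(t)}\bigr)^2$, not to a fixed $\delta^2$. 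Your target $\epsilon/(L\Delta\sqrt{K})$ sits below $\delta$ (at the regime boundary $L\Delta=\Theta(1/\delta)$ it is $\Theta(\epsilon\delta/\sqrt{K})$), and once the error drops below order $\delta$ the contraction rate degrades quadratically; the true recursion $u_{t+1}\approx u_t\bigl(1-c\eta L^2\Delta^2 u_t^2/K\bigr)$ integrates to a time of order $K/(\eta L^2\Delta^2 u_{\mathrm{end}}^2)$, i.e.\ $\mathrm{poly}(1/\epsilon)$ rather than the $\log(K\epsilon^{-1})$ appearing in $T_f^*$. So the appeal to ``a standard softmax-tail argument'' does not deliver your linear recursion, and with it the claimed iteration count is unsupported.

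This gap is tied to where you place the flat-regime hypothesis. In the paper, $L\leq\Theta\bigl(\frac{1}{\Delta\delta}\bigr)$ is not used for step-size stability or monotonicity of the signal; it enters once, at the very end. The paper runs Phase II only until the accumulated diagonal logit reaches $q_k^{(T_f^*+1)}=\Theta(\log(K\epsilon^{-1}))$, treating the attention error as remaining of order $\delta$ during the phase so that $g_k^{(t)}=\Theta(\delta^2L^2\Delta^2/K)$ (\cref{lemma:phaseII_alpha}), converts the terminal logit gap into $1-\mathrm{Attn}_k^{(T_f^*+1)}=\mathcal{O}(\epsilon\delta)$ (\cref{lemma:end_stageII}), and only then uses $L\Delta\delta=\mathcal{O}(1)$ to turn $\mathcal{L}=\mathcal{O}(\epsilon^2\delta^2L^2\Delta^2)$ into $\mathcal{O}(\epsilon^2)$ (\cref{proof_thm:convergence_case1}). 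Your route instead demands the strictly smaller error $\epsilon/(L\Delta\sqrt{K})$, which is exactly what forces the unjustified fixed-rate contraction; adopting the paper's terminal-conversion step would remove that need. Two smaller bookkeeping issues point the same way: your stated Phase-1 rate $\Theta(\eta\delta^2L^2\Delta^2/K)$ contradicts your own gradient magnitude $\Theta(L^2\Delta^2/K)$ (the paper's Phase I rate carries no $\delta^2$ because $1-\mathrm{Attn}_k^{(t)}=\Theta(1)$ there, see \cref{lemma:phaseI_alpha}), and the off-diagonal ``leakage'' is not $\Theta(1)$ per step but $\mathcal{O}(L^2\Delta^2/K^2)$, a factor $1/K$ below the diagonal update, which is how the paper controls the cross-coupling without orthonormality.
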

The proof of \cref{thm:convergence_case1} is given in Appendix~\ref{proof_thm:convergence_case1}.
\cref{thm:convergence_case1} indicates that $T_f^*$ decreases (i.e., the convergence is faster) as either the Lipschitz constant $L$ or the feature gap $\Delta$ increases. 
Intuitively, a larger function Lipschitz constant $L$ implies that the outputs $y_i$ and $y_{i'}$ corresponding to different features in the prompt $P$ become more distinguishable, which facilitates faster in-context learning.
Similarly, a larger feature gap $\Delta$ improves the separability among features, enabling the query token to more accurately attend to the relevant prompt tokens. 

However, when $L$ exceeds a certain threshold, the caused sharp curvature induces large gradients on the attention weights. 
As a result, a smaller stepsize is required to stabilize convergence, leading to a convergence rate that differs from that in the flat regime. 
We next establish the $\epsilon^2$-convergence of $\mathcal{L}(P;Q)$ in the sharp $L$ regime, where $L$ is above the threshold.

\begin{theorem}[Sharp $L$-regime]\label{thm:convergence}
Suppose the function class in \cref{task_distribution} satisfies $L=\Omega(\frac{1}{\Delta\delta})$. Then for any $0<\epsilon<1$, under $N=\Omega(K^3)$ and $K\geq \Theta(\frac{1}{\epsilon})$, with at most $T_s^*=\Theta(\frac{K\log(K \epsilon^{-1}L\Delta)}{\eta\epsilon\delta^2 L^2\Delta^2})$ iterations, we have $\mathcal{L}(P;Q)=\mathcal{O}(\epsilon^2)$.
\end{theorem}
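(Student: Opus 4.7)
The plan is to reduce the convergence of $\mathcal{L}(P;Q)$ to controlling the attention scores via \cref{lemma:loss}, which gives $\mathcal{L}(P;Q) \leq \mathcal{O}(\epsilon^2)$ whenever $1 - \mathrm{Attn}_k^{(t)} \leq \mathcal{O}(\epsilon/(L\Delta))$ holds for every target feature $v_k$ on the concentration event $\mathcal{E}^*$, which occurs with high probability under $N = \Omega(K^3)$. The overall analysis mirrors the two-phase structure used for the flat regime, but the sharp-regime assumption $L = \Omega(1/(\Delta\delta))$ inflates the per-step gradient magnitude to $\Theta(L^2\Delta^2)$ and is the source of the extra $1/\epsilon$ factor and the $\log(L\Delta)$ correction appearing in $T_s^*$.

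The first step is to compute $\nabla_Q \mathcal{L}$ and derive a scalar recursion for the logit gap $g_{k,k'}^{(t)} := v_k^\top Q^{(t)}(v_k - v_{k'})$ for each $k' \neq k$, exploiting the feature separation $\|v_k - v_{k'}\| = \Theta(\Delta)$ and the non-degeneracy clause in \eqref{task_distribution}, which guarantees at least one pair $(v_k, v_{k'})$ with $|f(v_k) - f(v_{k'})| = \Theta(L\Delta)$. On $\mathcal{E}^*$, the gradient decomposes cleanly into a signal contribution on the diagonal inner products $v_k^\top Q v_k$ and a cross-term contribution on the off-diagonal ones. Using the structured $W^V$ and the initialization $Q^{(0)} = 0$, one can show the dynamics effectively live in the span of the feature pairs $v_k v_{k'}^\top$, so it suffices to track $g_{k,k'}^{(t)}$ rather than the full $d \times d$ matrix. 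To keep the iterates well-behaved under the inflated gradient, I would impose the stability constraint $\eta L^2\Delta^2 = \mathcal{O}(\epsilon)$, which is tight enough to prevent overshoot at the target precision $\epsilon/(L\Delta)$ and is the quantitative origin of the $1/\epsilon$ slowdown.

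In Phase 1 (early growth), starting from $\mathrm{Attn}_k^{(0)} = \Theta(1/K)$, I would show that $g_{k,k'}^{(t)}$ grows at per-iteration rate $\Theta(\eta \delta^2 L^2 \Delta^2 / K)$, driving $\mathrm{Attn}_k^{(t)}$ up to a constant fraction within $\Theta(K/(\eta \delta^2 L^2 \Delta^2))$ iterations. In Phase 2 (tail convergence), once $\mathrm{Attn}_k^{(t)}$ is bounded away from 0, the update satisfies approximately
\begin{align*}
1 - \mathrm{Attn}_k^{(t+1)} \leq \bigl(1 - c\,\eta L^2\Delta^2\,\mathrm{Attn}_k^{(t)}\bigr)\bigl(1 - \mathrm{Attn}_k^{(t)}\bigr),
\end{align*}
giving geometric decay that reaches the target $\epsilon/(L\Delta)$ after a further $\Theta(\log(K\epsilon^{-1}L\Delta))$ iterations at the constrained step size. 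Multiplying these two phase counts by the $1/\epsilon$ stability factor yields the claimed $T_s^*$. The hardest step, in my view, is ruling out spurious growth of the irrelevant-feature attentions $\mathrm{Attn}_{k'}^{(t)}$ with $k' \neq k$ during Phase 2: the sharp curvature permits large per-step fluctuations that, as the paper's introduction flags, produce genuine oscillations rather than monotone decay. I would handle this with a Lyapunov-style argument on $\max_{k' \neq k} v_k^\top Q^{(t)} v_{k'}$, showing that although this quantity is not monotone, a telescoping bound over pairs of consecutive steps yields an average contraction sufficient to prevent its exponentiated value from ever dominating the signal — this is where the novel proof techniques advertised in the introduction are essential.
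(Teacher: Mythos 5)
Your high-level scaffolding — reduce the loss to attention concentration via \cref{lemma:loss}, track diagonal and off-diagonal logits, run a two-phase argument, and convert logit gaps to attention through the softmax ratio — is the same route the paper takes (and your target $1-\mathrm{Attn}_k^{(t)}=\mathcal{O}(\epsilon/(L\Delta))$ is in fact a cleaner reduction than the paper's $\mathcal{O}(\epsilon\delta)$). However, the quantitative core has a genuine gap. Your Phase-2 recursion $1-\mathrm{Attn}_k^{(t+1)}\le (1-c\,\eta L^2\Delta^2\,\mathrm{Attn}_k^{(t)})(1-\mathrm{Attn}_k^{(t)})$ is too strong: by \cref{lemma:gradient_alpha_beta}, the per-step logit gain is $\eta g_k^{(t)}$ with $g_k^{(t)}=\Theta\big(\tfrac{L^2\Delta^2}{K}\,\mathrm{Attn}_k^{(t)}(1-\mathrm{Attn}_k^{(t)})^2\big)$ — the expectation over the query contributes a $p_k=\Theta(1/K)$ factor, and the gradient carries two extra powers of the residual $1-\mathrm{Attn}_k^{(t)}$. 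Hence the tail decay is not geometric at rate $\eta L^2\Delta^2$: once $1-\mathrm{Attn}_k^{(t)}$ has reached $\mathcal{O}(\epsilon\delta)$, the per-step progress degrades to $\Theta(\eta\,\epsilon\,\delta^2 L^2\Delta^2/K)$ (this is exactly \cref{lemma:phaseIII_alpha} and \cref{prop:stageIII}), and it is this degradation that makes the sharp regime slow. Under your recursion, Phase 2 would terminate after only $\Theta(\log(K\epsilon^{-1}L\Delta))$ iterations, far below $T_s^*$ — a sign the inequality cannot be correct as stated.

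Relatedly, your attribution of the $1/\epsilon$ factor to the stability cap $\eta L^2\Delta^2=\mathcal{O}(\epsilon)$ does not reproduce the theorem's bound: $T_s^*$ carries \emph{both} $\eta$ and $\epsilon$ in the denominator, meaning that for any fixed admissible $\eta$ one needs an extra $1/\epsilon$ factor of iterations; a cap on $\eta$ would instead amount to substituting $\eta\approx\epsilon/(L^2\Delta^2)$ into a flat-regime-type count, and ``multiplying the two phase counts by the $1/\epsilon$ stability factor'' is not a derivation of the stated $T_s^*$. In the paper, the $1/\epsilon$ emerges from the dynamics: the first $T_s^1=T_f^*$ iterations mirror the flat regime and drive $1-\mathrm{Attn}_k$ to $\mathcal{O}(\epsilon\delta)$, after which the shrunken gradients mean the remaining logit growth of order $\log(KL\Delta\epsilon^{-1})$ costs $\Theta\big(K\log(KL\Delta\epsilon^{-1})/(\eta\epsilon\delta^2L^2\Delta^2)\big)$ additional steps. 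Finally, the Lyapunov/telescoping machinery you propose for the irrelevant-feature attentions is unnecessary here: the paper controls the oscillation simply by showing $|g_{k,k'}^{(t)}|$ is a factor $1/K$ smaller than $g_k^{(t)}$ in every phase, so the off-diagonal logits remain $\mathcal{O}(\log(\cdot)/K)$ and never compete with the signal.
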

The proof of \cref{thm:convergence} is given in Appendix~\ref{proof_thm:convergence}.
\Cref{thm:convergence_case1} and \Cref{thm:convergence} together reveal an interesting {\bf phase transition phenomenon}: the convergence dynamics are governed by how the function Lipschitz constant $L$ compares to a threshold of order $\Theta\left(\frac{1}{\Delta\delta}\right)$. In the {\bf flat curvature regime}, where $L$ is below this threshold, convergence allows larger step sizes due to smaller gradients, resulting in a convergence rate of $\tilde{\Theta}\left(\frac{K}{\eta\delta^2 L^2\Delta^2}\right)$. In contrast, in the {\bf sharp curvature regime}, where $L$ is above the threshold, the large $L$ incurs large gradients which thus require smaller step sizes to stabilize convergence, yielding a convergence rate of $\tilde{\Theta}\left(\frac{K}{\eta\epsilon\delta^2 L^2\Delta^2}\right)$. Comparing between the two convergence rates, for high-accuracy learning (i.e., small $\epsilon$), the flat regime may enjoy faster convergence. However, if $L$ is significantly larger than $\epsilon^{-1}$, the large $L$ results in much better distinguishability between features so that the attention matrix can align with the correct target much faster, and hence makes training much faster in the sharp regime.


Based on the convergence results above, we now characterize the behavior of the attention score $\mathrm{Attn}_k^{(t)}$ at convergence to explain why the ICL output corresponds to an accurate prediction.
\begin{proposition}\label{prop_convergence}
After the prediction loss converges to $\mathcal{L}(P;Q) = \mathcal{O}(\epsilon^2)$ for any $0 < \epsilon < 1$, if the query token satisfies $x_{\mathrm{query}} = v_k$, then the attention score associated with feature $v_k$ satisfies $1-\mathrm{Attn}_k^{(t)}=\mathcal{O}(\epsilon)$. Further, for a given function $f\in \mathcal F$, which may not be seen in training, for $x_{\mathrm{query}} = v_k$, $(\hat{y}_{\mathrm{query}}-f(x_{\mathrm{query}}))^2= \mathcal{O}(\epsilon^2)$. 
\end{proposition}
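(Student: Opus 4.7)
The plan is to chain \cref{lemma:loss} with the softmax normalization $\sum_{k}\mathrm{Attn}_k^{(t)}=1$ and the $L$-Lipschitz structure of $f$. Since \cref{lemma:loss} already writes $\mathcal{L}(P;Q)$ as a nonnegative sum indexed by the feature class $k$, the convergence hypothesis $\mathcal{L}(P;Q)=\mathcal{O}(\epsilon^2)$ forces every per-feature summand to be small; the rest is a short deterministic estimate.

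First I would unfold \cref{lemma:loss} using $\mathbb{P}(x_{\mathrm{query}}=v_k)=p_k=\Theta(1/K)$ to get
\begin{align*}
\mathcal{L}(P;Q) \;=\; \Theta(L^2\Delta^2)\cdot\frac{1}{2}\sum_{k=1}^K p_k\,\mathbb{E}\!\left[(1-\mathrm{Attn}_k^{(t)})^2\,\Big|\,x_{\mathrm{query}}=v_k\right].
\end{align*}
Each summand being nonnegative and the whole sum being $\mathcal{O}(\epsilon^2)$ lets me isolate any one $k$, yielding $\mathbb{E}[(1-\mathrm{Attn}_k^{(t)})^2\mid x_{\mathrm{query}}=v_k] = \mathcal{O}(K\epsilon^2/(L^2\Delta^2))$. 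Intersecting with the concentration event $\mathcal{E}^*$ and invoking the parameter regimes of \cref{thm:convergence_case1,thm:convergence} (where $K\geq \Theta(1/\epsilon)$ and $N\geq \Theta(K^3)$ are chosen precisely so that the $K/(L^2\Delta^2)$ factor is absorbed into the $\mathcal{O}$-constant), this delivers the pointwise bound $1-\mathrm{Attn}_k^{(t)}=\mathcal{O}(\epsilon)$.

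For the prediction-error claim, I would use $\sum_{k'=1}^K\mathrm{Attn}_{k'}^{(t)}=1$ to decompose
$$
\hat y_{\mathrm{query}} - f(v_k) \;=\; \sum_{k'=1}^K \mathrm{Attn}_{k'}^{(t)} f(v_{k'}) - f(v_k) \;=\; \sum_{k'\neq k} \mathrm{Attn}_{k'}^{(t)}\bigl(f(v_{k'}) - f(v_k)\bigr),
$$
then apply the $L$-Lipschitz property from \eqref{task_distribution} together with $\|v_{k'}-v_k\|=\Theta(\Delta)$ to bound $|f(v_{k'})-f(v_k)|$ by $\mathcal{O}(L\Delta)$. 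Summing over $k'\neq k$ and squaring gives $(\hat y_{\mathrm{query}} - f(v_k))^2 \leq L^2\Delta^2(1-\mathrm{Attn}_k^{(t)})^2 = \mathcal{O}(\epsilon^2)$ after inserting the attention bound from the previous paragraph. Crucially, nothing in this estimate uses pointwise values of $f$ beyond its global Lipschitz constant, and the attention weights $\mathrm{Attn}_k^{(t)}$ at convergence depend only on $Q^{(t)}$ (not on the specific test-time $f$), so the bound applies to any $f\in\mathcal{F}$ whether or not it was seen during training.

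The main obstacle is the passage from the expectation identity in \cref{lemma:loss} to the pointwise attention bound: one has to verify that the prefactor $K/(L^2\Delta^2)$ multiplying each conditional squared residual is indeed $\mathcal{O}(1)$ under the scalings of $K$, $N$, $L$, $\Delta$ stipulated by \cref{thm:convergence_case1,thm:convergence}, and that the remaining randomness in $|\mathcal{V}_k|/N$ is controlled via the concentration event $\mathcal{E}^*$. Once this per-feature attention bound is in hand, the Lipschitz-based prediction-error estimate is essentially a two-line calculation that extends transparently to functions unseen in training.
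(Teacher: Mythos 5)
The second half of your argument (the prediction-error bound for an unseen $f$ via $\sum_{k'}\mathrm{Attn}_{k'}^{(t)}=1$, the triangle inequality, and $|f(v_{k'})-f(v_k)|=\mathcal{O}(L\Delta)$) is exactly the paper's argument and is fine, modulo the same looseness as the paper: your last step $(\hat y_{\mathrm{query}}-f(v_k))^2\le L^2\Delta^2(1-\mathrm{Attn}_k^{(t)})^2=\mathcal{O}(\epsilon^2)$ silently needs $L\Delta\,(1-\mathrm{Attn}_k^{(t)})=\mathcal{O}(\epsilon)$, which the paper gets from the sharper trajectory bound $1-\mathrm{Attn}_k^{(t)}=\mathcal{O}(\epsilon\delta)$ combined with the regime conditions on $L$.

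The genuine gap is in your first claim. You try to \emph{invert} \cref{lemma:loss}: isolating the $k$-th summand only gives $\mathbb{E}\big[(1-\mathrm{Attn}_k^{(t)})^2\,\big|\,x_{\mathrm{query}}=v_k\big]=\mathcal{O}\big(K\epsilon^2/(L^2\Delta^2)\big)$, so you need $K/(L^2\Delta^2)=\mathcal{O}(1)$ to conclude $1-\mathrm{Attn}_k^{(t)}=\mathcal{O}(\epsilon)$. The scalings you invoke do not deliver this: $K\geq\Theta(1/\epsilon)$ and $N\geq\Theta(K^3)$ constrain $K$ and $N$ but say nothing relating $L\Delta$ to $K$, and in the flat regime \eqref{task_distribution} only \emph{upper}-bounds $L$ by $\Theta(1/(\Delta\delta))$, so $L\Delta$ can be arbitrarily small while $K$ is large; if anything, enlarging $K$ (as the theorems require) makes the factor $K/(L^2\Delta^2)$ worse, not better, so the claim that these conditions were "chosen precisely so that" the factor is absorbed is backwards. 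Moreover, even with that factor controlled you would only have a bound on a conditional second moment; turning it into the stated pointwise bound needs an additional step (on $\mathcal{E}^*$ the attention is determined by $Q^{(t)}$ up to constants, but the complement of $\mathcal{E}^*$ and the hidden constants must be handled). The paper avoids this inversion entirely: it reads "after the loss converges" as "at the convergence times of \cref{thm:convergence_case1} and \cref{thm:convergence}" and cites the end-of-Phase-II trajectory lemmas (\cref{lemma:end_stageII} and \cref{lemma:end_stageIII}), which establish $1-\mathrm{Attn}_k^{(t)}=\mathcal{O}(\epsilon\delta)$ directly from the growth of $q_k^{(t)}$ versus $q_{k,k'}^{(t)}$; the attention bound is an output of the convergence analysis, not a consequence of the small loss value. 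To repair your proof, route the first claim through those lemmas (or add an assumption of the form $L^2\Delta^2=\Omega(K)$, which the paper does not make).
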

The proof of \cref{prop_convergence} is given in Appendix~\ref{proof_prop_convergence}.
This result follows directly from the loss expression in \cref{loss_attn}, where $\mathcal{L}(P;Q) = \Theta((1 - \mathrm{Attn}_k^{(t)})^2)$ when $x_{\mathrm{query}} = v_k$. Intuitively, as $\mathrm{Attn}_k^{(t)}$ approaches $1$, the attention matrix effectively focuses predominantly on the tokens that share the same feature $v_k$. As a result, the predicted output $\hat{y}_{\mathrm{query}}$, given by \cref{hat_y_attn}, closely approximates the true value $f(v_k)$, leading to high-accuracy predictions. Importantly, this does not require the function $f\in \mathcal F$ to have been seen during training, enabling in-context learning at test time \cite{yang2024context}, i.e., the model, once trained to convergence, can produce correct predictions without any further fine-tuning.

\section{Analysis of Convergence Dynamics}\label{section5}
As established in \Cref{thm:convergence_case1} and \Cref{thm:convergence}, different regimes of the Lipschitz constant $L$ lead to distinct convergence behaviors in ICL. In this section, we analyze the training dynamics under both regimes, highlighting how the Lipschitz constant $L$ influences the convergence rate in each case.


\subsection{Gradients of Attention Weights}
Based on the prediction output $\hat{y}_{\mathrm{query}}$ in \cref{prediction_loss}, the attention scores $\mathrm{attn}_i$ play a critical role in determining the final prediction. To precisely characterize these attention scores for any $i \in [N]$, it is sufficient to characterize the training dynamics of the attention weights $q_{k,k'}^{(t)}:=v_{k'}^\top Q^{(t)} v_k$ for $k,k'\in[K]$, which are initialized as $q^{(0)}_{k,k'}=0$ for any $k,k'\in[K]$.
To simplify notations, we denote the $q_{k,k}^{(t)}$ as $q_k^{(t)}$ for $k'=k$. 
According to the definition of the attention score $\mathrm{attn}_i^{(t)}$ in \cref{hat_y_attn}, when $x_{\mathrm{query}}^{(t)} = v_k$, the quantity $q_k^{(t)}$ measures how strongly the query token attends to the target feature $v_k$, while $q_{k,k'}^{(t)}$ reflects the attention given to a different feature $v_{k'}$ with $k' \neq k$. To achieve the desired attention behavior, effective training should increase $q_k^{(t)}$ while suppressing $q_{k,k'}^{(t)}$.


The convergence behavior of the transformer depends on the dynamics of $q_k^{(t)}$ and $q_{k,k'}^{(t)}$. We therefore proceed to analyze how these quantities evolve during training. To this end, we define the gradient updates for $q_k^{(t)}$ and $q_{k,k'}^{(t)}$ as $g_k^{(t)}$ and $g_{k,k'}^{(t)}$, respectively.
Under gradient descent with learning rate $\eta$, the update rules are given by:
\begin{align*}
    q_k^{(t+1)}:=q_k^{(t)}+\eta g_k^{(t)},\qquad q_{k,k'}^{(t+1)}:=q_{k,k'}^{(t)}+\eta g_{k,k'}^{(t)}.
\end{align*}
We now present the following lemma, which provides the exact expressions for the gradient terms $g_k^{(t)}$ and $g_{k,k'}^{(t)}$ for a function class $\mathcal F$ with Lipschitz constant $L$.
\begin{lemma}\label{lemma:gradient_alpha_beta}
For any $t\in[T]$, suppose $x_{\mathrm{query}}=v_k$. Then for any $k,k'\in[K]$ with $k'\neq k$, we obtain
\begin{align}
    g_k^{(t)}&\textstyle=\mathbb{E}\Big[\mathds{1}\{x_{\mathrm{query}}=v_k\}\mathrm{Attn}_{k}^{(t)}\Big(1-\mathrm{Attn}_k^{(t)}\Big)^2\cdot \Theta(L^2\Delta^2)\Big],\label{alpha_k(t)}\\
    |g_{k,k'}^{(t)}|&\textstyle=\mathbb{E}\Big[\mathds{1}\{x^{(t)}_{\mathrm{query}}=v_k\}\mathrm{Attn}_{k'}^{(t)} \cdot (1-\mathrm{Attn}_k^{(t)})\cdot (1-\mathrm{Attn}_{k'}^{(t)})\cdot \Theta(L^2\Delta^2)\Big].\label{beta_k(t)}
\end{align}
\end{lemma}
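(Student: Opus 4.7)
The plan is a chain-rule calculation that unwinds the squared loss into a residual multiplied by a softmax Jacobian, and then invokes the non-degenerate $L$-Lipschitz property to extract the $\Theta(L\Delta)$ scaling of each residual. Under the simplifying assumption that every prompt token coincides with some $v_{k''}\in\mathbb{V}$, conditioning on $x_{\mathrm{query}}=v_k$ lets me group identical tokens and rewrite the output as
\[
\hat y_{\mathrm{query}}=\sum_{k''=1}^{K}\mathrm{Attn}_{k''}^{(t)}f(v_{k''}),\qquad \mathrm{Attn}_{k''}^{(t)}=\frac{|\mathcal{V}_{k''}|\,e^{q_{k,k''}^{(t)}}}{\sum_{k'''}|\mathcal{V}_{k'''}|\,e^{q_{k,k'''}^{(t)}}},
\]
so the output depends on $Q^{(t)}$ only through the bilinear forms $q_{k,k'}^{(t)}=v_{k'}^\top Q^{(t)}v_k$. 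The standard softmax Jacobian then gives $\partial \hat y_{\mathrm{query}}/\partial q_{k,k'}^{(t)}=\mathrm{Attn}_{k'}^{(t)}(f(v_{k'})-\hat y_{\mathrm{query}})$, and differentiating the loss in \eqref{prediction_loss} produces
\[
-\frac{\partial\mathcal{L}}{\partial q_{k,k'}^{(t)}}=\mathbb{E}\!\left[\mathds{1}\{x_{\mathrm{query}}=v_k\}\,\mathrm{Attn}_{k'}^{(t)}\bigl(f(v_k)-\hat y_{\mathrm{query}}\bigr)\bigl(f(v_{k'})-\hat y_{\mathrm{query}}\bigr)\right].
\]
Specializing to $k'=k$ yields $g_k^{(t)}=\mathbb{E}[\mathds{1}\{x_{\mathrm{query}}=v_k\}\mathrm{Attn}_k^{(t)}(f(v_k)-\hat y_{\mathrm{query}})^2]$, while the off-diagonal case has the same form but its second factor is signed, which is precisely why the lemma reports $|g_{k,k'}^{(t)}|$.

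The second step is to factor each residual. Using $\sum_{k''}\mathrm{Attn}_{k''}^{(t)}=1$ I rewrite
\[
f(v_k)-\hat y_{\mathrm{query}}=\sum_{k''\neq k}\mathrm{Attn}_{k''}^{(t)}\bigl(f(v_k)-f(v_{k''})\bigr),
\]
and analogously with $k$ replaced by $k'$. The feature separation $\|v_k-v_{k''}\|=\Theta(\Delta)$ from Section~\ref{section3-1} combined with the global Lipschitz bound in \eqref{task_distribution} controls each summand by $\mathcal{O}(L\Delta)$, while the non-degeneracy clause guarantees that at least one summand achieves $|f(v_k)-f(v_{k''})|=\Theta(L\Delta)$. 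Pulling out the common attention mass $1-\mathrm{Attn}_k^{(t)}=\sum_{k''\neq k}\mathrm{Attn}_{k''}^{(t)}$ then yields $|f(v_k)-\hat y_{\mathrm{query}}|=\Theta(L\Delta)(1-\mathrm{Attn}_k^{(t)})$ and, by the symmetric identity, $|f(v_{k'})-\hat y_{\mathrm{query}}|=\Theta(L\Delta)(1-\mathrm{Attn}_{k'}^{(t)})$. Squaring the first expression produces the $(1-\mathrm{Attn}_k^{(t)})^2\,\Theta(L^2\Delta^2)$ factor of \eqref{alpha_k(t)}, and multiplying the two factored residuals produces the $(1-\mathrm{Attn}_k^{(t)})(1-\mathrm{Attn}_{k'}^{(t)})\,\Theta(L^2\Delta^2)$ factor of \eqref{beta_k(t)}.

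The main obstacle is not the chain rule but this last $\Theta$ lower bound: the $\mathcal{O}(L\Delta)$ upper bound on each residual is immediate, but the matching lower bound could in principle be destroyed by sign cancellation among the $K-1$ summands. This is exactly where the non-degeneracy clause of \eqref{task_distribution} does real work, and it must be combined with the concentration event of Section~\ref{section4} that keeps every $|\mathcal{V}_{k''}|$, and hence every $\mathrm{Attn}_{k''}^{(t)}$ during the bulk of training, within $\Theta(1/K)$ of its mean. Together these prevent the cross-feature sum from collapsing below $\Theta(L\Delta)(1-\mathrm{Attn}_k^{(t)})$ in magnitude. I would encapsulate this averaging/non-cancellation argument as a short sub-lemma rather than inline it, since it is the only place where the two defining clauses of the non-degenerate Lipschitz class have to be used in concert.
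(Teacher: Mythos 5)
Your proposal follows essentially the same route as the paper's proof: write the gradient as (attention weight) times residual(s), use $\sum_{k''}\mathrm{Attn}_{k''}^{(t)}=1$ to rewrite each residual as $\sum_{k''\neq k}\mathrm{Attn}_{k''}^{(t)}(f(v_k)-f(v_{k''}))$, and then invoke the Lipschitz bound together with the non-degeneracy clause and $\|v_k-v_{k''}\|=\Theta(\Delta)$ to extract the $\Theta(L\Delta)\,(1-\mathrm{Attn}_k^{(t)})$ factors. You are in fact somewhat more explicit than the paper, which asserts the residual-times-attention form of $g_k^{(t)}$ and $g_{k,k'}^{(t)}$ without the softmax-Jacobian chain rule you supply, and which passes the absolute value inside the cross-feature sum as an equality---precisely the potential sign-cancellation point you correctly flag as needing the non-degeneracy condition to secure the matching lower bound.
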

The proof of \cref{lemma:gradient_alpha_beta} is given in Appendix~\ref{proof_lemma_gradient_alpha_beta}.
From \cref{alpha_k(t)}, we observe that $g_k^{(t)}$ is always non-negative, implying that the update $q_k^{(t)}$ increases over time. 
This growth continues until the attention score $\mathrm{Attn}_{k}^{(t)}$ approaches its convergence state near $1$. 
However, $g_{k,k'}^{(t)}$ in \cref{beta_k(t)} is not necessarily positive and also depends on $\mathrm{Attn}_{k'}^{(t)}$ associated with feature vector $v_{k'}$.
However, as $\mathrm{Attn}_{k}^{(t)}$ approaches $1$, the residual term in \cref{beta_k(t)} diminishes, and $g_{k,k'}^{(t)}$ also converges toward zero, facilitating the overall convergence of the system.

As also shown in \Cref{lemma:gradient_alpha_beta}, both gradients scale with the Lipschitz constant $L$ and the feature gap $\Delta$, illustrating their influence on the training dynamics. In the following subsections, we analyze how different regimes of the function Lipschitz constant $L$ (with respect to the threshold determined by $\Delta$) affect the evolution of $q_k^{(t)}$ and $q_{k,k'}^{(t)}$ through the gradients $g_k^{(t)}$ and $g_{k,k'}^{(t)}$, thereby offering deeper insight into the convergence results established in \Cref{thm:convergence_case1} and \Cref{thm:convergence}.



\subsection{Convergence Dynamics under Flat $L$-Regime}

For ease of exposition, we consider the case where $x_{\mathrm{query}}=v_k$ in the following.
Under the initialization of $q_k^{(0)} = q_{k,k'}^{(0)} = 0$, and by the definition of the attention score, 
we have $\mathrm{attn}^{(0)}_n = \frac{1}{N}$ for all $n \in [N]$, meaning that the transformer initially attends equally to all input tokens when computing the prediction for $x_{\mathrm{query}}$.
We then leverage the task distribution in \cref{task_distribution} and the gradient expressions in \Cref{lemma:gradient_alpha_beta} to analyze the learning dynamics of $q_k^{(t)}$ and $q_{k,k'}^{(t)}$.

In the initial phase of training, the prediction $\hat{y}_{\mathrm{query}}$ is far from the ground truth $f(v_k)$ due to the zero initialization of bilinear weights.
According to \cref{alpha_k(t)}, this results in a large positive gradient $g_k^{(t)}$, leading to a rapid increase in $q_k^{(t)}$.
In contrast, the gradient $g_{k,k'}^{(t)}$ may fluctuate in sign depending on the alignment of $f(v_k)$ and $f(v_{k'})$ at each step, causing $q_{k,k'}^{(t)}$ to oscillate but decrease much more slowly.
We formally characterize this phase below.
\begin{proposition}[Phase I: Fast growth of $q_k^{(t)}$]\label{prop:stageI}
For any $t\in\{1,\cdots, T_f^1\}$, where $T_f^1=\Theta(\frac{K\log(K)}{\eta L^2\Delta^2})$, the attention weight $q_k^{(t)}$ increases at a rate of $\Theta(\frac{\eta L^2\Delta^2}{K})$. 
Meanwhile, $q_{k,k'}^{(t)}$ oscillates at a slower rate of $\mathcal{O}(\frac{\eta L^2\Delta^2}{K^2})$ and exhibits an overall decreasing trend. 
By the end of Phase I (i.e., $t=T_f^1+1$), we have $\mathrm{Attn}_k^{(T_f^1+1)}=\Omega(\frac{1}{1+\delta})$.
\end{proposition}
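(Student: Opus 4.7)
The plan is to fix a query feature $v_k$, condition on the concentration event $\mathcal{E}^*$ (on which $|\mathcal{V}_{k'}|/N = \Theta(1/K)$ for every $k'$), and reduce the attention score to $\mathrm{Attn}_{k'}^{(t)} \approx \exp(q_{k,k'}^{(t)})/\sum_{l\in[K]}\exp(q_{k,l}^{(t)})$. At $t=0$ the zero initialization forces $\mathrm{Attn}_{k'}^{(0)} = \Theta(1/K)$ for every $k'$. Substituting $p_k=\Theta(1/K)$ and this approximation into Lemma~\ref{lemma:gradient_alpha_beta} will give the per-step identities
\begin{align*}
\eta g_k^{(t)} &= \Theta\!\left(\tfrac{\eta L^2\Delta^2}{K}\right)\mathrm{Attn}_k^{(t)}\bigl(1-\mathrm{Attn}_k^{(t)}\bigr)^2,\\
\eta|g_{k,k'}^{(t)}| &= \Theta\!\left(\tfrac{\eta L^2\Delta^2}{K}\right)\mathrm{Attn}_{k'}^{(t)}\bigl(1-\mathrm{Attn}_k^{(t)}\bigr)\bigl(1-\mathrm{Attn}_{k'}^{(t)}\bigr),
\end{align*}
which will be the workhorses of the rest of the argument.

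Next I would set up an induction on $t\leq T_f^1$ with hypothesis $\mathrm{Attn}_{k'}^{(t)} = \mathcal{O}(1/K)$ for every $k'\neq k$. Under this hypothesis the second identity collapses to $\eta|g_{k,k'}^{(t)}| = \mathcal{O}(\eta L^2\Delta^2/K^2)$, exactly the oscillation rate asserted by the proposition. To justify the ``overall decreasing'' trend I would unpack the sample-level derivation behind Lemma~\ref{lemma:gradient_alpha_beta}: for a single prompt, the sign of $g_{k,k'}^{(t)}$ coincides with that of $(F(P;Q)-f(v_k))(f(v_{k'})-f(v_k))$, which flips as $f$ and $P$ vary (producing the oscillation) but averages to a negative drift because lowering $q_{k,k'}$ reduces the expected loss whenever $x_{\mathrm{query}}=v_k$. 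Summing the per-step bound over $T_f^1=\Theta(K\log K/(\eta L^2\Delta^2))$ steps then yields $|q_{k,k'}^{(t)}| = \mathcal{O}(\log K/K) = o(1)$, so $\mathrm{Attn}_{k'}^{(t)} \leq (1+o(1))/K$, closing the induction.

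With the off-diagonal weights pinned near zero, $\mathrm{Attn}_k^{(t)}$ becomes essentially a monotone function of $q_k^{(t)}$ alone, well approximated by $e^{q_k^{(t)}}/(e^{q_k^{(t)}}+K-1)$. The first identity then shows the per-step increment of $q_k^{(t)}$ is nonnegative and of order at most $\Theta(\eta L^2\Delta^2/K)$ (since $\max_{a\in[0,1]}a(1-a)^2=4/27$), which is the advertised growth rate. To recover the exit time I would telescope the recursion after the change of variable $u=e^{-q_k^{(t)}}$, which linearizes the dynamics in the small-$\mathrm{Attn}_k$ regime and yields a clean closed-form bound; matching this estimate against the threshold $\mathrm{Attn}_k^{(T_f^1+1)} = \Omega(1/(1+\delta))$ (equivalently $q_k^{(T_f^1+1)}=\Omega(\log(K/\delta))$) recovers the claimed Phase-I length.

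The main obstacle is closing the induction on $q_{k,k'}^{(t)}$: because its sign oscillates I cannot argue by pure monotonicity and must combine a drift bound (from the expected negative sign of $g_{k,k'}^{(t)}$ under the non-degenerate Lipschitz assumption) with a fluctuation bound (controlling prompt-to-prompt variability of $f(v_{k'})-f(v_k)$ as $f$ ranges over $\mathcal{F}$); this is where the non-degenerate $L$-Lipschitz structure of the function class enters critically. A secondary difficulty is aggregating the many small per-step increments of $q_k^{(t)}$---most of which are substantially smaller than the peak $\Theta(\eta L^2\Delta^2/K)$ while $\mathrm{Attn}_k^{(t)}$ is still near $1/K$---to confirm that cumulative growth over $T_f^1$ steps suffices to cross the exit threshold without relying on any coarse worst-case estimate.
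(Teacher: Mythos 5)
Your overall route is the same as the paper's appendix proof in \Cref{proof_prop:stageI}: apply the gradient expressions of \Cref{lemma:gradient_alpha_beta} together with $p_k=\Theta(1/K)$ and the count concentration in \eqref{|V_k|}, keep the off-diagonal weights of size $\mathcal{O}(\log K/K)$ by summing $|g_{k,k'}^{(t)}|=\mathcal{O}(L^2\Delta^2/K^2)$ over $T_f^1$ steps, and then convert $q_k^{(T_f^1+1)}$ into an attention bound --- these are exactly the roles of \Cref{lemma:phaseI_attk}, \Cref{lemma:phaseI_alpha}, and \Cref{lemma:end_stageI}. Your worry about proving the ``overall decreasing trend'' of $q_{k,k'}^{(t)}$ is moot relative to the paper: its proof never determines the sign of $g_{k,k'}^{(t)}$ and uses only the magnitude bound to control $q_{k,m}^{(T_f^1+1)}$.

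The genuine gap is the exit-time step, which you assert rather than prove, and which fails under your own (correct) accounting. You rightly note that near initialization $\mathrm{Attn}_k^{(t)}=\Theta(1/K)$, so the per-step increment is $\eta g_k^{(t)}=\Theta(\eta L^2\Delta^2/K)\cdot\mathrm{Attn}_k^{(t)}(1-\mathrm{Attn}_k^{(t)})^2=\Theta(\eta L^2\Delta^2/K^2)$, not $\Theta(\eta L^2\Delta^2/K)$. Carrying out your substitution $u_t=e^{-q_k^{(t)}}$ with $\mathrm{Attn}_k^{(t)}\approx(1+(K-1)u_t)^{-1}$ gives, in the small-attention regime, $u_{t+1}-u_t\approx-u_t\,\eta g_k^{(t)}=-\Theta(\eta L^2\Delta^2/K^2)$, i.e.\ $u_t\approx 1-\Theta(\eta L^2\Delta^2/K^2)\,t$; hence merely escaping the $\mathrm{Attn}_k^{(t)}=\Theta(1/K)$ regime already costs $\Theta\big(K^2/(\eta L^2\Delta^2)\big)$ iterations, exceeding $T_f^1=\Theta\big(K\log K/(\eta L^2\Delta^2)\big)$ by a factor $K/\log K$. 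So ``matching this estimate against the threshold'' does not recover the claimed Phase-I length, and your plan as written cannot deliver the stated per-step rate $\Theta(\eta L^2\Delta^2/K)$ nor $\mathrm{Attn}_k^{(T_f^1+1)}=\Omega(1/(1+\delta))$ within $T_f^1$. The paper obtains these numbers differently: \Cref{lemma:phaseI_alpha} takes $\mathrm{Attn}_k^{(t)}(1-\mathrm{Attn}_k^{(t)})^2=\Theta(1)$ throughout Phase I (in tension with \Cref{lemma:phaseI_attk}, which only gives $\mathrm{Attn}_k^{(t)}=\Omega(1/K)$), so that $g_k^{(t)}=\Theta(L^2\Delta^2/K)$ uniformly, and \Cref{lemma:end_stageI} simply multiplies by $T_f^1$ to get $q_k^{(T_f^1+1)}=\Theta(\log K)$, then invokes the count concentration (rather than your stronger requirement $q_k=\Omega(\log(K/\delta))$) to conclude $\Omega(1/(1+\delta))$. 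If you keep your more careful treatment of the early regime, you must either supply an argument that the low-attention bottleneck is traversed in $o(T_f^1)$ steps or accept a different time bound; as it stands, this decisive step is missing.
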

The proof of \cref{prop:stageI} is given in Appendix~\ref{proof_prop:stageI}.
According to \Cref{prop:stageI}, Phase I ends once $q_k^{(t)}$ becomes sufficiently large to reduce the prediction gap between the ICL output and the ground truth function value. 
After this point, both $g_k^{(t)}$ in \cref{alpha_k(t)} and $|g_{k,k'}^{(t)}|$ in \cref{beta_k(t)} decrease to smaller orders.
During this phase, both $g_k^{(t)}$ and $|g_{k,k'}^{(t)}|$ increase with $L$, as a larger function Lipschitz constant induces greater residual values in \cref{alpha_k(t)} and \cref{beta_k(t)}.
Likewise, a larger feature gap $\Delta$ amplifies the value difference between $f(v_k)$ and $f(v_{k'})$ (by \cref{task_distribution}), which in turn accelerates attention learning in Phase I.
As a result, the duration $T_f^1$ decreases with both $L$ and $\Delta$.

However, at $t=T_f^1$, the prediction loss in \cref{prediction_loss} may still remain non-negligible.
As a result, the attention score $\mathrm{Attn}_k^{(t)}$ requires a period of steady improvement after $t=T_f^1+1$. 
We now formalize this behavior in the second training phase in the following proposition.
\begin{proposition}[Phase II: Steady growth of $q_k^{(t)}$ under flat $L$-regime]\label{prop:stageII}
For any $t\in\{T_f^1+1,\cdots, T_f^*\}$ and $0<\epsilon<1$, where $T_f^*=\Theta(\frac{K\log(K\epsilon^{-1})}{\eta\delta^2 L^2\Delta^2})$, $q_k^{(t)}$ continues to grow at a steady rate of $\Theta\left(\frac{\eta \delta^2 L^2 \Delta^2}{K}\right)$. 
Meanwhile, $q_{k,k'}^{(t)}$ oscillates at a slower rate of $\mathcal{O}(\frac{\eta\delta^2 L^2\Delta^2}{K^2})$ and exhibits an overall decreasing trend.
At $t=T_f^*+1$, if $L$ satisfies $L\leq \Theta(\frac{1}{\Delta \delta})$ in \cref{task_distribution}, we have $\mathrm{Attn}_k^{(T_f^*+1)}=\Omega(\frac{1}{1+\epsilon\delta})$.
\end{proposition}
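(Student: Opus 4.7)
The plan is to prove the statement by induction on $t\in\{T_f^1+1,\ldots,T_f^*\}$, carrying at each step the joint hypothesis that (i) $\mathrm{Attn}_k^{(t)}=\Omega(\frac{1}{1+\delta})$, so that $1-\mathrm{Attn}_k^{(t)}=\Theta(\delta)$, and (ii) $|q_{k,k'}^{(t)}|=o(1)$ uniformly in $k'\neq k$, so that no single off-diagonal entry can swamp the diagonal one. The base case $t=T_f^1+1$ is the endpoint of \cref{prop:stageI}, which already guarantees (i); (ii) follows because in Phase~I each $q_{k,k'}^{(t)}$ only oscillated at per-step rate $\mathcal{O}(\eta L^2\Delta^2/K^2)$ over $T_f^1=\mathcal{O}(K\log K/(\eta L^2\Delta^2))$ steps, so the cumulative displacement stays $o(1)$.

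Given the induction hypothesis, I plug the current attention values into \cref{lemma:gradient_alpha_beta}. Under the concentration event $\mathcal{E}^*$ the query indicator contributes $\Theta(1/K)$, $\mathrm{Attn}_k^{(t)}=\Theta(1)$, and $(1-\mathrm{Attn}_k^{(t)})^2=\Theta(\delta^2)$, which gives
\[
g_k^{(t)}=\Theta\!\left(\frac{\delta^2 L^2\Delta^2}{K}\right),\qquad |g_{k,k'}^{(t)}|=\mathcal{O}\!\left(\frac{\delta^2 L^2\Delta^2}{K^2}\right),
\]
the second one using the additional factor $\mathrm{Attn}_{k'}^{(t)}=\mathcal{O}(1/K)$ together with the sign variability of $f(v_k)-f(v_{k'})$ across $k'$ so that contributions partially cancel inside the expectation. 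Multiplying by $\eta$ produces exactly the per-step rates announced in the proposition; the strict positivity of $g_k^{(t)}$ yields monotone growth of $q_k^{(t)}$, while the signed $g_{k,k'}^{(t)}$ produces an oscillatory trajectory with a mild negative drift.

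To translate the growth of $q_k^{(t)}$ into the claimed attention lower bound I would write
\[
\mathrm{Attn}_k^{(t)}=\frac{|\mathcal{V}_k|\,e^{q_k^{(t)}}}{|\mathcal{V}_k|\,e^{q_k^{(t)}}+\sum_{k'\neq k}|\mathcal{V}_{k'}|\,e^{q_{k,k'}^{(t)}}},
\]
and use $\mathcal{E}^*$ together with $|q_{k,k'}^{(t)}|=o(1)$ to show that $\mathrm{Attn}_k^{(t)}\geq \frac{1}{1+\epsilon\delta}$ is reached once $q_k^{(t)}\geq \log(K/(\epsilon\delta))+\Theta(1)$. Since $q_k^{(T_f^1+1)}=\Theta(\log(K/\delta))$ is inherited from Phase~I, the additional growth required in Phase~II is $\Theta(\log(1/\epsilon))$, and dividing by the per-step increase $\Theta(\eta\delta^2 L^2\Delta^2/K)$ yields the count $T_f^*-T_f^1=\Theta(K\log(K/\epsilon)/(\eta\delta^2 L^2\Delta^2))$. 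The flat-regime condition $L\leq\Theta(\frac{1}{\Delta\delta})$ enters here to control the discretization: it implies $\eta g_k^{(t)}=\mathcal{O}(\eta/(K\delta^2))$, so for a suitably chosen $\eta$ each step changes $\mathrm{Attn}_k^{(t)}$ by at most $o(\delta)$ and cannot break invariant (i) by overshooting.

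The main obstacle I anticipate is maintaining invariant (ii)---the control $|q_{k,k'}^{(t)}|=o(1)$ for \emph{all} $t$ in Phase~II. Because $g_{k,k'}^{(t)}$ has no definite sign, the trajectory of $q_{k,k'}^{(t)}$ is a drifted oscillation whose cumulative excursion could in principle grow large. To handle this I plan to decompose $g_{k,k'}^{(t)}=\bar{g}_{k,k'}^{(t)}+\tilde{g}_{k,k'}^{(t)}$ into a mean drift term $\bar{g}$ that systematically shrinks $|q_{k,k'}^{(t)}|$ and a zero-mean fluctuation $\tilde{g}$ of per-step magnitude $\mathcal{O}(\eta\delta^2 L^2\Delta^2/K^2)$, then bound the cumulative excursion over $T_f^*$ iterations by $T_f^*\cdot\mathcal{O}(\eta\delta^2 L^2\Delta^2/K^2)=\mathcal{O}(\log(K/\epsilon)/K)=o(1)$. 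The flat-regime bound on $L$ is again essential here, because it keeps each fluctuation small enough relative to $\delta$ that the induction invariant survives throughout Phase~II.
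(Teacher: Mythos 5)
Your overall route matches the paper's: inherit the Phase~I endpoint, plug the Phase~II attention estimates into \cref{lemma:gradient_alpha_beta} to get per-step rates $\Theta(\eta\delta^2L^2\Delta^2/K)$ and $\mathcal{O}(\eta\delta^2L^2\Delta^2/K^2)$, accumulate to get $q_k^{(T_f^*+1)}=\Theta(\log(K\epsilon^{-1}))$ while the off-diagonal entries stay $\mathcal{O}(\log(K\epsilon^{-1})/K)$, and then read off the attention bound from the softmax ratio (the paper does this in Lemmas~\ref{lemma:phaseII_alpha} and~\ref{lemma:end_stageII}); your explicit induction is just a cleaner packaging of the same argument.

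However, there is a genuine gap in how you obtain the off-diagonal rate $|g_{k,k'}^{(t)}|=\mathcal{O}(\delta^2L^2\Delta^2/K^2)$. From \cref{lemma:gradient_alpha_beta}, $|g_{k,k'}^{(t)}|$ is (up to $p_k=\Theta(1/K)$) an expectation of $\mathrm{Attn}_{k'}^{(t)}\,(1-\mathrm{Attn}_k^{(t)})\,(1-\mathrm{Attn}_{k'}^{(t)})\,\Theta(L^2\Delta^2)$, a product of \emph{nonnegative} factors, so your appeal to ``sign variability of $f(v_k)-f(v_{k'})$'' producing cancellation inside the expectation cannot buy anything — the absolute values are already inside. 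With your stated ingredients ($\mathrm{Attn}_{k'}^{(t)}=\mathcal{O}(1/K)$, $1-\mathrm{Attn}_k^{(t)}=\Theta(\delta)$, $1-\mathrm{Attn}_{k'}^{(t)}=\Theta(1)$) you only get $|g_{k,k'}^{(t)}|=\mathcal{O}(\delta L^2\Delta^2/K^2)$, a factor $1/\delta$ too large. The paper closes this by first proving the sharper Phase~II estimate $\mathrm{Attn}_{k'}^{(t)}=\Theta\bigl((1-\mathrm{Attn}_k^{(t)})/K\bigr)=\Theta(\delta/K)$ (the residual attention mass $\Theta(\delta)$ is split roughly evenly across the $K-1$ irrelevant features, since all $q_{k,m}^{(t)}$ remain comparable), and that extra $\delta$ is exactly what yields $\mathcal{O}(\delta^2L^2\Delta^2/K^2)$. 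This is not cosmetic: with only the $\delta$-rate, your own maintenance of invariant (ii) gives a cumulative excursion $T_f^*\cdot\mathcal{O}(\eta\delta L^2\Delta^2/K^2)=\mathcal{O}\bigl(\log(K\epsilon^{-1})/(\delta K)\bigr)$, which is not $o(1)$ when $\delta$ is near its allowed lower end (e.g.\ $\delta=\Theta(\sqrt{K/N})$ with $N=\Theta(K^3)$ gives $\delta=\Theta(1/K)$), so the induction would break. You should therefore add the bound $\mathrm{Attn}_{k'}^{(t)}=\Theta(\delta/K)$ to your induction hypothesis (it follows from the same softmax computation you already use, together with $|q_{k,m}^{(t)}|$ small and the concentration event $\mathcal{E}^*$) and drop the cancellation heuristic. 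A minor further point: you quote $q_k^{(T_f^1+1)}=\Theta(\log(K/\delta))$, whereas Phase~I actually yields $\Theta(\log K)$; this does not change the order of $T_f^*$, but the bookkeeping should be stated consistently.
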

The proof of \Cref{prop:stageII} is given in Appendix~\ref{proof_prop:stageII}.
According to \Cref{prop:stageII}, if the Lipschitz constant is sufficiently small such that $L\leq \Theta(\frac{1}{\Delta\delta})$, then by the end of Phase II, the attention score satisfies $1-\mathrm{Attn}_k^{(T_f^*+1)}=\mathcal{O}\left(\frac{\epsilon\delta}{1+\epsilon\delta}\right)=\mathcal{O}(\epsilon)$, indicating that the transformer has converged. 


\subsection{Phase Transition under Sharp $L$-Regime}
In the sharp curvature regime where $L=\Omega(\frac{1}{\Delta\delta})$, the update $q_k^{(t)}$ increases at a rate of $\Theta(\frac{\eta}{K})$ for all $t\leq T_f^*$.
Let $T_s^1$ denote the duration of Phase I in this regime.
Since this early-stage growth (Phase I) mirrors the dynamics under the flat $L$-regime before $T_f^*$, we have $T_s^1=T_f^*=\Theta(\frac{K\log(K\epsilon^{-1})}{\eta\delta^2 L^2\Delta^2})$.
However, this initial Phase is insufficient for achieving convergence due to the residual error term proportional to $L\cdot \Delta$ in \cref{loss_attn}.
Consequently, training transitions into a second phase (Phase III), during which both gradient terms $g_k^{(t)}$ and $g_{k,k'}^{(t)}$ become small. This leads to a slower growth rate of $q_k^{(t)}$ compared to the earlier phase.
We characterize this slower training phase as follows.

\begin{proposition}[Phase III: Slow growth of $q_k^{(t)}$ under sharp $L$-regime]\label{prop:stageIII}
If $L=\Omega(\frac{1}{\Delta\delta})$, then for any $t\in\{T_s^1+1,\cdots, T_s^*\}$ and $0<\epsilon<1$, where $T_s^*=\Theta(\frac{K\log(K L\Delta\epsilon^{-1})}{\eta\epsilon\delta^2 L^2\Delta^2})$, $q_k^{(t)}$ increases at a rate of $\Theta(\frac{\eta \delta^2L^2\Delta^2\epsilon}{K})$. Meanwhile, $q_{k,k'}^{(t)}$ fluctuates at a slower rate of $\mathcal{O}(\frac{\eta \delta^2L^2\Delta^2\epsilon}{K^2})$.
By the end of Phase II (i.e. $t=T_s^*+1$), we have $\mathrm{Attn}_k^{(T_s^*+1)}=\Omega(\frac{1}{1+\epsilon\delta})$.
\end{proposition}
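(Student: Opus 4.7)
The plan is to extend the Phase-I analysis, which by assumption mirrors the flat-regime Phase-II dynamics up to $t=T_s^1$, into a second, slower phase specific to the sharp $L$-regime. The argument proceeds by induction on the time step $t$ within the window $[T_s^1+1,\,T_s^*]$, tracking the coupled evolution of the on-feature weight $q_k^{(t)}$ and each off-diagonal weight $q_{k,k'}^{(t)}$ via the gradient expressions supplied by \Cref{lemma:gradient_alpha_beta}. First, I would instantiate the initial conditions at $t=T_s^1+1$ inherited from Phase I, namely $1-\mathrm{Attn}_k^{(T_s^1+1)}=\mathcal{O}(\epsilon\delta)$ together with a uniform cap on $\mathrm{Attn}_{k'}^{(T_s^1+1)}$ for $k'\neq k$. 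Because $L=\Omega(1/(\Delta\delta))$, plugging these into \eqref{loss_attn} still leaves the loss above $\epsilon^2$ by a factor that scales with $L\Delta\delta$, which is exactly what forces a second phase.

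Next, I would substitute the inductive hypothesis $\mathrm{Attn}_k^{(t)}=\Omega(1/(1+\epsilon\delta))$ into \eqref{alpha_k(t)}. With $\mathrm{Attn}_k^{(t)}=\Theta(1)$ close to $1$, the dominant factor is $(1-\mathrm{Attn}_k^{(t)})^2$; combined with the $\Theta(1/K)$ mass of the event $\{x_{\mathrm{query}}=v_k\}$ and a careful book-keeping of which configurations in the concentration set $\mathcal{E}^*$ contribute positively, the per-step increment of $q_k^{(t)}$ is sandwiched at $\Theta(\eta\delta^2 L^2\Delta^2\epsilon/K)$. The parallel computation from \eqref{beta_k(t)} shows $|g_{k,k'}^{(t)}|$ carries an additional $\mathrm{Attn}_{k'}^{(t)}=\mathcal{O}(1/K)$ factor, which furnishes the claimed $\mathcal{O}(\eta\delta^2 L^2\Delta^2\epsilon/K^2)$ fluctuation rate. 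The sign of $g_{k,k'}^{(t)}$ depends on whether $f(v_k)>f(v_{k'})$ or the reverse for the randomly drawn $f$, so $q_{k,k'}^{(t)}$ is not monotone; I would absorb this via a telescoping or martingale-style bound showing that the accumulated deviation over all of Phase II is $\mathcal{O}(\log(KL\Delta\epsilon^{-1})/K)$, which is negligible compared to the monotone growth of $q_k^{(t)}$.

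To close the induction I would convert the linear-in-$t$ growth of $q_k^{(t)}$ back into an attention bound through the softmax identity
\[
\mathrm{Attn}_k^{(t+1)} \;\propto\; \sum_{i:\,x_i=v_k}\exp\!\big(q_k^{(t+1)}\big)\Big/\sum_{k'}\sum_{i:\,x_i=v_{k'}}\exp\!\big(q_{k,k'}^{(t+1)}\big),
\]
combined with the concentration $|\mathcal{V}_k|\in[(p_k-\delta)N,(p_k+\delta)N]$ from \eqref{|V_k|}. After $T_s^*-T_s^1=\Theta\!\big(K\log(KL\Delta\epsilon^{-1})/(\eta\epsilon\delta^2 L^2\Delta^2)\big)$ iterations, the cumulative increment of $q_k^{(t)}$ reaches $\Theta(\log(KL\Delta\epsilon^{-1}))$, which is precisely the logarithmic margin needed to push $\mathrm{Attn}_k^{(T_s^*+1)}$ to $\Omega(1/(1+\epsilon\delta))$ while keeping the perturbation from $q_{k,k'}^{(t)}$ negligible inside the denominator.

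The main obstacle I anticipate is uniformly controlling the oscillatory off-diagonal weights throughout a phase whose length scales inversely with $\epsilon$. Because $g_{k,k'}^{(t)}$ can change sign across iterations depending on which $f$ is drawn, one cannot argue monotone decay as in Phase I; instead one must show that the expected sign cancellation yields an $\tilde{\mathcal{O}}(1/K)$ cap on $|q_{k,k'}^{(t)}|$ that persists over the whole window. The non-degenerate $L$-Lipschitz condition in \eqref{task_distribution} is crucial here, since it guarantees that for each $v_k$ there is at least one $v_{k'}$ with $|f(v_k)-f(v_{k'})|=\Theta(L)\|v_k-v_{k'}\|$, which in turn keeps $g_k^{(t)}$ strictly bounded below at the stated rate even as $(1-\mathrm{Attn}_k^{(t)})$ shrinks.
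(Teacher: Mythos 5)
Your plan follows the same route as the paper: start from the flat-regime Phase-II endpoint $1-\mathrm{Attn}_k^{(T_s^1+1)}=\mathcal{O}(\epsilon\delta)$ (the paper identifies $T_s^1=T_f^*$), plug these attention magnitudes into the gradient expressions of \Cref{lemma:gradient_alpha_beta} to get per-step increments, accumulate them over the window to obtain $q_k = \Theta(\log(KL\Delta\epsilon^{-1}))$, and finally convert back to $\mathrm{Attn}_k$ via the softmax ratio together with the $|\mathcal{V}_k|$ concentration. Two small notes. First, the martingale/telescoping machinery you anticipate needing for $q_{k,k'}^{(t)}$ is not actually required: the paper simply upper-bounds $|g_{k,k'}^{(t)}|=\mathcal{O}(\eta\delta^2L^2\Delta^2\epsilon/K^2)$ and accumulates the absolute values, and since the phase length is $\Theta(K\log(\cdot)/(\eta\epsilon\delta^2L^2\Delta^2))$, this worst-case accumulation already caps $|q_{k,k'}^{(t)}|$ at $\mathcal{O}(\log(\cdot)/K)$, which is negligible next to $q_k^{(t)}$; no sign-cancellation argument is needed. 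Second, be careful with the exponent on $\epsilon$: substituting $1-\mathrm{Attn}_k^{(t)}=\Theta(\epsilon\delta)$ into \eqref{alpha_k(t)} gives $(1-\mathrm{Attn}_k^{(t)})^2 = \Theta(\epsilon^2\delta^2)$, so the raw per-step increment computation yields $\Theta(\eta\epsilon^2\delta^2L^2\Delta^2/K)$, not $\Theta(\eta\epsilon\delta^2L^2\Delta^2/K)$ as you (and the proposition's statement) assert — you would need to justify why the residual $1-\mathrm{Attn}_k^{(t)}$ is only $\Theta(\sqrt{\epsilon}\,\delta)$ during most of the window (it improves from $\mathcal{O}(\delta)$ to $\mathcal{O}(\epsilon\delta)$ over the phase), rather than taking the endpoint value as a standing inductive hypothesis; otherwise the claimed rate does not follow from the stated bound.
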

The proof of \cref{prop:stageIII} is given in Appendix~\ref{proof_prop:stageIII}.
To ensure convergence of the prediction loss, the update step size of $q_k^{(t)}$ decreases to an order dependent on $\epsilon$ in Phase II.
This is because, under the sharp $L$-regime where $\delta^2L^2\Delta^2 = \Omega(1)$, the gradients become large, as established in \Cref{lemma:gradient_alpha_beta}.
This phase transition further supports the convergence guarantee in \Cref{thm:convergence}, demonstrating that even under sharp curvature, the attention mechanism gradually concentrates on the correct feature vector, ultimately enabling accurate prediction.



\section{Experiments Verification}\label{section6}
We adopt the data and task distributions from \Cref{section3-1}.
Each data point is sampled from a fixed feature set ${v_k \in \mathbb{R}^d, k=1,\dots,K}$, with each feature $v_k$ chosen uniformly at random, i.e., $p_k = 1/K$. Each task involves learning a cosine function of the form $f(x) = \frac{L}{c} \cdot \text{cos}(c\cdot x)$, where $c>0$ is a random constant and $L$ is the Lipschitz constant, satisfying \cref{task_distribution}.
Each prompt consists of $N$ randomly sampled inputs $\{x_i\}_{i=1}^N$ and corresponding outputs $\{y_i\}_{i=1}^N=\{f(x_i)\}_{i=1}^N$, along with a query token $x_{\mathrm{query}}$. We set the parameters as follows: $d=15$, $K=4$, $N=100$, $c=0.5$, and $\Delta=3$. We generate $M=300$ prompts and train the model for $T=400$ epochs. 
Appendix~\ref{supplement_experiment} presents additional experiments, including attention map dynamics, robustness check with non-uniform feature frequencies, and polynomial-function tasks.

We analyze a simplified transformer model comprising a single block with one-head self-attention and a feedforward network, incorporating layer normalization and ReLU activation, followed by a linear output layer.
Our analysis focuses on two key metrics shown in Figure 1: (1) prediction loss dynamics and (2) attention score evolution, evaluated under flat ($\{0.1, 0.2, 0.4\}$) and sharp ($\{1.0, 1.5, 2.0\}$) curvature regimes. The prediction loss is computed as the average squared loss over prompts containing query token $v_k$. For attention scores, we track $v_1$'s self-attention score $\mathrm{Attn}_1^{(t)}$ and other features' attention scores $\mathrm{Attn}_{k,1}^{(t)}$ ($k\in \{2,3,4\}$) on $v_1$ at each epoch.

\begin{figure}[t]
    \centering
    \subfigure[Prediction loss for flat $L$-set.]{\label{subfig:loss_smallL}\includegraphics[width=0.25\textwidth]{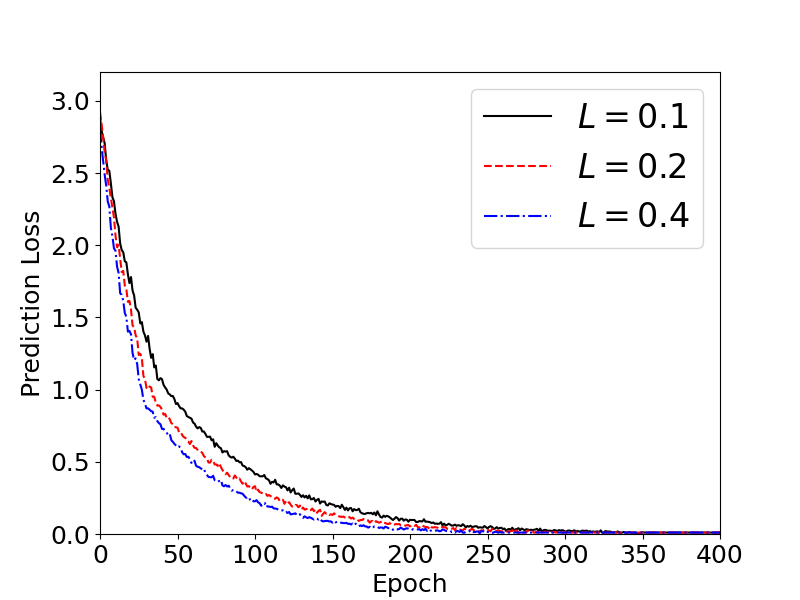}} 
    \subfigure[Prediction loss for sharp $L$-set.]{\label{subfig:loss_largeL}\includegraphics[width=0.25\textwidth]{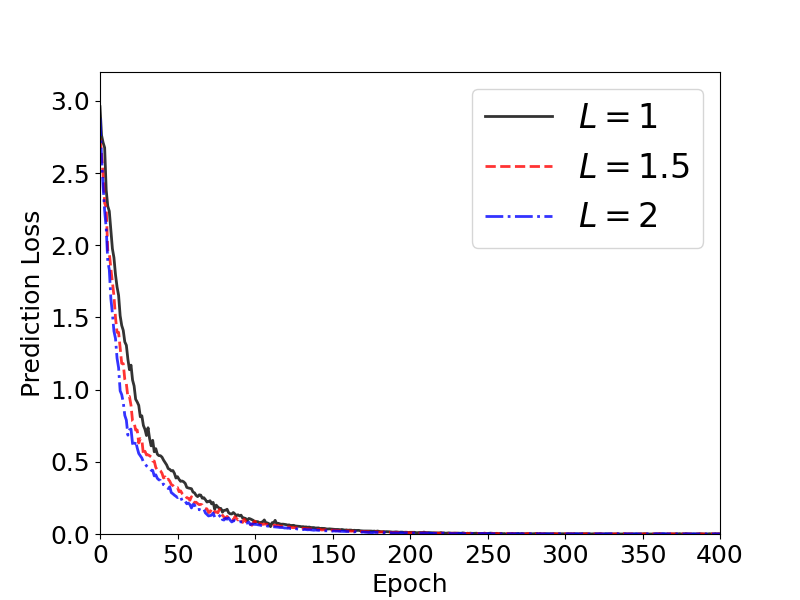}} 
    \subfigure[Attention scores for flat $L=0.1$.]{\label{subfig:Attn_smallL}\includegraphics[width=0.23\textwidth]{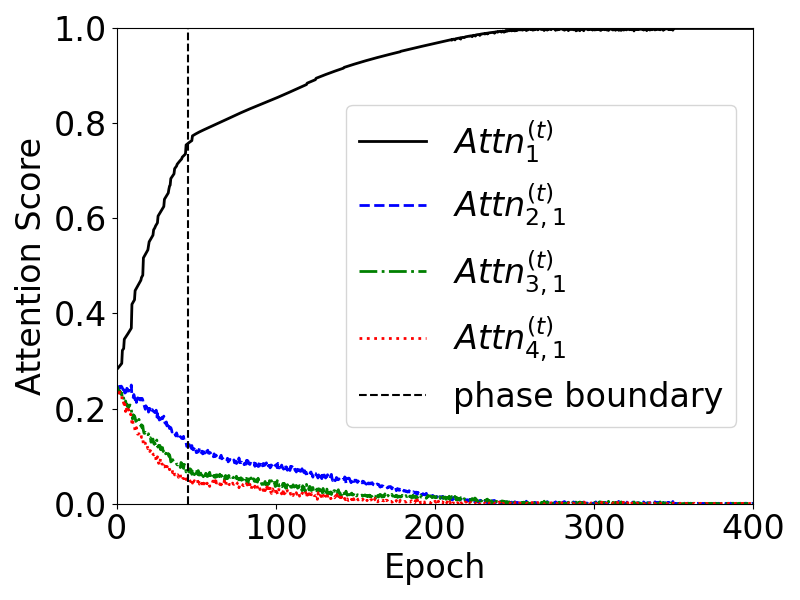}}
    \subfigure[Attention scores for sharp $L=1$.]{\label{subfig:Attn_largeL}\includegraphics[width=0.23\textwidth]{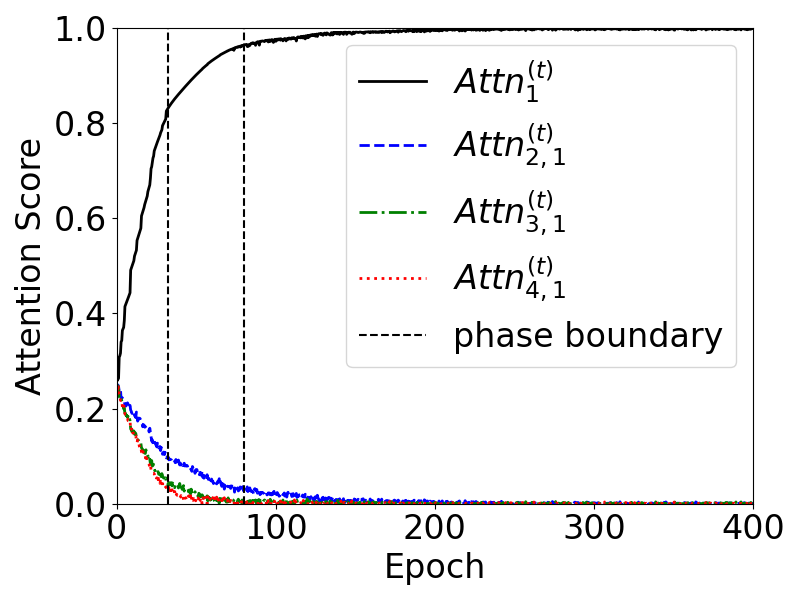}}
    \vspace{-0.2cm}
    \caption{Training dynamics of prediction losses (top) and attention scores (bottom) for two sets of $L$: flat ($\{0.1, 0.2, 0.4\}$) and sharp ($\{1.0, 1.5, 2.0\}$).}
    \label{fig:Loss_attn}
\end{figure}

For flat $L$-regime, as shown in \Cref{subfig:loss_smallL} and \Cref{subfig:Attn_smallL}, we observe two distinct training phases.
For example, with $L=0.1$, the prediction loss rapidly decreases to around $1.0$ by epoch $t=40$, driven by increasing attention on the target feature shown in \Cref{subfig:Attn_smallL}. 
Subsequently, it steadily declines to near zero by $t=250$. At the same time, $\mathrm{Attn}_1^{(t)}$ approaches $1$ under the transformer parameter $\theta$. 
The convergence time shortens with increasing $L$ due to stronger gradient updates, consistent with \Cref{thm:convergence_case1}, \Cref{prop_convergence}, \Cref{prop:stageI}, and \Cref{prop:stageII}.

As depicted in \Cref{subfig:loss_largeL} and \Cref{subfig:Attn_largeL}, the sharp $L$-regime exhibits different three training phases. 
Specifically, when $L=1$, the prediction loss drops rapidly to approximately $0.6$ by $t=30$, then decreases steadily to around $0.1$ by $t=110$. After that, it converges slowly toward $0$ by $t=200$. The dynamics of the attention scores in \Cref{subfig:Attn_largeL} exhibit the same three-phase progression. These empirical observations are consistent with our theoretical predictions in \Cref{thm:convergence} and \Cref{prop:stageIII}.

\section{Conclusions and Limitations}\label{section7}

We presented provable results showing how transformers can learn a broad family of nonlinear tasks in context, identifying two distinct training regimes governed by task curvature. These findings illuminate the basic mechanisms—attention concentration and curvature-dependent gradient dynamics—that underpin ICL. While our analysis makes simplifying assumptions (single head, balanced prompts), it nonetheless captures the essential inductive biases of transformers. By revealing how Lipschitz continuity and feature separation jointly determine convergence and generalization, our theory offers testable predictions for real-world settings (e.g., effects of task smoothness and feature geometry) and provides a principled starting point for extending formal guarantees to more realistic transformer architectures. Future work can relax our assumptions, explore multi-head and multi-layer settings, and examine how richer token distributions interact with the phase transition we identify.



\newpage
\appendix

\bigskip
\begin{center}
{\Huge
\textsc{Appendix}
}
\end{center}
\bigskip
\startcontents[section]
{
\hypersetup{hidelinks}
\printcontents[section]{l}{1}{\setcounter{tocdepth}{2}}
}

\newpage

\vspace{1cm}

In the appendices, we first present additional experimental results in Appendix~\ref{supplement_experiment}. Then we present detailed proofs of \Cref{lemma:loss} and \Cref{lemma:gradient_alpha_beta} in Appendix~\ref{proof_lemma1} and Appendix~\ref{proof_lemma_gradient_alpha_beta}. For ease of exposition, before establishing the two main theorems, we separately analyze the convergence dynamics under both the flat $L$-regime and the sharp $L$-regime.

Specifically, following the proof of \Cref{lemma:loss}, we prove \Cref{prop:stageI} (see Appendix~\ref{proof_prop:stageI}) and \Cref{prop:stageII} (see Appendix~\ref{proof_prop:stageII}) as preliminaries for the proof of \Cref{thm:convergence_case1} in the flat $L$-regime (see Appendix~\ref{proof_thm:convergence_case1}).

We then proceed to prove \Cref{prop:stageIII} (see Appendix~\ref{proof_prop:stageIII}) as a preliminary for \Cref{thm:convergence} (see Appendix~\ref{proof_thm:convergence}) in the sharp $L$-regime.

Finally, based on \Cref{thm:convergence_case1} and \Cref{thm:convergence}, we prove \Cref{prop_convergence} (see Appendix~\ref{proof_prop_convergence}).

\section{Additional Experimental Results}\label{supplement_experiment}

Building on the prediction loss and attention dynamics shown in \Cref{fig:Loss_attn}, we provide a more detailed analysis of the $4\times 4$ attention maps in this appendix. \Cref{fig:smallL_attention_map} and \Cref{fig:largeL_attention_map} show the attention patterns for the flat ($L=0.1$) and sharp ($L=1$) loss landscapes, respectively, under the experimental conditions of \Cref{section6}.

\begin{figure}[htbp]
    \centering
    \includegraphics[width=\textwidth]{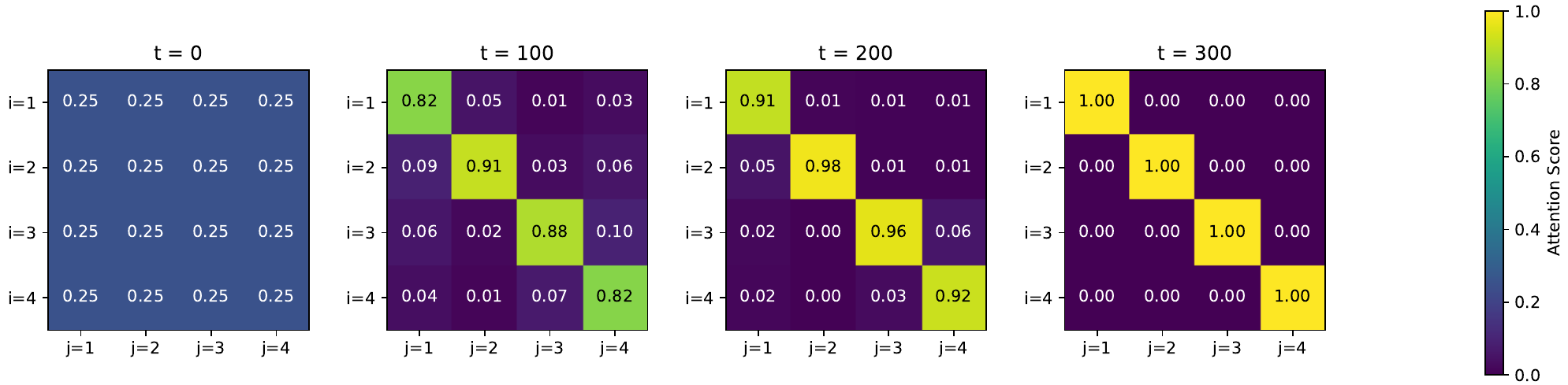}
    \caption{Dynamics of attention maps under flat $L=0.1$}
    \label{fig:smallL_attention_map}
\end{figure}

\begin{figure}[htbp]
    \centering
    \includegraphics[width=\textwidth]{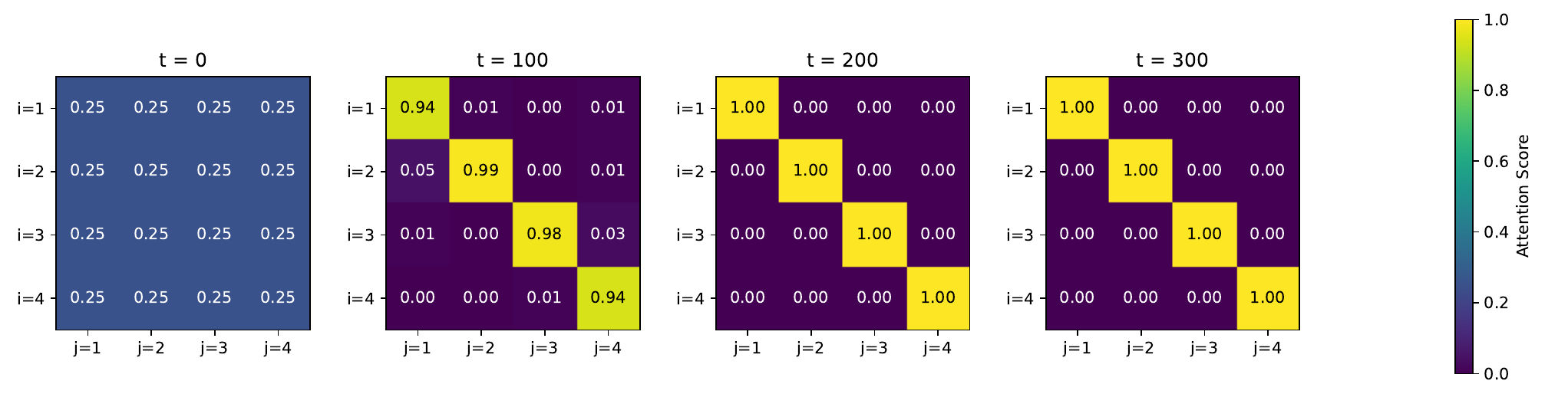}
    \caption{Dynamics of attention maps under sharp $L=1$}
    \label{fig:largeL_attention_map}
\end{figure}

In the attention maps, each grid position $(i,j)$ corresponds to the attention score $\text{Attn}_{i,j}$, where $i,j\in\{1,2,3,4\}$ are the indices of key and query, respectively. An important property is that the sum of attention scores for each column equals one ($\sum_{i}\text{Attn}_{i,j}=1$), given the same query $v_j$.
The attention maps show a clear trend: the diagonal entries $(i,i)$ become progressively darker over $t$, indicating that the corresponding self-attention scores $\text{Attn}_i$ increase for all $i\in\{1,2,3,4\}$.
We observe these scores approaching $1$ before $t=300$ for the flat landscape $L=0.1$ and before $t=200$ for the sharp landscape $L=1$.
In contrast, the off-diagonal entries $(i,j)$, where $i\neq j$, become lighter, with $\text{Attn}_{i,j}$ converging toward zero. 
This behavior supports the theoretical findings presented in \Cref{prop:stageI} and \Cref{prop:stageII} and is directly reflected in the attention score dynamics plotted in \Cref{subfig:Attn_smallL} and \Cref{subfig:Attn_largeL}.

\begin{figure}[htbp]
    \centering
    \subfigure[Prediction loss for flat $L$-set under polynomial functions.]{\label{subfig:poly_smallL}\includegraphics[width=0.45\textwidth]{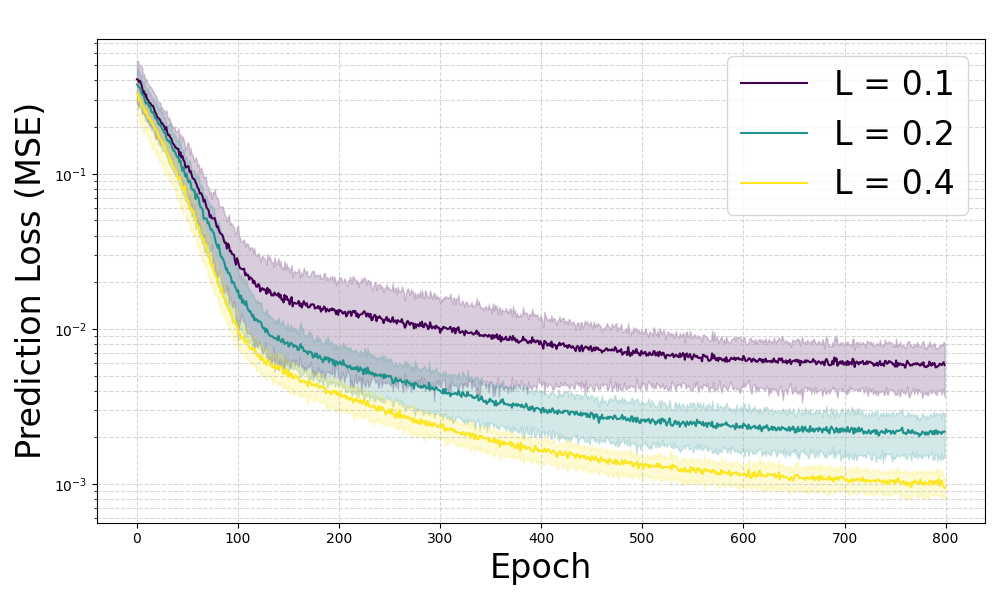}} 
    \hspace{0.2cm}
    \subfigure[Prediction loss for sharp $L$-set under polynomial functions.]{\label{subfig:poly_largeL}\includegraphics[width=0.45\textwidth]{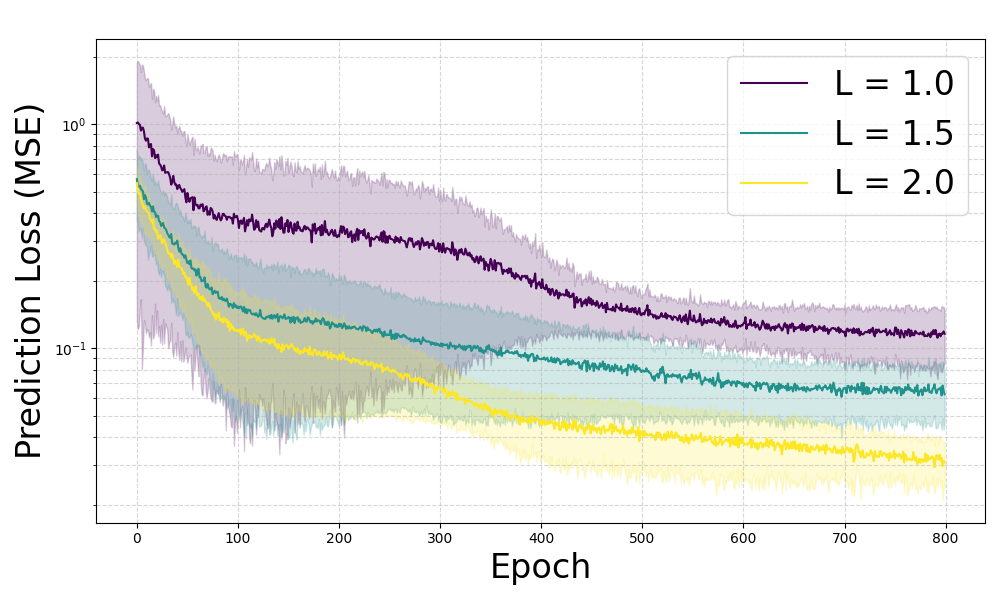}} 
    \caption{Training dynamics of prediction losses for two sets of $L$: flat ($\{0.1, 0.2, 0.4\}$) and sharp ($\{1.0, 1.5, 2.0\}$) under polynomial functions.}
    \label{fig:Loss_poly}
\end{figure}

To further validate the generality of our theoretical results, we conducted an additional experiment on a different class of nonlinear functions: random second-degree polynomials of the form $f(x)=x^\top Ax+b^\top x$, where the matrix $A$ and vector $b$ were randomly generated for each task instance. The elements of $A$ and $b$ satisfy $A_{ij},b_i\sim \mathcal{N}(0,(\frac{L}{d})^2)\cdot$. 
For this experiment, we relaxed the finite feature set assumption by sampling input features from a continuous Gaussian distribution. In other words, there can be infinitely many feature vectors. For the other parameters, we set $d=30,N=150, M=500$. The prediction loss, averaged over $30$ independent runs, is shown below for both the flat $L$-regime and the sharp $L$-regime. The results strongly corroborate our main findings: The flat regime has two training phases, while the sharp regime has three phases. Within both regimes, a larger Lipschitz constant $L$ consistently results in faster convergence of the prediction loss. This provides further evidence that the identified training dynamics and the role of the Lipschitz constant hold for a broader class of nonlinear functions beyond the trigonometric family, and also hold for more general feature set and token sampling process. 

\section{Proof of \texorpdfstring{\Cref{lemma:loss}}{Lemma 1}}\label{proof_lemma1}

Recall the definition of the prediction error $\mathcal{L}(P;Q)$ in \cref{prediction_loss}:
\begin{align*}
    \mathcal{L}(P;Q) = & \frac{1}{2} \sum_{k=1}^K \mathbb{E}\left[ \mathds{1}\{x_{\mathrm{query}} = v_k\} (\hat{y}_{\mathrm{query}} - f(v_k))^2 \right] \\
    = & \frac{1}{2} \sum_{k=1}^K \mathbb{E}\left[ \mathds{1}\{x_{\mathrm{query}} = v_k\} \left(\sum_{n \in [K]} \mathrm{Attn}_n^{(t)} f(v_n) - f(v_k)\right)^2 \right] \\
    = & \frac{1}{2} \sum_{k=1}^K \mathbb{E}\left[ \mathds{1}\{x_{\mathrm{query}} = v_k\} \left(\sum_{n \in [K]} \mathrm{Attn}_n^{(t)} (f(v_n) - f(v_k))\right)^2 \right] \\
    = & \frac{1}{2} \sum_{k=1}^K \mathbb{E}\left[ \mathds{1}\{x_{\mathrm{query}} = v_k\} \left(\sum_{n \neq k} \mathrm{Attn}_n^{(t)} (f(v_n) - f(v_k))\right)^2 \right]
\end{align*}
where we apply $\sum_{n \in [K]} \mathrm{Attn}_n^{(t)} = 1$ to rewrite $f(v_k)$ as $\sum_{n \in [K]} \mathrm{Attn}_n^{(t)} f(v_k)$ in the third equality.

Next, under the non-degenerate $L$-Lipschitz condition of the function class (see \cref{task_distribution}), we have $\sum_{n\neq k}|f(v_n) - f(v_k)| = \mathcal{O}(L\Delta)$. Therefore, the sum can be bounded (order-wise) as
\[
\left| \sum_{n \neq k} \mathrm{Attn}_n^{(t)} (f(v_n) - f(v_k)) \right| = (1 - \mathrm{Attn}_k^{(t)}) \cdot \mathcal{O}(L\Delta)
\]
and thus
\[
\left(\sum_{n \neq k} \mathrm{Attn}_n^{(t)} (f(v_n) - f(v_k))\right)^2 = (1 - \mathrm{Attn}_k^{(t)})^2 \cdot \mathcal{O}(L^2\Delta^2).
\]

Putting these together, we obtain:
\[
\mathcal{L}(P;Q) = \frac{1}{2} \sum_{k=1}^K \mathbb{E}\left[ \mathds{1}\{x_{\mathrm{query}} = v_k\} (1 - \mathrm{Attn}_k^{(t)})^2 \cdot \mathcal{O}(L^2\Delta^2) \right].
\]
This completes the proof of \Cref{lemma:loss}.

\section{Proof of \texorpdfstring{\Cref{lemma:gradient_alpha_beta}}{Lemma 2}}\label{proof_lemma_gradient_alpha_beta}

To derive $g_k$, we first compute the gradient of prediction loss with respect to $Q^{(t)}$ as follows: 
\begin{align*}
    \nabla_{Q^{(t)}} \mathcal{L}=\mathbb{E}\Big[(\hat{y}_{{\text{query}}}-f(v_k))^\top\frac{\partial \hat{y}_{{\text{query}}}}{\partial Q^{(t)}}\Big]=\mathbb{E}\Big[(\hat{y}_{{\text{query}}}-f(v_k))^\top\sum_{i\in[N]}\frac{\partial \text{attn}_i}{\partial Q^{(t)}}y_i\Big].
\end{align*}

We then compute the gradient of $\text{attn}_i^{(t)}$ with respect to $Q^{(t)}$: 
\begin{align*}
    \frac{\partial \text{attn}^{(t)}_i}{\partial Q^{(t)}}=&\frac{e^{{\bar{X}_i}^\top Q^{(t)} x_{\text{query}}}\cdot{\bar{X}_i}^\top x_{\text{query}}^\top(\sum_{j\in[N]} e^{{\bar{X}_j}^\top Q^{(t)} x_{\text{query}}})}{(\sum_{j\in[N]} e^{{\bar{X}_j }^\top Q^{(t)} x_{\text{query}}})^2}\\&-\frac{\sum_{j\in[N]}e^{{\bar{X}_j }^\top Q^{(t)} x_{\text{query}}}\cdot{\bar{X}_j }^\top x_{\text{query}}^\top \cdot e^{{\bar{X}_i}^\top Q^{(t)} x_{\text{query}}}}{(\sum_{j\in[N]} e^{{\bar{X}_j }^\top Q^{(t)} x_{\text{query}}})^2}\\
    =&\text{attn}^{(t)}_i \cdot {\bar{X}_i}^\top x_{\text{query}} - \text{attn}^{(t)}_i \sum_{j\in[N]}\text{attn}^{(t)}_j\cdot \bar{X}_j^\top x_{\text{query}}\\
=&\text{attn}^{(t)}_i\sum_{j\in[N]}\text{attn}^{(t)}_j(\bar{X}_i-\bar{X}_j){x_{\text{query}}}^\top.
\end{align*}

Substituting into the above expression gives:
\begin{align*}
     \nabla_{Q^{(t)}} \mathcal{L}=\mathbb{E}\Big[(\hat{y}_{{\text{query}}}^{(t)}-f(v_k))\sum_{i,j\in[N]}\text{attn}_i^{(t)}\text{attn}^{(t)}_j(\bar{X}_i-\bar{X}_j) x_{\text{query}}^\top y_i\Big].
\end{align*}

For any feature vectors $v_k$ and $v_{k'}$, we calculate
\begin{align*}
    v_{k'}^\top \nabla_{Q^{(t)}} \mathcal{L}\cdot v_k =& \mathbb{E}\Big[\mathds{1}\{x_{\text{query}}=v_k\}(\hat{y}_{\text{query}}-f(v_k))\sum_{i,j\in[N]}\text{attn}^{(t)}_i\text{attn}_j^{(t)}y_iv_{k'}^{\top}(\bar{X}_i-\bar{X}_j)\Big]\\
    =&\mathbb{E}\Big[\mathds{1}\{x_{\text{query}}=v_k\}(\hat{y}_{\text{query}}-f(v_k))\sum_{m,n\in[K]}\sum_{i\in\mathcal{V}_m}\sum_{j\in\mathcal{V}_n}\text{attn}_i^{(t)}\text{attn}_j^{(t)}y_iv_{k'}^\top (v_m-v_n)\Big]\\
    =&\mathbb{E}\Big[\mathds{1}\{x_{\text{query}}=v_k\}(\hat{y}_{\text{query}}-f(v_k))\sum_{n\in[K]}\sum_{i\in\mathcal{V}_{k'}}\sum_{j\in\mathcal{V}_n}\text{attn}^{(t)}_i\text{attn}^{(t)}_jy_iv_{k'}^\top (v_{k'}-v_n)\Big]\\&+\mathbb{E}\Big[\mathds{1}\{x_{\text{query}}=v_k\}(\hat{y}_{\text{query}}-f(v_k))\sum_{m\in[K]}\sum_{i\in\mathcal{V}_m}\sum_{j\in\mathcal{V}_{k'}}\text{attn}^{(t)}_i\text{attn}^{(t)}_jy_iv_{k'}^\top (v_m-v_{k'})\Big]\\
    =&\mathbb{E}\Big[\mathds{1}\{x_{\text{query}}=v_k\}(\hat{y}_{\text{query}}-f(v_k))\text{Attn}^{(t)}_{k'} f(v_{k'})\sum_{n\in[K]}\text{Attn}^{(t)}_{n}\Big]\\ 
    &-\mathbb{E}\Big[\mathds{1}\{x_{\text{query}}=v_k\}(\hat{y}_{\text{query}}-f(v_k))\text{Attn}^{(t)}_{k'} \sum_{m\in[K]}\text{Attn}^{(t)}_{m}f(v_{m})\Big]\\
    =&\mathbb{E}\Big[\mathds{1}\{x_{\text{query}}=v_k\}(\hat{y}_{\text{query}}-f(v_k))\text{Attn}^{(t)}_{k'} \sum_{m\in[K]}\text{Attn}^{(t)}_{m}(f(v_{k'})-f(v_m))\Big].
\end{align*}

Since $\hat{y}_{\text{query}}=\sum_{i\in[N]}\text{attn}_iy_i=\sum_{m\in[K]}\text{Attn}^{(t)}_m f(v_m)$, we obtain
\begin{align*}
    v_{k'}^\top \nabla_{Q^{(t)}} \mathcal{L} v_k &=-\mathbb{E}\Big[\mathds{1}\{x_{\text{query}}=v_k\}\text{Attn}^{(t)}_{k'} \sum_{n\in[K]}\sum_{m\in[K]}\text{Attn}^{(t)}_{m}\text{Attn}^{(t)}_{n}(f(v_{k})-f(v_n))(f(v_{k'})-f(v_{m}))\Big]\\
    &=-\mathbb{E}\Big[\mathds{1}\{x_{\text{query}}=v_k\}\text{Attn}^{(t)}_{k'} (f(v_k)-\sum_{n\in[K]}\text{Attn}^{(t)}_{n}f(v_n))(f(v_{k'})-\sum_{m\in[K]}\text{Attn}^{(t)}_m f(v_m))\Big].
\end{align*}

We then derive \cref{alpha_k(t)} by letting $v_k=v_{k'}$ using this expression:
\begin{align*}
    g_k^{(t)}&=-v_k^\top \nabla_{Q^{(t)}} \mathcal{L}\cdot v_k \\&= \mathbb{E}\left[\mathds{1}\{x_{\mathrm{query}}=v_k\} \, \mathrm{Attn}_k^{(t)} \left(f(v_k) - \sum_{n\in[K]} \mathrm{Attn}_n^{(t)} f(v_n)\right)^2\right] \\
    &= \mathbb{E}\left[\mathds{1}\{x_{\mathrm{query}}=v_k\} \, \mathrm{Attn}_k^{(t)} \left(\sum_{n\in[K]} \mathrm{Attn}_n^{(t)} (f(v_k) - f(v_n))\right)^2\right] \\
    &= \mathbb{E}\left[\mathds{1}\{x_{\mathrm{query}}=v_k\} \, \mathrm{Attn}_k^{(t)} (1 - \mathrm{Attn}_k^{(t)})^2 \, \Theta(L^2 \Delta^2)\right],
\end{align*}
where the last equality follows from the non-degenerate $L$-Lipschitz condition in \cref{task_distribution}.

For any $k\neq n$, we calculate
\begin{align*}
    |g_{k,k'}^{(t)}|&=\mathbb{E}\Big[\mathds{1}\{x_{\mathrm{query}}=v_k\}\mathrm{Attn}_{k'}^{(t)} |f(v_k)-\sum_{j\in[K]}\mathrm{Attn}_{j}^{(t)}f(v_j)|\cdot |f(v_{k'})-\sum_{m\in[K]}\mathrm{Attn}_m^{(t)} f(v_m)|\Big]\\
    &=p_k\cdot \mathbb{E}\Big[\mathrm{Attn}_{k'}^{(t)} \Big|\sum_{j\in[K]}(\mathrm{Attn}_{j}^{(t)}f(v_k)-\mathrm{Attn}_{j}^{(t)}f(v_j))\Big|\cdot \Big|\sum_{m\in[K]}(\mathrm{Attn}_m^{(t)}f(v_{k'})-\mathrm{Attn}_m^{(t)} f(v_m))\Big|\Big]\\
    &=p_k\cdot \mathbb{E}\Big[\mathrm{Attn}_{k'}^{(t)} \cdot\sum_{j\in[K]}\mathrm{Attn}_{j}^{(t)}|f(v_k)-f(v_j)|\cdot \sum_{m\in[K]}\mathrm{Attn}_m^{(t)}|f(v_{k'})-f(v_m)|\Big]\\
    &=p_k\cdot \mathbb{E}\left[\mathrm{Attn}_{k'}^{(t)} \cdot (1-\mathrm{Attn}_{k}^{(t)} )\cdot (1-\mathrm{Attn}_{k'}^{(t)} )\cdot \Theta(L^2\Delta^2)\right].
\end{align*}
This completes the proof of \cref{lemma:gradient_alpha_beta}.

\section{Proofs for Flat $L$-Regime}
\subsection{Proof of \texorpdfstring{\Cref{prop:stageI}}{Proposition 2}}\label{proof_prop:stageI}
\begin{lemma}\label{lemma:phaseI_attk}
For any $t \in \{1, \ldots, T_f^1\}$, if $x_{\mathrm{query}} = v_k$, we have:
\begin{itemize}
    \item $\mathrm{Attn}_k^{(t)} = \Omega\left(\frac{1}{K}\right)$,
    \item $1 - \mathrm{Attn}_k^{(t)} = \Theta(1)$,
    \item $\mathrm{Attn}_n^{(t)} = \Theta\left(\frac{1 - \mathrm{Attn}_k^{(t)}}{K}\right) = \Theta\left(\frac{1}{K}\right)$ for all $n \neq k$.
\end{itemize}
\end{lemma}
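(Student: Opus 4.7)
The approach is a joint induction on $t \in \{0, 1, \ldots, T_f^1\}$ that tracks the three attention bounds together with two auxiliary parameter invariants, $0 \leq q_k^{(t)} \leq \mathcal{O}(\log K)$ and $|q_{k,k'}^{(t)}| = o(1)$. Under the reparameterization and conditional on $x_{\mathrm{query}} = v_k$, the attention scores admit the closed forms
\begin{align*}
\mathrm{Attn}_k^{(t)} = \frac{|\mathcal{V}_k|\, e^{q_k^{(t)}}}{|\mathcal{V}_k|\, e^{q_k^{(t)}} + \sum_{k' \neq k} |\mathcal{V}_{k'}|\, e^{q_{k,k'}^{(t)}}}, \qquad \mathrm{Attn}_n^{(t)} = \frac{|\mathcal{V}_n|\, e^{q_{k,n}^{(t)}}}{|\mathcal{V}_k|\, e^{q_k^{(t)}} + \sum_{k' \neq k} |\mathcal{V}_{k'}|\, e^{q_{k,k'}^{(t)}}},
\end{align*}
and on the concentration event $\mathcal{E}^*$ each $|\mathcal{V}_k|$ equals $\Theta(N/K)$, so every class size is of the same order.

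The base case $t = 0$ is immediate: $Q^{(0)} = 0$ forces uniform per-token attention $1/N$, giving $\mathrm{Attn}_k^{(0)} = \mathrm{Attn}_n^{(0)} = \Theta(1/K)$ and $1 - \mathrm{Attn}_k^{(0)} = \Theta(1)$. For the inductive step, Lemma \ref{lemma:gradient_alpha_beta} yields $g_k^{(t-1)} \geq 0$, so $q_k^{(t)} \geq 0$ throughout; feeding the inductive attention bounds back into Lemma \ref{lemma:gradient_alpha_beta} produces per-step increments of order $\eta g_k^{(t-1)} = \Theta(\eta L^2\Delta^2/K^2) \cdot \mathrm{Attn}_k^{(t-1)}$ and $\eta |g_{k,k'}^{(t-1)}| = \mathcal{O}(\eta L^2\Delta^2/K^3)$, so telescoping over the $T_f^1 = \Theta(K\log K/(\eta L^2\Delta^2))$ iterations bounds $q_k^{(t)}$ by $\mathcal{O}(\log K)$ and $|q_{k,k'}^{(t)}|$ by $\mathcal{O}(\log K/K) = o(1)$. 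Plugging these parameter bounds back into the closed forms yields the three target orders: $\mathrm{Attn}_k^{(t)} \geq |\mathcal{V}_k|/(|\mathcal{V}_k| + (1+o(1))\sum_{k'\neq k}|\mathcal{V}_{k'}|) = \Omega(1/K)$; the upper bound $\mathrm{Attn}_k^{(t)} \leq 1 - \Omega(1)$ follows because $e^{q_k^{(t)}} = \mathcal{O}(K)$ is balanced by the $(K-1)$ non-target terms of comparable magnitude in the denominator; and by near-symmetry across those terms, $\mathrm{Attn}_n^{(t)} = (1 \pm o(1))(1-\mathrm{Attn}_k^{(t)})/(K-1) = \Theta(1/K)$.

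The main obstacle is the self-referential character of the induction: the per-step gradients in Lemma \ref{lemma:gradient_alpha_beta} depend on the very attention scores whose orders we are trying to establish, and Proposition \ref{prop:stageI} cannot be directly invoked because Lemma \ref{lemma:phaseI_attk} is its prerequisite. The remedy is to bundle the attention bounds with the parameter invariants into a single joint invariant and propagate it one gradient step at a time, verifying at each step that the factors $\mathrm{Attn}_k^{(t-1)} = \Omega(1/K)$ and $(1-\mathrm{Attn}_k^{(t-1)})^2 = \Theta(1)$ driving the gradient estimates remain valid at step $t$. A secondary subtlety is ensuring that the decrement of $q_{k,k'}^{(t)}$ does not compound to inflate $\mathrm{Attn}_k$ prematurely, which is handled by the factor-of-$K$ gap between $|g_{k,k'}^{(t-1)}|$ and $g_k^{(t-1)}$ that keeps $|q_{k,k'}^{(t)}|$ a small perturbation throughout Phase I.
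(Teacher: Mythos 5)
Your proposal follows essentially the same route as the paper: the same softmax closed form for $\mathrm{Attn}_k^{(t)}$ and $\mathrm{Attn}_n^{(t)}$, the same concentration bound $|\mathcal{V}_m|=\Theta(N/K)$, and the same parameter-scale invariants ($q_k^{(t)}\ge 0$, $q_k^{(t)}=\mathcal{O}(\log K)$, $|q_{k,m}^{(t)}|=\mathcal{O}(\log K/K)$) over the $T_f^1$ iterations; the paper asserts these implicitly via a ``symmetry property'' $q_{k,m}^{(t)}=\Theta(q_k^{(t)}/K)$ rather than running the joint induction, so your explicit bootstrap is just a more carefully stated version of the same argument. One slip to correct: the per-step increments you quote, $\eta g_k^{(t-1)}=\Theta(\eta L^2\Delta^2/K^2)\cdot\mathrm{Attn}_k^{(t-1)}$ and $\eta|g_{k,k'}^{(t-1)}|=\mathcal{O}(\eta L^2\Delta^2/K^3)$, carry an extra factor of $1/K$ relative to what \Cref{lemma:gradient_alpha_beta} yields under your inductive hypotheses (namely $\Theta(L^2\Delta^2/K)\cdot\mathrm{Attn}_k^{(t-1)}$ and $\mathcal{O}(L^2\Delta^2/K^2)$, matching \Cref{lemma:phaseI_alpha}); since you use these only as upper bounds, the invariants and the three attention estimates of this lemma still go through, but the corrected magnitudes are what allow $q_k$ to reach $\Theta(\log K)$ by the end of Phase~I and to match the growth rate $\Theta(\eta L^2\Delta^2/K)$ claimed in \Cref{prop:stageI}.
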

\begin{proof}
Fix any $t \in \{1, \ldots, T_f^1\}$. By definition,
\begin{align*}
\mathrm{Attn}_k^{(t)} &= \frac{|\mathcal{V}_k| e^{v_k^\top Q^{(t)} v_k}}{\sum_{j \in [N]} e^{E_j^x{}^\top Q^{(t)} v_k}} 
= \frac{|\mathcal{V}_k| e^{q_k^{(t)}}}{\sum_{m \neq k} |\mathcal{V}_m| e^{q_{k,m}^{(t)}} + |\mathcal{V}_k| e^{q_k^{(t)}}} \\
&= \frac{1}{\sum_{m \neq k} \frac{|\mathcal{V}_m|}{|\mathcal{V}_k|} \exp(q_{k,m}^{(t)} - q_k^{(t)}) + 1}.
\end{align*}
By the symmetry property in the initial phase, $q_{k,m}^{(t)} = \Theta\left(\frac{q_k^{(t)}}{K}\right)$. Thus,
\begin{align*}
e^{-(\log K + \Theta(\frac{\log K}{K}))} \leq \exp(q_{k,m}^{(t)} - q_k^{(t)}) \leq e^{\Theta(\frac{\log K}{K})}.
\end{align*}
Define $u_k=K(p_k-\delta)$ and $U_k=K(p_k+\delta)$. Then, from the concentration property (see \cref{|V_k|}), $|\mathcal{V}_k| \in [\frac{u_k}{K} N, \frac{U_k}{K} N]$ for constants $u_k, U_k = \Theta(1)$. Therefore,
\[
\mathrm{Attn}_k^{(t)} \geq \frac{1}{e^{\Theta(\frac{\log K}{K})}(\frac{N}{|\mathcal{V}_k|} - 1) + 1} 
\geq \frac{1}{e^{\Theta(\frac{\log K}{K})}(\frac{K}{u_k} - 1) + 1} = \Omega\left(\frac{1}{K}\right).
\]
For the upper bound,
\[
\mathrm{Attn}_k^{(t)} \leq \frac{1}{e^{-(\log K + \Theta(\frac{\log K}{K}))} (\frac{N}{|\mathcal{V}_k|} - 1) + 1} 
\leq \frac{1}{e^{-1}(\frac{1}{U_k} - \frac{1}{K}) + 1},
\]
which follows because $U_k = \Theta(1)$, and hence
\[
1 - \mathrm{Attn}_k^{(t)} \geq \frac{\frac{1}{U_k} - \frac{1}{K}}{(\frac{1}{U_k} - \frac{1}{K}) + e} = \Theta(1).
\]
The reverse bound is similar, showing $1 - \mathrm{Attn}_k^{(t)} = \Theta(1)$.

For $n \neq k$, by similar calculation,
\[
\mathrm{Attn}_n^{(t)} = \frac{|\mathcal{V}_n| \exp(q_{k,n}^{(t)})}{\sum_{m \neq k} |\mathcal{V}_m| \exp(q_{k,m}^{(t)}) + |\mathcal{V}_k| \exp(q_k^{(t)})},
\]
and since $|\mathcal{V}_m| / |\mathcal{V}_n| = \Theta(1)$ and $\exp(q_{k,m}^{(t)} - q_{k,n}^{(t)}) = e^{O(\frac{\log K}{K})}$,
\[
\frac{\mathrm{Attn}_n^{(t)}}{1 - \mathrm{Attn}_k^{(t)}} = \frac{|\mathcal{V}_n| \exp(q_{k,n}^{(t)})}{\sum_{m \neq k} |\mathcal{V}_m| \exp(q_{k,m}^{(t)})} 
= \Theta\left(\frac{1}{K}\right).
\]
Thus, $\mathrm{Attn}_n^{(t)} = (1 - \mathrm{Attn}_k^{(t)}) \Theta\left(\frac{1}{K}\right) = \Theta\left(\frac{1}{K}\right)$, since $1 - \mathrm{Attn}_k^{(t)} = \Theta(1)$.
\end{proof}

\begin{lemma}\label{lemma:phaseI_alpha}
For any $t\in\{1,\ldots, T_f^1\}$, given $x_{\mathrm{query}} = v_k$, we have:
\begin{itemize}
    \item $g_k^{(t)} = \Theta\left(\frac{L^2\Delta^2}{K}\right)$,
    \item $|g_{k,n}^{(t)}| = O\left(\frac{L^2\Delta^2}{K^2}\right)$ for any $n \neq k$.
\end{itemize}
\end{lemma}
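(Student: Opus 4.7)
The plan is to reduce the lemma to an order-of-magnitude computation by combining \Cref{lemma:gradient_alpha_beta}, which expresses the gradients in closed form in terms of the attention scores, with the Phase I attention-score bounds from \Cref{lemma:phaseI_attk}. The only other ingredient is the probability factor $p_k = \Theta(1/K)$ that arises when one peels off the indicator $\mathds{1}\{x_{\mathrm{query}}=v_k\}$.

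For the self-attention gradient $g_k^{(t)}$, I would start from
\[
g_k^{(t)} = \mathbb{E}\!\left[\mathds{1}\{x_{\mathrm{query}} = v_k\}\, \mathrm{Attn}_k^{(t)}\,(1-\mathrm{Attn}_k^{(t)})^2\,\Theta(L^2\Delta^2)\right],
\]
taken directly from \Cref{lemma:gradient_alpha_beta}. Conditioning on $x_{\mathrm{query}} = v_k$ pulls out a factor $p_k = \Theta(1/K)$. From \Cref{lemma:phaseI_attk}, one has $\mathrm{Attn}_k^{(t)} = \Omega(1/K)$ and $1-\mathrm{Attn}_k^{(t)} = \Theta(1)$, so the attention product behaves as $\Theta(\mathrm{Attn}_k^{(t)})$. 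Multiplying by $p_k$ and the $\Theta(L^2\Delta^2)$ factor then gives $g_k^{(t)} = \Theta(L^2\Delta^2/K)$.

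For the cross-term $|g_{k,n}^{(t)}|$ with $n\neq k$, the analogous expression is
\[
|g_{k,n}^{(t)}| = \mathbb{E}\!\left[\mathds{1}\{x_{\mathrm{query}} = v_k\}\, \mathrm{Attn}_n^{(t)}\,(1-\mathrm{Attn}_k^{(t)})\,(1-\mathrm{Attn}_n^{(t)})\,\Theta(L^2\Delta^2)\right].
\]
The key difference is that \Cref{lemma:phaseI_attk} supplies $\mathrm{Attn}_n^{(t)} = \Theta(1/K)$, which furnishes an additional factor of $1/K$ relative to the self-attention case. Combined with $1-\mathrm{Attn}_k^{(t)} = \Theta(1)$ and $1-\mathrm{Attn}_n^{(t)} = \Theta(1)$ (the latter because $\mathrm{Attn}_n^{(t)}$ is small) and the probability factor $p_k = \Theta(1/K)$, this yields the upper bound $|g_{k,n}^{(t)}| = O(L^2\Delta^2/K^2)$, which is an extra $1/K$ smaller than $g_k^{(t)}$.

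The main technical care is to ensure that these order bounds hold uniformly over $t \in \{1,\ldots,T_f^1\}$ rather than at a single time step. In particular, one needs $\mathrm{Attn}_k^{(t)}$ to stay bounded away from $1$ throughout Phase I so that the squared factor $(1-\mathrm{Attn}_k^{(t)})^2$ does not collapse and shrink $g_k^{(t)}$; this is precisely the content of the $1-\mathrm{Attn}_k^{(t)} = \Theta(1)$ conclusion in \Cref{lemma:phaseI_attk}. Once combined with the concentration event $\mathcal{E}^*$, which ensures every feature cluster has cardinality $\Theta(N/K)$, all the hidden $\Theta$ and $O$ constants can be taken independent of $t$ and of the prompt realization, which completes the proof.
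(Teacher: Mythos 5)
Your route is the same as the paper's: take the closed-form gradients from \Cref{lemma:gradient_alpha_beta}, peel off the indicator to get the factor $p_k=\Theta(1/K)$, and substitute the Phase~I attention bounds of \Cref{lemma:phaseI_attk}; the cross-term bound $|g_{k,n}^{(t)}|=\mathcal{O}(L^2\Delta^2/K^2)$ goes through exactly as you wrote it, since it is only an upper bound and $\mathrm{Attn}_n^{(t)}=\Theta(1/K)$, $1-\mathrm{Attn}_k^{(t)}=\Theta(1)$, $1-\mathrm{Attn}_n^{(t)}=\Theta(1)$ suffice.

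There is, however, a genuine jump in your lower bound for $g_k^{(t)}$. You correctly observe that $\mathrm{Attn}_k^{(t)}(1-\mathrm{Attn}_k^{(t)})^2=\Theta(\mathrm{Attn}_k^{(t)})$, but then conclude $g_k^{(t)}=\Theta(L^2\Delta^2/K)$ from the input $\mathrm{Attn}_k^{(t)}=\Omega(1/K)$. That input only yields $g_k^{(t)}=\Omega(L^2\Delta^2/K^2)$ on the lower side (together with $\mathcal{O}(L^2\Delta^2/K)$ on the upper side), not the two-sided rate $\Theta(L^2\Delta^2/K)$ claimed in the lemma. The paper's proof closes this by asserting $\mathrm{Attn}_k^{(t)}=\Theta(1)$ throughout Phase~I when invoking \Cref{lemma:phaseI_attk} (whose stated conclusion is only $\Omega(1/K)$, so the constant-order lower bound on $\mathrm{Attn}_k^{(t)}$ is the quantity actually doing the work). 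To make your argument complete you must either establish that $\mathrm{Attn}_k^{(t)}$ is bounded below by a constant for all $t\le T_f^1$ (e.g.\ by tracking the growth of $q_k^{(t)}-q_{k,m}^{(t)}$ during Phase~I together with the concentration event $\mathcal{E}^*$), or settle for the weaker conclusion $\Omega(L^2\Delta^2/K^2)\le g_k^{(t)}\le \mathcal{O}(L^2\Delta^2/K)$, which is not what the lemma states and would change the claimed duration $T_f^1$. Your closing remark about keeping $\mathrm{Attn}_k^{(t)}$ bounded away from $1$ addresses the factor $(1-\mathrm{Attn}_k^{(t)})^2$, but it is the lower bound on $\mathrm{Attn}_k^{(t)}$ itself, not on $1-\mathrm{Attn}_k^{(t)}$, that your write-up leaves unresolved.
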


\begin{proof}
By the gradient expression from \cref{lemma:gradient_alpha_beta}, we have
\begin{align*}
    g_k^{(t)} &= \mathbb{E}\Big[ \mathds{1}\{x_{\mathrm{query}} = v_k\} \, \mathrm{Attn}_k^{(t)} (1 - \mathrm{Attn}_k^{(t)})^2 \, \Theta(L^2\Delta^2) \Big] \\
    &= p_k \cdot \mathbb{E}\Big[ \mathrm{Attn}_k^{(t)} (1 - \mathrm{Attn}_k^{(t)})^2 \mid x_{\mathrm{query}} = v_k \Big] \cdot \Theta(L^2\Delta^2).
\end{align*}
By Lemma~\ref{lemma:phaseI_attk}, in Phase I, we have $$p_k = \Theta(1/K), \quad \mathrm{Attn}_k^{(t)} = \Theta(1), \quad \mathrm{and}\quad 1 - \mathrm{Attn}_k^{(t)} = \Theta(1).$$ Therefore,
\[
g_k^{(t)} = \Theta\left( \frac{1}{K} \cdot 1 \cdot (1)^2 \cdot L^2\Delta^2 \right) = \Theta\left( \frac{L^2\Delta^2}{K} \right).
\]

For the cross-gradient term, for $n \neq k$,
\begin{align*}
    |g_{k,n}^{(t)}| &= \mathbb{E}\Big[ \mathds{1}\{x_{\mathrm{query}} = v_k\} \, \mathrm{Attn}_n^{(t)} \, |f(v_k) - \sum_{j} \mathrm{Attn}_j^{(t)} f(v_j)| \, |f(v_n) - \sum_m \mathrm{Attn}_m^{(t)} f(v_m)| \Big] \\
    &= p_k \cdot \mathbb{E}\Big[ \mathrm{Attn}_n^{(t)} \, \Big|\sum_j \mathrm{Attn}_j^{(t)} (f(v_k) - f(v_j)) \Big| \, \Big| \sum_m \mathrm{Attn}_m^{(t)} (f(v_n) - f(v_m)) \Big| \mid x_{\mathrm{query}} = v_k \Big] \\
    &\leq p_k \cdot \mathbb{E}\left[ \mathrm{Attn}_n^{(t)} \cdot \sum_j \mathrm{Attn}_j^{(t)} |f(v_k) - f(v_j)| \cdot \sum_m \mathrm{Attn}_m^{(t)} |f(v_n) - f(v_m)| \right] \\
    &= p_k \cdot \mathbb{E}\left[ \mathrm{Attn}_n^{(t)} (1 - \mathrm{Attn}_k^{(t)}) (1 - \mathrm{Attn}_n^{(t)}) \mathcal{O}(L^2\Delta^2) \right].
\end{align*}
By Lemma~\ref{lemma:phaseI_attk}, $\mathrm{Attn}_n^{(t)} = \Theta(1/K)$ and $1 - \mathrm{Attn}_k^{(t)}, 1 - \mathrm{Attn}_n^{(t)} = \Theta(1)$, so
\[
|g_{k,n}^{(t)}| = \mathcal{O}\left( \frac{1}{K} \cdot \frac{1}{K} \cdot 1 \cdot 1 \cdot L^2\Delta^2 \right) = \mathcal{O}\left( \frac{L^2\Delta^2}{K^2} \right).
\]

This completes the proof of \cref{lemma:phaseI_alpha}.
\end{proof}

\begin{lemma}\label{lemma:end_stageI}
Given $\delta = o(1)$, at the end of Phase I (i.e., at $t = T_f^1 + 1$), we have:
\begin{itemize}
    \item $q_k^{(T_f^1 + 1)} = \Theta(\log K)$,
    \item $\mathrm{Attn}_k^{(T_f^1 + 1)} = \Omega\left(\frac{1}{1+\delta}\right)$ if $x_{\mathrm{query}} = v_k$.
\end{itemize}
\end{lemma}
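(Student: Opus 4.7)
The plan is to establish both claims by summing the per-step updates over all of Phase~I and then evaluating the softmax at the resulting weights. For the first bullet, I would apply the gradient-descent update $q_k^{(t+1)} = q_k^{(t)} + \eta g_k^{(t)}$ starting from $q_k^{(0)}=0$ and combine it with the uniform Phase~I gradient bound $g_k^{(t)} = \Theta(L^2\Delta^2/K)$ from \Cref{lemma:phaseI_alpha}, telescoping
\[
q_k^{(T_f^1+1)} \;=\; \eta\sum_{t=0}^{T_f^1} g_k^{(t)} \;=\; \Theta\!\left(\eta\, T_f^1 \cdot \tfrac{L^2\Delta^2}{K}\right) \;=\; \Theta(\log K),
\]
where the last step invokes the definition $T_f^1 = \Theta(K\log K/(\eta L^2\Delta^2))$. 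This yields the first bullet immediately.

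For the second bullet, I would first bound the off-diagonal weights via the analogous telescoping argument and the triangle inequality, using $|g_{k,m}^{(t)}| = \mathcal{O}(L^2\Delta^2/K^2)$ from \Cref{lemma:phaseI_alpha}, which gives $|q_{k,m}^{(T_f^1+1)}| = \mathcal{O}(\log K/K) = o(1)$. I would then unfold the softmax,
\[
\mathrm{Attn}_k^{(T_f^1+1)} \;=\; \frac{1}{1 + \sum_{m\neq k} \tfrac{|\mathcal{V}_m|}{|\mathcal{V}_k|}\exp\!\bigl(q_{k,m}^{(T_f^1+1)}-q_k^{(T_f^1+1)}\bigr)},
\]
and estimate the summand using: (i)~the concentration bound $|\mathcal{V}_m|/|\mathcal{V}_k| = \Theta(1)$ already extracted in the proof of \Cref{lemma:phaseI_attk}; and (ii)~$\exp(q_{k,m}-q_k) = K^{-\Theta(1)}$ obtained by combining the two weight estimates. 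Provided the implicit constant in $q_k^{(T_f^1+1)}$ is at least $2$, the sum collapses to $\mathcal{O}(K^{-1})$. Since $\delta \geq \sqrt{20K/N} = \Omega(1/K)$ under the sample-size assumption $N=\Omega(K^3)$, this is at most a constant multiple of $\delta$, giving $\mathrm{Attn}_k^{(T_f^1+1)} \geq 1/(1+\mathcal{O}(\delta)) = \Omega(1/(1+\delta))$.

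The main delicate point will be calibrating the implicit constant in $T_f^1$ so that $q_k^{(T_f^1+1)}$ crosses a threshold like $2\log K$, rather than just being proportional to $\log K$, while simultaneously keeping $|q_{k,m}^{(T_f^1+1)}|=\mathcal{O}(\log K/K)$ small enough that $\exp(q_{k,m}-q_k)$ remains polynomially small in $K$. I do not expect any new analytic difficulty beyond cleanly assembling the Phase~I bounds from \Cref{lemma:phaseI_attk} and \Cref{lemma:phaseI_alpha}; the remaining work is essentially bookkeeping of constants to ensure the exponential suppression dominates the $\Theta(1)$ ratio $|\mathcal{V}_m|/|\mathcal{V}_k|$ and the factor $K-1$ in the outer sum.
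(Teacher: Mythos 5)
Your proposal follows essentially the same route as the paper: the first bullet is obtained exactly as in the paper's proof (telescoping $q_k^{(t+1)}=q_k^{(t)}+\eta g_k^{(t)}$ with the Phase-I bound $g_k^{(t)}=\Theta(L^2\Delta^2/K)$ from \cref{lemma:phaseI_alpha} and the definition of $T_f^1$), and the second bullet uses the same skeleton (off-diagonal telescoping giving $q_{k,m}^{(T_f^1+1)}=\mathcal{O}(\log K/K)$, then evaluating the softmax with the concentration bounds). The only substantive divergence is how the final $\delta$-dependence is extracted, and there your justification has a concrete flaw: you write ``$\delta \geq \sqrt{20K/N} = \Omega(1/K)$ under $N=\Omega(K^3)$,'' but the implication goes the wrong way. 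Making $N$ larger only \emph{lowers} the admissible floor $\sqrt{20K/N}$, so $N\geq \Theta(K^3)$ gives $\sqrt{20K/N}=\mathcal{O}(1/K)$ and does not force $\delta=\Omega(1/K)$; with, say, $N\gg K^3$ one may take $\delta\ll 1/K$, and then your bound $S=\mathcal{O}(1/K)$ on the softmax denominator sum is not $\mathcal{O}(\delta)$. Since $\delta=o(1)$, the literal statement $\mathrm{Attn}_k^{(T_f^1+1)}=\Omega\bigl(\frac{1}{1+\delta}\bigr)$ reduces to $\Omega(1)$ and is still delivered by your argument, but the way this lemma is used afterwards (Phase II needs the deficit $1-\mathrm{Attn}_k=\mathcal{O}(\delta)$) requires the sharper accounting.

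The paper avoids your ``constant at least $2$'' calibration altogether: it only needs $\exp\bigl(q_{k,m}^{(t)}-q_k^{(t)}\bigr)=\mathcal{O}(1/K)$ and then absorbs the factor $\sum_{m\neq k}|\mathcal{V}_m|/|\mathcal{V}_k| = N/|\mathcal{V}_k|-1$ using the concentration event \eqref{|V_k|}, writing the product as $\mathcal{O}(1/u_k-1/K)$ with $u_k=K(p_k-\delta)$ and identifying this with $\Theta(\delta)$ (implicitly working in the regime $N=\Theta(K^3)$, $\delta=\Theta(1/K)$, cf.\ its claim $N/|\mathcal{V}_k|=\Theta(1/\delta)$). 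In other words, the paper pulls the $\delta$ out of the $|\mathcal{V}_k|$-concentration bookkeeping rather than out of extra exponential suppression plus a lower bound on $\delta$. If you want your version to support the downstream Phase-II claims, you should either adopt that route or state explicitly that you work in the regime $\delta=\Theta(1/K)$ (equivalently $N=\Theta(K^3)$ with $\delta$ taken at its floor), rather than deriving $\delta=\Omega(1/K)$ from $N=\Omega(K^3)$, which is not valid.
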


\begin{proof}
By Lemma~\ref{lemma:phaseI_alpha}, we have $g_k^{(t)} = \Theta\left(\frac{L^2\Delta^2}{K}\right)$ for all $t$ in Phase I. Thus,
\begin{align*}
    q_k^{(T_f^1 + 1)} 
    &= q_k^{(0)} + \eta \sum_{t=1}^{T_f^1} g_k^{(t)} \\
    &= q_k^{(0)} + \eta \cdot T_f^1 \cdot \Theta\left(\frac{L^2\Delta^2}{K}\right) \\
    &= \Theta(\log K),
\end{align*}
where we apply the definition of $T_f^1$ from the main text.

For the off-diagonal terms, from Lemma~\ref{lemma:phaseI_alpha}, $|g_{k,m}^{(t)}| = O\left(\frac{L^2\Delta^2}{K^2}\right)$ for any $m \neq k$. Hence,
\begin{align*}
    q_{k,m}^{(T_f^1+1)} 
    &\leq |q_{k,m}^{(0)}| + \eta \cdot T_f^1 \cdot \mathcal{O}\left(\frac{L^2\Delta^2}{K^2}\right) \\
    &= \mathcal{O}\left(\frac{\log K}{K}\right).
\end{align*}

Therefore, at the end of Phase I,
\[
q_k^{(T_f^1+1)} - q_{k,m}^{(T_f^1+1)} = \Theta(\log K) - \mathcal{O}\left(\frac{\log K}{K}\right) = \Theta(\log K).
\]

Now, the attention weight for $k$ at time $t$ is
\[
\mathrm{Attn}_k^{(t)} = \frac{1}{\sum_{m \neq k} \frac{|\mathcal{V}_m|}{|\mathcal{V}_k|} \exp(q_{k,m}^{(t)} - q_k^{(t)}) + 1}.
\]
Using the above, for $t = T_f^1 + 1$, 
\[
\exp(q_{k,m}^{(t)} - q_k^{(t)}) \leq \exp\left(\mathcal{O}\left(\frac{\log K}{K}\right) - \log K\right) = \mathcal{O}\left(\frac{1}{K}\right).
\]
By the concentration condition in \cref{|V_k|}, $|\mathcal{V}_k| \geq \frac{u_k}{K}N$ for some $u_k = \Theta(1)$, and $N/|\mathcal{V}_k| = \Theta(1/\delta)$ (since $\delta = o(1)$ is the imbalance parameter). Thus,
\[
\mathrm{Attn}_k^{(t)} \geq \frac{1}{\mathcal{O}\left(\frac{1}{K}\right)\left(\frac{N}{|\mathcal{V}_k|} - 1\right) + 1} 
\geq \frac{1}{\mathcal{O}\left(\frac{1}{u_k} - \frac{1}{K}\right) + 1} 
= \Omega\left(\frac{1}{1 + \delta}\right),
\]
where the last equality follows because $1/u_k - 1/K = \Theta(\delta)$ from \cref{|V_k|}.

This completes the proof of Lemma~\ref{lemma:end_stageI} and \Cref{prop:stageI}.
\end{proof}

\subsection{Proof of \texorpdfstring{\Cref{prop:stageII}}{Proposition 3}}\label{proof_prop:stageII}
\begin{lemma}
For any $t \in \{T_f^1+1, \ldots, T_f^*\}$, given $\delta = o(1)$, if $x_{\mathrm{query}} = v_k$, we have:
\begin{itemize}
    \item $\mathrm{Attn}_k^{(t)} = \Omega\left(\frac{1}{1+\delta}\right)$,
    \item $1 - \mathrm{Attn}_k^{(t)} = \mathcal{O}(\delta)$,
    \item $\mathrm{Attn}_n^{(t)} = \Theta\left(\frac{1 - \mathrm{Attn}_k^{(t)}}{K}\right) = \Theta\left(\frac{\delta}{K}\right)$ for any $n \neq k$.
\end{itemize}
\end{lemma}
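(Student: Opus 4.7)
The plan is to prove the three attention-score bounds by an inductive argument on $t$ starting from the end of Phase I, using the closed-form expression
\[
\mathrm{Attn}_k^{(t)} = \frac{1}{\sum_{m\neq k}\tfrac{|\mathcal{V}_m|}{|\mathcal{V}_k|}\exp\bigl(q_{k,m}^{(t)}-q_k^{(t)}\bigr)+1},
\]
combined with the per-step dynamics established in Proposition~\ref{prop:stageII}. The base case $t=T_f^1+1$ is immediate from Lemma~\ref{lemma:end_stageI}: at that moment $q_k^{(t)}-q_{k,m}^{(t)}=\Theta(\log K)$, so $\exp(q_{k,m}^{(t)}-q_k^{(t)})=\mathcal{O}(1/K)$, and combined with $|\mathcal{V}_m|/|\mathcal{V}_k|=\Theta(1)$ from the concentration event \eqref{|V_k|} this yields $\mathrm{Attn}_k^{(t)}=\Omega(1/(1+\delta))$.

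\textbf{Inductive step.} Assuming the three bounds hold at iteration $t$, I would plug them into the gradient expressions of Lemma~\ref{lemma:gradient_alpha_beta}. Because $\mathrm{Attn}_k^{(t)}=\Omega(1/(1+\delta))$ and $1-\mathrm{Attn}_k^{(t)}=\mathcal{O}(\delta)$, the update $g_k^{(t)}=\Theta(L^2\Delta^2\delta^2/K)$ is strictly positive, so $q_k^{(t+1)}\geq q_k^{(t)}$ and $q_k^{(t)}-q_{k,m}^{(t)}$ only widens (up to the $\mathcal{O}(\delta^2 L^2\Delta^2/K^2)$ oscillation of $q_{k,m}^{(t)}$, which is a factor of $1/K$ smaller than the drift of $q_k^{(t)}$). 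Summing the cumulative fluctuation of $q_{k,m}^{(t)}$ over at most $T_f^*-T_f^1=\Theta(K\log(K\epsilon^{-1})/(\eta\delta^2 L^2\Delta^2))$ steps bounds $|q_{k,m}^{(t)}-q_{k,m}^{(T_f^1+1)}|=\mathcal{O}(\log(K\epsilon^{-1})/K)$, which is negligible compared to the $\Theta(\log K)$ already accumulated in the gap. Hence $\exp(q_{k,m}^{(t)}-q_k^{(t)})$ stays $\mathcal{O}(1/K)$, in fact tightening to $\mathcal{O}(\delta/K)$ as the positive drift accumulates, yielding $\mathrm{Attn}_k^{(t+1)}=\Omega(1/(1+\delta))$ and $1-\mathrm{Attn}_k^{(t+1)}=\mathcal{O}(\delta)$.

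\textbf{Third bound and final assembly.} For the last claim, I would expand
\[
\mathrm{Attn}_n^{(t)} = \frac{|\mathcal{V}_n|\exp(q_{k,n}^{(t)})}{\sum_{m\neq k}|\mathcal{V}_m|\exp(q_{k,m}^{(t)})+|\mathcal{V}_k|\exp(q_k^{(t)})},
\]
and exploit the approximate symmetry across $m\neq k$: all $|\mathcal{V}_m|$ are $\Theta(N/K)$ and all $q_{k,m}^{(t)}$ differ by at most $\mathcal{O}(\log(K\epsilon^{-1})/K)=o(1)$ (since their updates share the same order and initialize at zero). Dividing numerator and denominator by $\sum_{m\neq k}|\mathcal{V}_m|\exp(q_{k,m}^{(t)})$ shows $\mathrm{Attn}_n^{(t)}/(1-\mathrm{Attn}_k^{(t)})=\Theta(1/K)$, and combining with $1-\mathrm{Attn}_k^{(t)}=\mathcal{O}(\delta)$ (together with the matching lower bound $\Omega(\delta)$ obtained from the upper bound on $\mathrm{Attn}_k^{(t)}$) gives $\mathrm{Attn}_n^{(t)}=\Theta(\delta/K)$.

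\textbf{Main obstacle.} The principal difficulty is controlling the cross attention weights $q_{k,m}^{(t)}$ despite their oscillatory and signed updates in \eqref{beta_k(t)}. A naive triangle-inequality bound on $|q_{k,m}^{(t)}|$ could in principle overwhelm the $\Theta(\log K)$ gap that ensures small off-feature attention. Resolving this requires the sharper observation that the per-step magnitude $|g_{k,m}^{(t)}|$ is an extra factor of $1/K$ smaller than $g_k^{(t)}$ (because it carries an additional $\mathrm{Attn}_n^{(t)}=\Theta((1-\mathrm{Attn}_k^{(t)})/K)$ factor), so over the entire length of Phase II the accumulated fluctuation remains $o(\log K)$ and cannot close the exponential gap that drives the target bounds.
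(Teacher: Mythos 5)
Your overall strategy (softmax formula for $\mathrm{Attn}_k^{(t)}$, the $\Theta(\log K)$ gap inherited from Lemma~\ref{lemma:end_stageI}, monotone growth of $q_k^{(t)}$ versus the $1/K$-smaller oscillation of $q_{k,m}^{(t)}$, and the symmetry argument for $\mathrm{Attn}_n^{(t)}/(1-\mathrm{Attn}_k^{(t)})=\Theta(1/K)$) mirrors the paper's proof, and making the induction explicit is a reasonable way to organize it. However, there is a genuine gap in how you obtain the key second bullet, $1-\mathrm{Attn}_k^{(t)}=\mathcal{O}(\delta)$. From $\exp(q_{k,m}^{(t)}-q_k^{(t)})=\mathcal{O}(1/K)$ together with $|\mathcal{V}_m|/|\mathcal{V}_k|=\Theta(1)$, the denominator sum $\sum_{m\neq k}\tfrac{|\mathcal{V}_m|}{|\mathcal{V}_k|}\exp(q_{k,m}^{(t)}-q_k^{(t)})$ has $K-1$ terms of size $\Theta(1/K)$ and is therefore only $\mathcal{O}(1)$; this yields $1-\mathrm{Attn}_k^{(t)}=\mathcal{O}(1)$, not $\mathcal{O}(\delta)$. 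You implicitly acknowledge this by claiming the exponential factor "tightens to $\mathcal{O}(\delta/K)$ as the positive drift accumulates," but that claim is not justified and cannot rescue the statement for all $t\in\{T_f^1+1,\ldots,T_f^*\}$: to shrink $\exp(q_{k,m}^{(t)}-q_k^{(t)})$ from $\Theta(1/K)$ to $\mathcal{O}(\delta/K)$ the gap must grow by an additional $\log(1/\delta)$, which at the Phase-II drift rate $\Theta(\eta\delta^2L^2\Delta^2/K)$ requires $\Theta\big(\tfrac{K\log(1/\delta)}{\eta\delta^2L^2\Delta^2}\big)$ further iterations — a nontrivial fraction of Phase II — so the bound would fail precisely at the early Phase-II times the lemma covers. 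The paper's proof extracts the factor $\delta$ from a different source entirely: under the concentration event \eqref{|V_k|} it bounds $\mathcal{O}(1/K)\big(N/|\mathcal{V}_k|-1\big)\leq\mathcal{O}\big(\tfrac{1}{u_k}-\tfrac{1}{K}\big)=\mathcal{O}(\delta)$ with $u_k=K(p_k-\delta)$, i.e., the $\delta$ comes from the token-count imbalance, not from extra growth of $q_k^{(t)}$. Your argument omits this ingredient, so the second bullet (and hence the tight reading of the first and third bullets) is not established.

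A secondary, smaller issue: for the third bullet you assert a matching lower bound $1-\mathrm{Attn}_k^{(t)}=\Omega(\delta)$ "from the upper bound on $\mathrm{Attn}_k^{(t)}$" without deriving such an upper bound; note that the paper's own proof only establishes $\mathrm{Attn}_n^{(t)}=\mathcal{O}\big(\tfrac{1-\mathrm{Attn}_k^{(t)}}{K}\big)=\mathcal{O}(\delta/K)$ at this point, so if you want the full $\Theta(\delta/K)$ claim you would need to supply a genuine lower bound on $\sum_{m\neq k}\tfrac{|\mathcal{V}_m|}{|\mathcal{V}_k|}\exp(q_{k,m}^{(t)}-q_k^{(t)})$ valid throughout Phase II, which your sketch does not provide.
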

\begin{proof}
By Proposition~2 (see also Lemma~\ref{lemma:end_stageI}), for any $t \geq T_f^1+1$, we have $\mathrm{Attn}_k^{(t)} = \Omega\left(\frac{1}{1+\delta}\right)$. 

We now show that $1 - \mathrm{Attn}_k^{(t)} = \mathcal{O}(\delta)$. Using the same attention formula as before,
\[
\mathrm{Attn}_k^{(t)} = \frac{1}{\sum_{m \neq k} \frac{|\mathcal{V}_m|}{|\mathcal{V}_k|} \exp(q_{k,m}^{(t)} - q_k^{(t)}) + 1}.
\]
From previous bounds, $\exp(q_{k,m}^{(t)} - q_k^{(t)}) = O\left(\frac{1}{K}\right)$, and $|\mathcal{V}_k| \geq u_k N/K$ with $1/u_k - 1/K = \Theta(\delta)$. Therefore,
\[
\mathrm{Attn}_k^{(t)} \geq \frac{1}{\mathcal{O}\left(\frac{1}{K}\right)\left(\frac{N}{|\mathcal{V}_k|} - 1\right) + 1}
\geq \frac{1}{\mathcal{O}\left(\frac{1}{u_k} - \frac{1}{K}\right) + 1}
= \Omega\left(\frac{1}{1+\delta}\right).
\]

For the upper bound, we compute:
\[
1 - \mathrm{Attn}_k^{(t)} \leq 1 - \frac{1}{\mathcal{O}\left(\frac{1}{K}\right)\left(\frac{N}{|\mathcal{V}_k|} - 1\right) + 1}
= \frac{\mathcal{O}\left(\frac{1}{u_k} - \frac{1}{K}\right)}{\mathcal{O}\left(\frac{1}{u_k} - \frac{1}{K}\right) + 1}
= \mathcal{O}(\delta),
\]
where the last equality uses $\frac{1}{u_k} - \frac{1}{K} = \Theta(\delta)$ from \cref{|V_k|}. Thus, $1 - \mathrm{Attn}_k^{(t)} = \mathcal{O}(\delta)$.

Finally, for any $n \neq k$, we can use the same method as in Lemma~\ref{lemma:phaseI_attk} to obtain:
\[
\mathrm{Attn}_n^{(t)} = \mathcal{O}\left(\frac{1 - \mathrm{Attn}_k^{(t)}}{K}\right) = \mathcal{O}\left(\frac{\delta}{K}\right).
\]

This completes the proof.
\end{proof}

\begin{lemma}\label{lemma:phaseII_alpha}
For any $t \in \{T_f^1 + 1, \ldots, T_f^*\}$ and any fixed $k \in [K]$, we have:
\begin{itemize}
    \item $g_k^{(t)} = \Theta\left(\frac{\delta^2 L^2 \Delta^2}{K}\right)$,
    \item $|g_{k,n}^{(t)}| = \mathcal{O}\left(\frac{\delta^2 L^2 \Delta^2}{K^2}\right)$ for all $n \neq k$.
\end{itemize}
\end{lemma}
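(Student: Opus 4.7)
The plan is to apply Lemma~\ref{lemma:gradient_alpha_beta} directly and substitute the Phase~II attention magnitudes just established in the preceding (unnamed) Phase~II attention lemma. Fix $t \in \{T_f^1+1,\dots,T_f^*\}$ and condition on $x_{\mathrm{query}}=v_k$ together with the concentration event $\mathcal{E}^*$ from \eqref{|V_k|}, under which $p_k=\Theta(1/K)$ and $|\mathcal{V}_k|/|\mathcal{V}_m|=\Theta(1)$ for all $m\neq k$. The preceding lemma supplies $\mathrm{Attn}_k^{(t)}=\Theta(1)$, $1-\mathrm{Attn}_k^{(t)}=\Theta(\delta)$, $\mathrm{Attn}_n^{(t)}=\Theta(\delta/K)$ for $n\neq k$, and hence $1-\mathrm{Attn}_n^{(t)}=\Theta(1)$. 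Plugging these into \eqref{alpha_k(t)} gives
\begin{align*}
g_k^{(t)} = p_k\cdot \mathbb{E}\bigl[\mathrm{Attn}_k^{(t)}(1-\mathrm{Attn}_k^{(t)})^2 \,\big|\, x_{\mathrm{query}}=v_k\bigr]\cdot \Theta(L^2\Delta^2) = \Theta\bigl(\tfrac{1}{K}\bigr)\cdot \Theta(\delta^2)\cdot \Theta(L^2\Delta^2),
\end{align*}
which is the claimed $\Theta(\delta^2 L^2\Delta^2/K)$. Substituting analogously into \eqref{beta_k(t)} produces
\begin{align*}
|g_{k,n}^{(t)}| = \Theta\bigl(\tfrac{1}{K}\bigr)\cdot \Theta\bigl(\tfrac{\delta}{K}\bigr)\cdot \Theta(\delta)\cdot \Theta(1)\cdot \mathcal{O}(L^2\Delta^2) = \mathcal{O}\bigl(\tfrac{\delta^2 L^2\Delta^2}{K^2}\bigr),
\end{align*}
completing both bounds. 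Structurally this is a direct Phase~II analogue of Lemma~\ref{lemma:phaseI_alpha}: the only changes are that $(1-\mathrm{Attn}_k^{(t)})^2$ has dropped from $\Theta(1)$ to $\Theta(\delta^2)$ and $\mathrm{Attn}_n^{(t)}$ has dropped from $\Theta(1/K)$ to $\Theta(\delta/K)$, so I would organise the write-up as (i) quote the two gradient formulas from Lemma~\ref{lemma:gradient_alpha_beta}, (ii) invoke the Phase~II attention estimates under $\mathcal{E}^*$, and (iii) multiply out the resulting $\Theta$- and $\mathcal{O}$-factors.

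The main obstacle is upgrading the $\mathcal{O}(\delta)$ upper bound on $1-\mathrm{Attn}_k^{(t)}$ in the preceding Phase~II lemma to a matching $\Omega(\delta)$ lower bound; without this, the formula for $g_k^{(t)}$ would only yield $\mathcal{O}(\delta^2 L^2\Delta^2/K)$ rather than $\Theta(\cdot)$. The natural route is a self-consistency / bootstrap argument: at $t=T_f^1+1$, Lemma~\ref{lemma:end_stageI} gives $q_k^{(t)}-q_{k,m}^{(t)}=\Theta(\log K)$, so $1-\mathrm{Attn}_k^{(t)}=\Theta(\delta)$ exactly. Using the tentative upper bound $\eta g_k^{(s)}=\mathcal{O}(\eta\delta^2 L^2\Delta^2/K)$ together with the Phase~II horizon $T_f^*-T_f^1=\Theta\bigl(K\log(K\epsilon^{-1})/(\eta\delta^2 L^2\Delta^2)\bigr)$, the cumulative growth of $q_k^{(t)}-q_{k,m}^{(t)}$ across Phase~II is only $\mathcal{O}(\log(K\epsilon^{-1}))$. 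Because the softmax denominator then scales as $(N/|\mathcal{V}_k|-1)\exp(-(q_k^{(t)}-q_{k,m}^{(t)}))\cdot [1+o(1)]=\Theta(\delta)$, this keeps $1-\mathrm{Attn}_k^{(t)}=\Omega(\delta)$ for every $t\le T_f^*$, closing the induction and promoting the $\mathcal{O}$ in the $g_k^{(t)}$ estimate to the required $\Theta$.
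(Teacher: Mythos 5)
Your core derivation — invoking Lemma~\ref{lemma:gradient_alpha_beta} and multiplying the Phase~II attention magnitudes $p_k=\Theta(1/K)$, $\mathrm{Attn}_k^{(t)}=\Theta(1)$, $1-\mathrm{Attn}_k^{(t)}=\Theta(\delta)$, $\mathrm{Attn}_n^{(t)}=\Theta(\delta/K)$, $1-\mathrm{Attn}_n^{(t)}=\Theta(1)$ — is exactly the route the paper takes, presented in the same order with the same factors. On that part you and the paper agree, and you are right that the paper is loose about the lower bound: the preceding Phase~II attention lemma only \emph{proves} $1-\mathrm{Attn}_k^{(t)}=\mathcal{O}(\delta)$, yet the paper's proof of this lemma silently upgrades it to $\Theta(\delta)$. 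Identifying that as the sole place where a two-sided bound is actually being used for $g_k^{(t)}$ is a sharp observation.

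Where I push back is on the bootstrap patch. Your argument shows that across Phase~II the gap $q_k^{(t)}-q_{k,m}^{(t)}$ grows by at most $\mathcal{O}(\log(K\epsilon^{-1}))$, so it can reach $\Theta(\log(K\epsilon^{-1}))$ (not merely $\Theta(\log K)$). At that point $\exp\bigl(-(q_k^{(t)}-q_{k,m}^{(t)})\bigr)$ is as small as $\Theta(\epsilon/K)$, so $(N/|\mathcal{V}_k|-1)\exp(-(q_k^{(t)}-q_{k,m}^{(t)}))$ is $\Theta(\epsilon)$ rather than the $\Theta(\delta)$ you assert, and the lower bound you need degrades to $1-\mathrm{Attn}_k^{(t)}=\Omega(\epsilon\delta)$ rather than $\Omega(\delta)$. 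This is also forced by the endpoint: Lemma~\ref{lemma:end_stageII} gives $1-\mathrm{Attn}_k^{(T_f^*+1)}=\mathcal{O}(\epsilon\delta)=o(\delta)$, and by continuity the same must hold for $t$ slightly below $T_f^*$, so $1-\mathrm{Attn}_k^{(t)}=\Omega(\delta)$ cannot hold uniformly on $\{T_f^1+1,\dots,T_f^*\}$. A correct repair would either prove the lemma only for $t$ in an initial sub-window of Phase~II and propagate a telescoping (ODE-type) bound for the remainder, or restate the lemma with $g_k^{(t)}=\mathcal{O}(\delta^2L^2\Delta^2/K)$ throughout and $g_k^{(t)}=\Omega(\epsilon^2\delta^2L^2\Delta^2/K)$ near the end. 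As written, your patch is not airtight, but in fairness it is no weaker than the paper's own assertion, which does not justify the $\Omega(\delta)$ side at all.
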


\begin{proof}
Recall from the gradient expression and Lemma~\ref{lemma:phaseI_alpha}:
\begin{align*}
    g_k^{(t)} &= \mathbb{E}\left[ \mathds{1}\{x_{\mathrm{query}} = v_k\} \, \mathrm{Attn}_k^{(t)} (1 - \mathrm{Attn}_k^{(t)})^2 \, \Theta(L^2 \Delta^2) \right] \\
    &= p_k \cdot \mathbb{E}\left[ \mathrm{Attn}_k^{(t)} (1 - \mathrm{Attn}_k^{(t)})^2 \mid x_{\mathrm{query}} = v_k \right] \cdot \Theta(L^2 \Delta^2).
\end{align*}
By Lemma~\ref{lemma:end_stageI} and subsequent results for Phase II, we have $p_k = \Theta(1/K)$, $\mathrm{Attn}_k^{(t)} = \Theta(1)$, and $1 - \mathrm{Attn}_k^{(t)} = \Theta(\delta)$. Therefore,
\[
g_k^{(t)} = \Theta\left( \frac{1}{K} \cdot 1 \cdot \delta^2 \cdot L^2 \Delta^2 \right) = \Theta\left(\frac{\delta^2 L^2 \Delta^2}{K}\right).
\]

For the cross-gradient terms with $n \neq k$, using the same approach as in Lemma~\ref{lemma:phaseI_alpha}, we obtain
\begin{align*}
    |g_{k,n}^{(t)}| &\leq p_k \cdot \mathbb{E}\left[ \mathrm{Attn}_n^{(t)} (1 - \mathrm{Attn}_k^{(t)}) (1 - \mathrm{Attn}_n^{(t)}) \cdot \mathcal{O}(L^2 \Delta^2) \right].
\end{align*}
In Phase II, by the previous lemma, we have $\mathrm{Attn}_n^{(t)} = \Theta(\delta/K)$, $1 - \mathrm{Attn}_n^{(t)} = \Theta(1)$, and $1 - \mathrm{Attn}_k^{(t)} = \Theta(\delta)$. Thus,
\[
|g_{k,n}^{(t)}| = \mathcal{O}\left( \frac{1}{K} \cdot \frac{\delta}{K} \cdot \delta \cdot 1 \cdot L^2 \Delta^2 \right) = \mathcal{O}\left(\frac{\delta^2 L^2 \Delta^2}{K^2}\right).
\]

This completes the proof of Lemma~\ref{lemma:phaseII_alpha}.
\end{proof}

\begin{lemma}\label{lemma:end_stageII}
At the end of Phase II under the flat $L$-regime (i.e., $t = T_f^* + 1$), if $x_{\mathrm{query}} = v_k$, we have:
\begin{itemize}
    \item $q_k^{(T_f^* + 1)} = \Theta\left(\frac{\log K}{\epsilon}\right)$,
    \item $\mathrm{Attn}_k^{(T_f^* + 1)} = \Omega\left(\frac{1}{1 + \epsilon \delta}\right)$,
    \item $1 - \mathrm{Attn}_k^{(T_f^* + 1)} = \mathcal{O}(\epsilon \delta)$.
\end{itemize}
\end{lemma}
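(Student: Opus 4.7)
The plan mirrors the Phase~I end argument (Lemma~\ref{lemma:end_stageI}) but now uses the sharper gradient estimates of \Cref{lemma:phaseII_alpha} that hold uniformly over Phase~II. Specifically, I would start from the gradient-descent telescopes
\begin{align*}
q_k^{(T_f^*+1)} &= q_k^{(T_f^1+1)} + \eta\sum_{t=T_f^1+1}^{T_f^*} g_k^{(t)},\\
q_{k,m}^{(T_f^*+1)} &= q_{k,m}^{(T_f^1+1)} + \eta\sum_{t=T_f^1+1}^{T_f^*} g_{k,m}^{(t)},
\end{align*}
and invoke the Phase~I summary $q_k^{(T_f^1+1)}=\Theta(\log K)$, $|q_{k,m}^{(T_f^1+1)}|=\mathcal{O}(\log K/K)$ from \Cref{lemma:end_stageI}, together with the Phase~II gradient magnitudes $g_k^{(t)}=\Theta(\delta^2 L^2\Delta^2/K)$ and $|g_{k,m}^{(t)}|=\mathcal{O}(\delta^2 L^2\Delta^2/K^2)$ from \Cref{lemma:phaseII_alpha}.

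Plugging in $T_f^*-T_f^1=\Theta(K\log(K\epsilon^{-1})/(\eta\delta^2 L^2\Delta^2))$, the diagonal accumulates to
\begin{align*}
q_k^{(T_f^*+1)} = \Theta(\log K) + \Theta(\log(K\epsilon^{-1})) = \Theta(\log(K\epsilon^{-1})),
\end{align*}
while the off-diagonals remain of order $|q_{k,m}^{(T_f^*+1)}|=\mathcal{O}(\log(K\epsilon^{-1})/K)$, so the logit gap is $q_k^{(T_f^*+1)}-q_{k,m}^{(T_f^*+1)}=\Theta(\log(K\epsilon^{-1}))$ and hence $\exp(q_{k,m}^{(T_f^*+1)}-q_k^{(T_f^*+1)})=\mathcal{O}(\epsilon/K)$. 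This establishes the first claim (matching the target growth rate asserted in the statement).

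For the attention statements I would substitute the above into
\begin{align*}
\mathrm{Attn}_k^{(T_f^*+1)}=\frac{1}{\sum_{m\neq k}\frac{|\mathcal{V}_m|}{|\mathcal{V}_k|}\exp(q_{k,m}^{(T_f^*+1)}-q_k^{(T_f^*+1)})+1},
\end{align*}
factor out the uniform $\mathcal{O}(\epsilon/K)$ bound, and use the concentration bound $|\mathcal{V}_k|\geq u_k N/K$ with $u_k=\Theta(1)$ from \eqref{|V_k|} to rewrite $\sum_{m\neq k}|\mathcal{V}_m|/|\mathcal{V}_k|=N/|\mathcal{V}_k|-1$. Exactly as in the proof of Lemma~\ref{lemma:end_stageI}, the resulting denominator becomes $\mathcal{O}(\epsilon/K)\cdot(K/u_k-1)+1=1+\mathcal{O}(\epsilon\delta)$, which yields $\mathrm{Attn}_k^{(T_f^*+1)}=\Omega(1/(1+\epsilon\delta))$ and $1-\mathrm{Attn}_k^{(T_f^*+1)}=\mathcal{O}(\epsilon\delta)$.

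The main technical obstacle is the self-consistency of the gradient bounds: \Cref{lemma:phaseII_alpha} is derived under the assumption that $1-\mathrm{Attn}_k^{(t)}=\mathcal{O}(\delta)$ throughout Phase~II, yet I am trying to show that this gap actually shrinks to $\mathcal{O}(\epsilon\delta)$ by the end. I would handle this by an induction on $t$: verify that the weaker Phase~II invariant is preserved step-by-step (so \Cref{lemma:phaseII_alpha} remains valid), and only at the terminal step $t=T_f^*+1$ strengthen the attention bound using the fully accumulated logit gap. A secondary subtlety is ensuring the $\mathcal{O}(\epsilon/K)$ bound on $\exp(q_{k,m}-q_k)$ is uniform in $m$; since $|q_{k,m}^{(T_f^*+1)}|=\mathcal{O}(\log(K\epsilon^{-1})/K)$ is dominated by $q_k^{(T_f^*+1)}=\Theta(\log(K\epsilon^{-1}))$ by a factor of $K$, this uniformity holds across all $m\neq k$, and the rest reduces to routine accumulation arithmetic.
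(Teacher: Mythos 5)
Your proposal follows essentially the same route as the paper's proof: telescope the gradient-descent updates over Phase~II using the bounds $g_k^{(t)}=\Theta(\delta^2L^2\Delta^2/K)$ and $|g_{k,m}^{(t)}|=\mathcal{O}(\delta^2L^2\Delta^2/K^2)$, conclude $q_k^{(T_f^*+1)}=\Theta(\log(K\epsilon^{-1}))$ with off-diagonals smaller by a factor $K$, and plug $\exp(q_{k,m}-q_k)=\mathcal{O}(\epsilon/K)$ into the softmax formula with the concentration bound $|\mathcal{V}_k|\geq u_kN/K$ to get $1-\mathrm{Attn}_k^{(T_f^*+1)}=\mathcal{O}(\epsilon\delta)$. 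Your added remark about maintaining the Phase~II invariant by induction so that \Cref{lemma:phaseII_alpha} remains valid is a careful refinement the paper leaves implicit, but it does not change the argument.
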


\begin{proof}
By Lemma~\ref{lemma:phaseII_alpha}, we have $g_k^{(t)} = \Theta\left(\frac{\delta^2 L^2 \Delta^2}{K}\right)$ in Phase II. Thus,
\begin{align*}
    q_k^{(T_f^* + 1)}
    &= q_k^{(T_f^1)} + \eta \cdot \Theta\left(\frac{L^2 \Delta^2 \delta^2}{K}\right) \cdot (T_f^* - T_f^1) \\
    &= \Theta(\log(K\epsilon^{-1})) \\
    &= \Theta\left(\frac{\log K}{\epsilon}\right),
\end{align*}
where the last step applies the scaling of $T_f^*$ and the learning rate in the flat $L$-regime.

For the cross terms, by Lemma~\ref{lemma:phaseII_alpha} again,
\begin{align*}
    q_{k,m}^{(T_f^* + 1)} 
    &\leq |q_{k,m}^{(T_f^1)}| + \eta \cdot \mathcal{O}\left(\frac{\delta^2 L^2 \Delta^2}{K^2}\right) \cdot (T_f^* - T_f^1) \\
    &= \Theta\left(\frac{\log(K\epsilon^{-1})}{K}\right).
\end{align*}

Therefore, at $t = T_f^* + 1$, we have
\[
q_{k,m}^{(T_f^* + 1)} - q_k^{(T_f^* + 1)} = \mathcal{O}\left(\frac{\log(K\epsilon^{-1})}{K}\right) - \Theta(\log(K\epsilon^{-1})) = -\Theta(\log(K\epsilon^{-1})),
\]
and so
\[
\exp(q_{k,m}^{(t)} - q_k^{(t)}) \leq \exp\left(-\Theta(\log K)\right) = \mathcal{O}\left(\frac{1}{K}\right).
\]

The attention weight for $k$ is then
\[
\mathrm{Attn}_k^{(t)} = \frac{1}{\sum_{m \neq k} \frac{|\mathcal{V}_m|}{|\mathcal{V}_k|} \exp(q_{k,m}^{(t)} - q_k^{(t)}) + 1}.
\]
Using the bounds above, and $|\mathcal{V}_k| \geq u_k N/K$, with $\frac{1}{u_k} - \frac{1}{K} = \Theta(\delta)$ (see Eq. (7)), we obtain:
\[
\mathrm{Attn}_k^{(T_f^* + 1)} \geq \frac{1}{\mathcal{O}(\epsilon) \cdot \mathcal{O}\left(\frac{1}{u_k} - \frac{1}{K}\right) + 1} = \Omega\left(\frac{1}{1 + \epsilon \delta}\right).
\]

Similarly,
\[
1 - \mathrm{Attn}_k^{(T_f^* + 1)} \leq \frac{\mathcal{O}(\epsilon) \cdot \mathcal{O}\left(\frac{1}{u_k} - \frac{1}{K}\right)}{\mathcal{O}(\epsilon) \cdot \mathcal{O}\left(\frac{1}{u_k} - \frac{1}{K}\right) + 1} = \mathcal{O}(\epsilon \delta).
\]

This completes the proof of Lemma~\ref{lemma:end_stageII} and \cref{prop:stageII}.
\end{proof}

\subsection{Proof of \texorpdfstring{\Cref{thm:convergence_case1}}{Theorem `}}\label{proof_thm:convergence_case1}

Recall from \cref{lemma:gradient_alpha_beta} and its proof that the prediction error $\mathcal{L}(P;Q)$ defined in \cref{prediction_loss} can be expressed as
\begin{align*}
    \mathcal{L}(P;Q) = \frac{1}{2} \sum_{k=1}^K \mathbb{E}\left[\mathds{1}\{x_{\mathrm{query}} = v_k\} (1 - \mathrm{Attn}_k^{(t)})^2 \Theta(L^2 \Delta^2) \right],
\end{align*}
where we use $\sum_{n \neq k} \mathrm{Attn}_n^{(t)} = 1 - \mathrm{Attn}_k^{(t)}$ and, by the function class assumption, $|f(v_n) - f(v_k)| = \Theta(L\Delta)$.

At the end of Phase II (i.e., at $t = T_f^* + 1$), suppose $x_{\mathrm{query}} = v_k$. By Lemma~\ref{lemma:end_stageII}, we have $1 - \mathrm{Attn}_k^{(T_f^* + 1)} = O(\epsilon \delta)$. Therefore,
\begin{align*}
    \mathcal{L}(P;Q) 
    &= \frac{1}{2} \sum_{k=1}^K \mathbb{E}\left[ \mathds{1}\{x_{\mathrm{query}} = v_k\} (1 - \mathrm{Attn}_k^{(T_f^* + 1)})^2 \Theta(L^2 \Delta^2) \right] \\
    &= \mathbb{E}\left[ (1 - \mathrm{Attn}_k^{(T_f^* + 1)})^2 \Theta(L^2 \Delta^2) \right] \\
    &= \mathcal{O}(\epsilon^2),
\end{align*}
where the last equality uses $(1 - \mathrm{Attn}_k^{(T_f^* + 1)})^2 = \mathcal{O}(\epsilon^2 \delta^2)$ and $L^2 \Delta^2 = \mathcal{O}(1/(\Delta^2 \delta^2)) \cdot \Delta^2 = \mathcal{O}(1/\delta^2)$ when $L \leq \Theta(1/(\Delta\delta))$, so the $\delta^2$ cancels, leaving $\mathcal{O}(\epsilon^2)$.

This establishes the desired rate and completes the proof of \cref{thm:convergence_case1}.

\section{Proofs for Sharp $L$-Regime}
\subsection{Proof of {\Cref{prop:stageIII}}}\label{proof_prop:stageIII}
\begin{lemma}\label{lemma:phaseIII_alpha}
For any $t \in \{T_f^* + 1, \ldots, T_s^*\}$ and any fixed $k \in [K]$, we have:
\begin{itemize}
    \item $g_k^{(t)} = \Theta\left(\frac{\delta^2 L^2 \Delta^2 \epsilon}{K}\right)$,
    \item $|g_{k,n}^{(t)}| = \mathcal{O}\left(\frac{\delta^2 L^2 \Delta^2 \epsilon}{K^2}\right)$ for all $n \neq k$.
\end{itemize}
\end{lemma}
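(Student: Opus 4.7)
The plan is to follow the same two-step template used for the flat regime in Lemmas \ref{lemma:phaseI_alpha} and \ref{lemma:phaseII_alpha}: first derive tight order estimates on the attention masses $\mathrm{Attn}_k^{(t)}$, $1-\mathrm{Attn}_k^{(t)}$, and $\mathrm{Attn}_n^{(t)}$ for $n\neq k$ throughout the window $\{T_f^*+1,\ldots,T_s^*\}$, then substitute them into the closed-form gradients in \Cref{lemma:gradient_alpha_beta}. The identities $g_k^{(t)}=p_k\cdot\mathbb{E}[\mathrm{Attn}_k^{(t)}(1-\mathrm{Attn}_k^{(t)})^2]\cdot\Theta(L^2\Delta^2)$ and $|g_{k,n}^{(t)}|=p_k\cdot\mathbb{E}[\mathrm{Attn}_n^{(t)}(1-\mathrm{Attn}_k^{(t)})(1-\mathrm{Attn}_n^{(t)})]\cdot\mathcal{O}(L^2\Delta^2)$ isolate the $L^2\Delta^2$ scaling, so the task reduces to tracking the attention factors with the correct $\delta$ and $\epsilon$ dependence.

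For the initial state at $t=T_f^*+1$, I would invoke \Cref{lemma:end_stageII}, which (proved inside the flat analysis but applicable here because Phase I of the sharp regime coincides with Phase II of the flat regime up through $T_f^*$) gives $q_k^{(T_f^*+1)}=\Theta(\log(K\epsilon^{-1}))$, $1-\mathrm{Attn}_k^{(T_f^*+1)}=\mathcal{O}(\epsilon\delta)$, and $\mathrm{Attn}_n^{(T_f^*+1)}=\Theta((1-\mathrm{Attn}_k^{(T_f^*+1)})/K)$. In the sharp regime $L=\Omega(1/(\Delta\delta))$ this does not yet drive the loss to $\mathcal{O}(\epsilon^2)$, so I would run an induction on $t\in\{T_f^*+1,\ldots,T_s^*\}$ with invariants that (i) $\mathrm{Attn}_k^{(t)}=\Theta(1)$ bounded away from one by at most $\mathcal{O}(\epsilon\delta)$, (ii) the product $\mathrm{Attn}_k^{(t)}(1-\mathrm{Attn}_k^{(t)})^2$ is $\Theta(\delta^2\epsilon)$ in expectation over prompt sampling, and (iii) the off-diagonal product $\mathrm{Attn}_n^{(t)}(1-\mathrm{Attn}_k^{(t)})(1-\mathrm{Attn}_n^{(t)})$ is $\mathcal{O}(\delta^2\epsilon/K)$. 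Plugging these into \Cref{lemma:gradient_alpha_beta} with $p_k=\Theta(1/K)$ from the concentration set \eqref{|V_k|} yields exactly the two claimed bounds $g_k^{(t)}=\Theta(\delta^2 L^2\Delta^2\epsilon/K)$ and $|g_{k,n}^{(t)}|=\mathcal{O}(\delta^2 L^2\Delta^2\epsilon/K^2)$.

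The inductive closure is a self-consistency check: under the claimed gradient sizes the update rules give $q_k^{(t+1)}-q_k^{(t)}=\Theta(\eta\delta^2 L^2\Delta^2\epsilon/K)$ and $|q_{k,n}^{(t+1)}-q_{k,n}^{(t)}|=\mathcal{O}(\eta\delta^2 L^2\Delta^2\epsilon/K^2)$. Over the horizon $T_s^*-T_f^*=\Theta(K\log(KL\Delta\epsilon^{-1})/(\eta\epsilon\delta^2 L^2\Delta^2))$, the gap $q_k^{(t)}-q_{k,n}^{(t)}$ thus moves by at most $\Theta(\log(KL\Delta\epsilon^{-1}))$ additively while $|q_{k,n}^{(t)}|$ moves only by $\mathcal{O}(\log(KL\Delta\epsilon^{-1})/K)$. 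Combined with the concentration ratio $|\mathcal{V}_m|/|\mathcal{V}_k|=\Theta(1)$ from \eqref{|V_k|}, this separation keeps the softmax ratio $\mathrm{Attn}_k^{(t)}/(1-\mathrm{Attn}_k^{(t)})$ inside the assumed band throughout the phase and closes the induction.

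The main obstacle is the sign ambiguity of $g_{k,n}^{(t)}$ pointed out after \Cref{lemma:gradient_alpha_beta}: because $g_{k,n}^{(t)}$ can change sign with the alignment between $f(v_k)-f(v_n)$ and the current residual, a pointwise bound on $|g_{k,n}^{(t)}|$ does not automatically yield a tight bound on the accumulated drift $q_{k,n}^{(t)}-q_{k,n}^{(T_f^*+1)}$. To handle this I would argue either (i) by a telescoping/cancellation estimate that exploits the shrinkage of the residual driving $g_{k,n}^{(t)}$ as $\mathrm{Attn}_k^{(t)}\to 1$, so positive and negative contributions nearly cancel, or (ii) by the cruder worst-case bound $|q_{k,n}^{(T_s^*)}-q_{k,n}^{(T_f^*+1)}|\leq \eta(T_s^*-T_f^*)\max_t|g_{k,n}^{(t)}|=\mathcal{O}(\log(KL\Delta\epsilon^{-1})/K)$, which is already smaller by a factor $K$ than the $\Theta(\log(KL\Delta\epsilon^{-1}))$ growth of $q_k^{(t)}$ and hence suffices to preserve the invariant. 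Either route secures the dominance of $q_k^{(t)}$ over $q_{k,n}^{(t)}$ needed to conclude the lemma.
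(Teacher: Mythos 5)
Your overall route coincides with the paper's: both arguments reduce the lemma to the gradient identities of \Cref{lemma:gradient_alpha_beta} with $p_k=\Theta(1/K)$ from \eqref{|V_k|}, and then substitute the attention estimates inherited from the end of the flat-regime Phase II (\Cref{lemma:end_stageII}), namely $\mathrm{Attn}_k^{(t)}=\Theta(1)$, $1-\mathrm{Attn}_k^{(t)}=\mathcal{O}(\epsilon\delta)$ and $\mathrm{Attn}_n^{(t)}=\Theta(\delta/K)$, throughout the window $\{T_f^*+1,\dots,T_s^*\}$. Your explicit induction/self-consistency step for propagating these estimates across the phase, together with the worst-case drift bound $\eta(T_s^*-T_f^*)\max_t|g_{k,n}^{(t)}|=\mathcal{O}\bigl(\log(KL\Delta\epsilon^{-1})/K\bigr)$ to handle the sign ambiguity of the cross terms, is a reasonable tightening of what the paper leaves implicit, and it delivers the upper bounds (the $\mathcal{O}$ parts) of both claims.

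There is, however, a genuine gap at the heart of the $\Theta(\cdot)$ claim for $g_k^{(t)}$. Your invariant (ii), $\mathbb{E}\bigl[\mathrm{Attn}_k^{(t)}(1-\mathrm{Attn}_k^{(t)})^2\bigr]=\Theta(\delta^2\epsilon)$, is never derived, and it is inconsistent with your invariant (i): if $1-\mathrm{Attn}_k^{(t)}=\mathcal{O}(\epsilon\delta)$, then the product is only $\mathcal{O}(\delta^2\epsilon^2)$, and nothing in your plan supplies a matching lower bound of order $\delta^2\epsilon$. Establishing the lemma's $\Theta$ rate would require two-sided control of the residual, e.g.\ a lower bound of the form $1-\mathrm{Attn}_k^{(t)}=\Omega(\delta\sqrt{\epsilon})$ holding up to $T_s^*$ (precisely where the residual is smallest), which you neither state nor prove; as written, your argument posits the very quantity the $\Theta$ claim needs. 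For comparison, the paper's own substitution exhibits the same tension: plugging $1-\mathrm{Attn}_k^{(t)}=\mathcal{O}(\epsilon\delta)$ into the diagonal gradient gives order $\delta^2L^2\Delta^2\epsilon^2/K$, one factor of $\epsilon$ smaller than the stated rate. So the exact $\epsilon$-power is the one step that requires a genuine argument about how slowly $1-\mathrm{Attn}_k^{(t)}$ decays during the sharp-regime phase, rather than an assumed invariant, and your proposal does not close it.
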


\begin{proof}
Recall from the gradient expression:
\begin{align*}
    g_k^{(t)} &= \mathbb{E}\left[ \mathds{1}\{x_{\mathrm{query}} = v_k\} \, \mathrm{Attn}_k^{(t)} \, (1 - \mathrm{Attn}_k^{(t)})^2 \, \Theta(L^2 \Delta^2) \right] \\
    &= p_k \cdot \mathbb{E}\left[ \mathrm{Attn}_k^{(t)} (1 - \mathrm{Attn}_k^{(t)})^2 \mid x_{\mathrm{query}} = v_k \right] \cdot \Theta(L^2 \Delta^2),
\end{align*}
where $p_k = \Theta(1/K)$. 

By Lemma~\ref{lemma:end_stageII}, in this phase $\mathrm{Attn}_k^{(t)} = \Theta(1)$ and $1 - \mathrm{Attn}_k^{(t)} = \mathcal{O}(\epsilon \delta)$. Therefore,
\[
g_k^{(t)} = \Theta\left( \frac{1}{K} \cdot 1 \cdot (\epsilon \delta)^2 \cdot L^2 \Delta^2 \right) = \Theta\left( \frac{\delta^2 L^2 \Delta^2 \epsilon^2}{K} \right).
\]

For the cross-gradient terms ($n \neq k$), by the same argument as in Lemma~\ref{lemma:phaseII_alpha}, we have:
\begin{align*}
    |g_{k,n}^{(t)}| &\leq p_k \cdot \mathbb{E}\left[ \mathrm{Attn}_n^{(t)} (1 - \mathrm{Attn}_k^{(t)}) (1 - \mathrm{Attn}_n^{(t)}) \cdot \Theta(L^2 \Delta^2) \right].
\end{align*}
In this phase, $\mathrm{Attn}_n^{(t)} = \Theta(\delta/K)$, $1 - \mathrm{Attn}_n^{(t)} = \Theta(1)$, and $1 - \mathrm{Attn}_k^{(t)} = \mathcal{O}(\epsilon \delta)$. Therefore,
\[
|g_{k,n}^{(t)}| \leq \Theta\left( \frac{1}{K} \cdot \frac{\delta}{K} \cdot \epsilon \delta \cdot 1 \cdot L^2 \Delta^2 \right) = \mathcal{O}\left( \frac{\delta^2 L^2 \Delta^2 \epsilon}{K^2} \right).
\]

This completes the proof of \cref{lemma:phaseIII_alpha}.
\end{proof}

\begin{lemma}\label{lemma:end_stageIII}
At the end of Phase II under the sharp $L$-regime (i.e., at $t = T_s^*$), if $x_{\mathrm{query}} = v_k$, we have:
\begin{itemize}
    \item $q_k^{(T_s^*)} = \Theta\left(\frac{\log(KL\Delta)}{\epsilon}\right)$,
    \item $\mathrm{Attn}_k^{(T_s^*)} = \Omega\left(\frac{1}{1+\epsilon\delta}\right)$,
    \item $1-\mathrm{Attn}_k^{(T_s^*)} = \mathcal{O}(\epsilon\delta)$.
\end{itemize}
\end{lemma}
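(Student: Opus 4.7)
The plan is to mirror the proof structure of \Cref{lemma:end_stageII}, substituting the Phase II gradient rates from \Cref{lemma:phaseIII_alpha} for the sharp regime in place of those from \Cref{lemma:phaseII_alpha}, and using the end-of-Phase-I values already established by \Cref{lemma:end_stageII} as the initial conditions. Because $T_s^1 = T_f^*$, invoking \Cref{lemma:end_stageII} immediately gives $q_k^{(T_s^1+1)} = \Theta(\log K / \epsilon)$ together with $|q_{k,m}^{(T_s^1+1)}| = \mathcal{O}(\log(K\epsilon^{-1})/K)$ for every $m \neq k$, which are the base values to feed into the Phase II recursion.

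Next, I would propagate the gradient descent updates across Phase II using \Cref{lemma:phaseIII_alpha}: at each step, $q_k^{(t)}$ grows by $\eta g_k^{(t)} = \Theta(\eta \delta^2 L^2 \Delta^2 \epsilon / K)$, while each $|q_{k,m}^{(t)}|$ changes by at most $\mathcal{O}(\eta \delta^2 L^2 \Delta^2 \epsilon / K^2)$. Summing these updates over $T_s^* - T_s^1 = \Theta(K \log(KL\Delta\epsilon^{-1})/(\eta \epsilon \delta^2 L^2 \Delta^2))$ iterations (which dominates $T_s^1$ in the sharp regime when $L\Delta$ is large) yields $q_k^{(T_s^*)} = \Theta(\log(KL\Delta)/\epsilon)$ and $|q_{k,m}^{(T_s^*)}| = \mathcal{O}(\log(KL\Delta\epsilon^{-1})/K)$, establishing the first bullet of the lemma.

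With these bounds, the third step is to evaluate the softmax
\[
\mathrm{Attn}_k^{(T_s^*)} = \frac{1}{\sum_{m \neq k} \frac{|\mathcal{V}_m|}{|\mathcal{V}_k|} \exp\!\big(q_{k,m}^{(T_s^*)} - q_k^{(T_s^*)}\big) + 1}.
\]
The separation $q_k^{(T_s^*)} - q_{k,m}^{(T_s^*)} = \Theta(\log(KL\Delta\epsilon^{-1}))$ implies $\exp\!\big(q_{k,m}^{(T_s^*)} - q_k^{(T_s^*)}\big) = \mathcal{O}(\epsilon/(KL\Delta))$. Combining this with the concentration bound $|\mathcal{V}_k| \geq u_k N/K$ in \eqref{|V_k|} (so that $\sum_{m \neq k}|\mathcal{V}_m|/|\mathcal{V}_k| = N/|\mathcal{V}_k| - 1 = \mathcal{O}(1/u_k - 1/K)$) and the sharp-regime hypothesis $L\Delta = \Omega(1/\delta)$, the denominator of the softmax becomes $1 + \mathcal{O}(\epsilon\delta)$, giving $\mathrm{Attn}_k^{(T_s^*)} \geq \Omega(1/(1+\epsilon\delta))$ and $1 - \mathrm{Attn}_k^{(T_s^*)} = \mathcal{O}(\epsilon\delta)$, which completes the other two bullets.

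The principal obstacle is controlling $q_{k,m}^{(t)}$ across the entire Phase II window: because $g_{k,m}^{(t)}$ is not sign-definite and oscillates depending on the alignment of $f(v_k)$ and $f(v_{k'})$ at each step, an absolute-value bound provides only a crude envelope. A careful inductive argument is needed to ensure that the separation $q_k^{(t)} - q_{k,m}^{(t)} = \Omega(\log(K\epsilon^{-1}))$ is preserved at every intermediate step, since this separation is exactly what justifies the regime assumptions $\mathrm{Attn}_k^{(t)} = \Theta(1)$ and $\mathrm{Attn}_m^{(t)} = \Theta(\delta/K)$ that underpin the gradient estimates in \Cref{lemma:phaseIII_alpha}; so the argument must be self-consistent. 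A secondary subtlety is that the sharp regime has inherently large gradients proportional to $L^2$, so the learning rate $\eta$ must be small enough (which is implicit in the definition of $T_s^*$) that the per-step update $\eta g_k^{(t)}$ never overshoots its target level, otherwise $\mathrm{Attn}_k^{(t)}$ could swing past $1-\mathcal{O}(\epsilon\delta)$ and induce large oscillations in the cross terms that would invalidate the induction.
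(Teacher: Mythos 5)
Your proposal is correct and follows essentially the same route as the paper's proof: take the end-of-flat-phase values from \Cref{lemma:end_stageII} at $T_s^1=T_f^*$ as initial conditions, accumulate the Phase II gradient rates of \Cref{lemma:phaseIII_alpha} over $T_s^*-T_f^* = \Theta\big(\tfrac{K\log(KL\Delta\epsilon^{-1})}{\eta\epsilon\delta^2L^2\Delta^2}\big)$ steps to bound $q_k^{(T_s^*)}$ and the cross terms, and then plug the resulting separation into the softmax to get $\mathrm{Attn}_k^{(T_s^*)}=\Omega\big(\tfrac{1}{1+\epsilon\delta}\big)$ and $1-\mathrm{Attn}_k^{(T_s^*)}=\mathcal{O}(\epsilon\delta)$. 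The only cosmetic difference is in the last bookkeeping step: you keep the $L\Delta$ factor in the exponential, $\exp(q_{k,m}-q_k)=\mathcal{O}\big(\tfrac{\epsilon}{KL\Delta}\big)$, and invoke the sharp-regime condition $L\Delta=\Omega(1/\delta)$ to produce the $\delta$, whereas the paper writes $\exp(q_{k,m}-q_k)=\mathcal{O}(\epsilon/K)$ and obtains the $\delta$ from $\tfrac{1}{u_k}-\tfrac{1}{K}=\Theta(\delta)$; both give the same conclusion, and your flagged need for a self-consistent induction on the separation matches (and if anything is more explicit than) the paper's treatment.
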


\begin{proof}
By Lemma~\ref{lemma:phaseIII_alpha}, for $t \in \{T_f^*+1,\ldots,T_s^*\}$, we have $g_k^{(t)} = \Theta\left(\frac{\delta^2 L^2\Delta^2 \epsilon}{K}\right)$. Thus,
\begin{align*}
    q_k^{(T_s^*)}
    &= q_k^{(T_f^*)} + \eta \cdot \Theta\left(\frac{L^2\Delta^2 \delta^2 \epsilon}{K}\right) \cdot (T_s^* - T_f^*) \\
    &= \Theta(\log(KL\Delta\epsilon^{-1})),
\end{align*}
using the total number of updates and the scaling of $T_s^*$. (Here, $T_s^* - T_f^* = \Theta\left(\frac{K \log(KL\Delta\epsilon^{-1})}{L^2\Delta^2 \delta^2 \epsilon}\right)$.)

Similarly, for the cross-terms, by Lemma~\ref{lemma:phaseIII_alpha}, $|g_{k,m}^{(t)}| = \mathcal{O}\left(\frac{\delta^2 L^2\Delta^2 \epsilon}{K^2}\right)$ for $m\neq k$, and hence
\begin{align*}
    q_{k,m}^{(T_s^*)}
    &\leq |q_{k,m}^{(T_f^*)}| + \eta \cdot \mathcal{O}\left(\frac{\delta^2 L^2\Delta^2\epsilon}{K^2}\right) \cdot (T_s^* - T_f^*) \\
    &= \Theta\left(\frac{\log(KL\Delta\epsilon^{-1})}{K}\right).
\end{align*}

Therefore,
\[
q_{k,m}^{(T_s^*)} - q_k^{(T_s^*)} = -\Theta(\log(KL\Delta\epsilon^{-1})),
\]
and so
\[
\exp(q_{k,m}^{(T_s^*)} - q_k^{(T_s^*)}) = \mathcal{O}\left(\frac{\epsilon}{K}\right),
\]
where the scaling in the sharp regime produces the $\epsilon$ factor.

For the attention, using the property from Lemma~\ref{lemma:phaseI_attk},
\[
\mathrm{Attn}_k^{(T_s^*)} = \frac{1}{\sum_{m\neq k} \frac{|\mathcal{V}_m|}{|\mathcal{V}_k|} \exp(q_{k,m}^{(T_s^*)} - q_k^{(T_s^*)}) + 1}.
\]
By the previous bounds, and using $|\mathcal{V}_k| \geq u_k N/K$ and $\frac{1}{u_k} - \frac{1}{K} = \Theta(\delta)$, we obtain:
\[
\mathrm{Attn}_k^{(T_s^*)} \geq \frac{1}{\mathcal{O}(\epsilon) \cdot \mathcal{O}(\frac{1}{u_k} - \frac{1}{K}) + 1} = \Omega\left(\frac{1}{1+\epsilon\delta}\right).
\]

Finally,
\[
1-\mathrm{Attn}_k^{(T_s^*)} \leq \frac{\mathcal{O}(\epsilon)\cdot \mathcal{O}(\frac{1}{u_k} - \frac{1}{K})}{\mathcal{O}(\epsilon)\cdot \mathcal{O}(\frac{1}{u_k} - \frac{1}{K}) + 1} = \mathcal{O}(\epsilon\delta),
\]
which follows because $\frac{1}{u_k} - \frac{1}{K} = \Theta(\delta)$.

This completes the proof of \cref{lemma:end_stageIII} and \cref{prop:stageIII}.
\end{proof}

\subsection{Proof of \texorpdfstring{\Cref{thm:convergence}}{Theorem 2}}\label{proof_thm:convergence}

As in the proof of \Cref{thm:convergence_case1}, the prediction error $\mathcal{L}(P;Q)$ (from \cref{prediction_loss}) can be written as
\begin{align*}
    \mathcal{L}(P;Q) = \frac{1}{2} \sum_{k=1}^K \mathbb{E}\left[ \mathds{1}\{x_{\mathrm{query}} = v_k\} (1 - \mathrm{Attn}_k^{(t)})^2 \Theta(L^2 \Delta^2) \right].
\end{align*}

Suppose $x_{\mathrm{query}} = v_k$ at time $t = T_s^*$. By \cref{lemma:end_stageIII}, we have $1 - \mathrm{Attn}_k^{(T_s^*)} = O(\epsilon\delta)$. Therefore,
\begin{align*}
    \mathcal{L}^{(T_s^*)}(P;Q)
    &= \frac{1}{2} \sum_{k=1}^K \mathbb{E}\left[ \mathds{1}\{x_{\mathrm{query}} = v_k\} (1 - \mathrm{Attn}_k^{(T_s^*)})^2 \Theta(L^2 \Delta^2) \right] \\
    &= \mathbb{E}\left[ (1 - \mathrm{Attn}_k^{(T_s^*)})^2 \Theta(L^2 \Delta^2) \right] \\
    &= \mathcal{O}(\epsilon^2 \delta^2).
\end{align*}

Under the scaling regime for sharp $L$, either $\delta = o(1)$ or $\epsilon = o(1)$ (since the in-context learning regime assumes both go to zero), and hence $\mathcal{L}^{(T_s^*)}(P;Q) = \mathcal{O}(\epsilon^2)$ as required.

This completes the proof of \cref{thm:convergence}.

\section{Proof of \texorpdfstring{\Cref{prop_convergence}}{Proposition 1}}\label{proof_prop_convergence}
The result that $1-\mathrm{Attn}_k^{(t)} = \mathcal{O}(\epsilon)$ holds under both the flat $L$ regime and the sharp $L$ regime, as established in Lemma~\ref{lemma:end_stageII} and Lemma~\ref{lemma:end_stageIII}.

Now, for any function $f \in \mathcal{F}$ (possibly unseen in training), and for $x_{\mathrm{query}} = v_k$, the prediction satisfies
\begin{align*}
    (\hat{y}_{\mathrm{query}} - f(x_{\mathrm{query}}))^2 
    &= \left( \sum_{n \in [K]} \mathrm{Attn}_n^{(t)} f(v_n) - f(v_k) \right)^2 \\
    &= \left( \sum_{n \neq k} \mathrm{Attn}_n^{(t)} (f(v_n) - f(v_k)) \right)^2.
\end{align*}
By the triangle inequality,
\begin{align*}
    \left| \sum_{n \neq k} \mathrm{Attn}_n^{(t)} (f(v_n) - f(v_k)) \right| 
    &\leq \sum_{n \neq k} \mathrm{Attn}_n^{(t)} |f(v_n) - f(v_k)| \\
    &\leq \Theta(L)\cdot \sum_{n \neq k} \mathrm{Attn}_n^{(t)},
\end{align*}
where the last inequality is based on the non-degenerate $L$-Lipschitz in \cref{task_distribution}.

Therefore,
\[
    (\hat{y}_{\mathrm{query}} - f(x_{\mathrm{query}}))^2 = \mathcal{O}\left( \left( \sum_{n \neq k} \mathrm{Attn}_n^{(t)} \right)^2 \right) = \mathcal{O}\left( (1 - \mathrm{Attn}_k^{(t)})^2 \right) = \mathcal{O}(\epsilon^2).
\]

This completes the proof of \cref{prop_convergence}.

\end{document}